\documentclass[10pt,twocolumn,letterpaper]{article}

\usepackage{cvpr}              %

\usepackage[accsupp]{axessibility}  %

\definecolor{cvprblue}{rgb}{0.21,0.49,0.74}
\usepackage[pagebackref,breaklinks,colorlinks,allcolors=cvprblue]{hyperref}

\usepackage{soul}
\usepackage{lscape}
\usepackage{amsthm}
\usepackage{dsfont}
\usepackage{amsmath}
\usepackage{amssymb}
\usepackage{amsfonts}
\usepackage{booktabs}
\usepackage{multirow}
\usepackage{subcaption}
\usepackage[table]{xcolor}

\newtheorem{theorem}{Theorem}
\newtheorem{assumption}{Assumption}

\newcommand{\approach}{{\texttt{Catalyst}}}

\def\paperID{38920} %
\def\confName{CVPR}
\def\confYear{2026}

\title{Catalyst: Out-of-Distribution Detection via Elastic Scaling}

\author{Abid Hassan, Tuan Ngo, Saad Shafiq, Nenad Medvidovic \\
University Southern California, Los Angeles\\
{\tt\small\{mdskabid, tkngo, sshafiq, neno\}@usc.edu} \\
}

\begin{document}
\maketitle

\begin{abstract}
\label{sec: abstract}
    
    Out-of-distribution (OOD) detection is critical for the safe deployment of deep neural networks. State-of-the-art post-hoc methods typically derive OOD scores from the output logits or penultimate feature vector obtained via global average pooling (GAP). We contend that this exclusive reliance on the logit or feature vector discards a rich, complementary signal: the raw channel-wise statistics of the pre-pooling feature map lost in GAP. In this paper, we introduce $\approach$, a post-hoc framework that exploits these under-explored signals. $\approach$ computes an input-dependent scaling factor ($\gamma$) on-the-fly from these raw statistics (e.g., mean, standard deviation, and maximum activation). This $\gamma$ is then fused with the existing baseline score, multiplicatively modulating it -- an ``elastic scaling'' -- to push the ID and OOD distributions further apart. We demonstrate $\approach$ is a generalizable framework: it seamlessly integrates with logit-based methods (e.g., Energy, ReAct, SCALE) and also provides a significant boost to distance-based detectors like KNN. As a result, $\approach$ achieves substantial and consistent performance gains, reducing the average False Positive Rate by 32.87\% on CIFAR-10 (ResNet-18), 27.94\% on CIFAR-100 (ResNet-18), and 22.25\% on ImageNet (ResNet-50). Our results highlight the untapped potential of pre-pooling statistics and demonstrate that $\approach$ is complementary to existing OOD detection approaches. Our code is available here: https://github.com/bingabid/Catalyst

\end{abstract}

\section{Introduction}
\label{sec: introduction}

    A deep neural network deployed in real-world environments will inevitably encounter out-of-distribution (OOD) samples drawn from novel contexts whose class labels are disjoint from the training distribution, referred as in-distribution (ID) data. Unlike ID samples that the model was trained on, these OOD instances should not be confidently classified %
    but be detected and flagged %
    for human review. Robust OOD detection is particularly crucial for safety-critical applications where erroneous predictions can have severe consequences, e.g., in medical diagnosis~\cite{medical_diagnosis_01, medical_diagnosis_02} or autonomous driving~\cite{autonomous} systems.

    Early methods to OOD detection primarily focused on designing scoring functions to distinguish ID from OOD samples. The seminal work~\cite{msp} proposed using the maximum softmax probability (MSP) as a confidence measure, based on observation that OOD samples yield lower softmax scores. However, subsequent studies~\cite{msp_failed_01, msp_failed_02} revealed a critical flaw: neural networks often produce overconfident softmax predictions even for far-OOD inputs, rendering MSP unreliable. To address this, Energy~\cite{energy} introduced the energy-based score, which maps inputs to a scalar value such that ID samples yield lower energy than OOD samples. This score provided a more robust uncertainty measure, inspiring a series of improvements aimed at enhancing ID-OOD separability. Recent advances have focused on post-hoc activation manipulation to amplify this separation. Notable methods include ReAct~\cite{ReAct}, DICE~\cite{DICE}, ASH~\cite{ASH}, SCALE~\cite{SCALE} achieving state-of-the-art performance.

    \begin{figure*}[ht]
        \centering
        \includegraphics[width=0.97\textwidth]{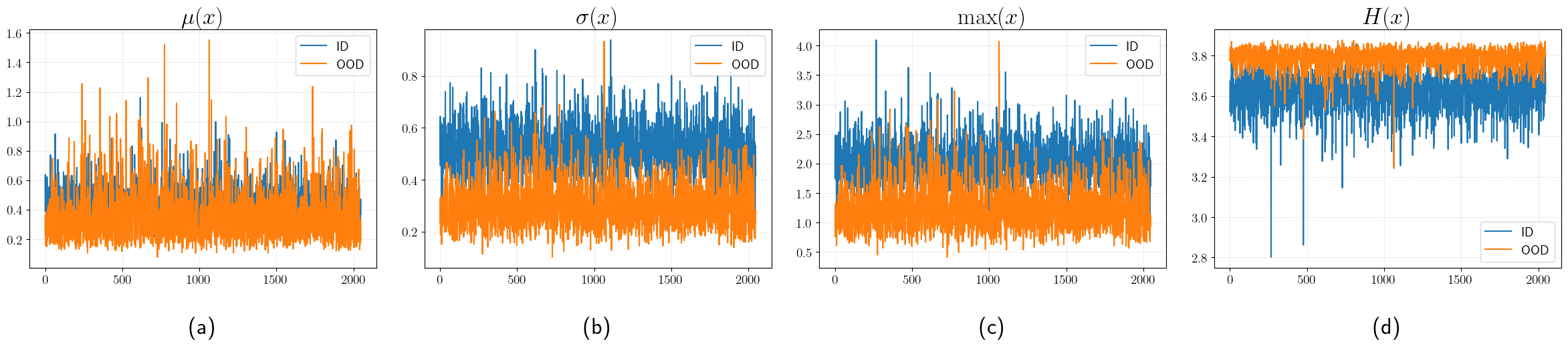}
        \vspace{-3mm}
        \caption{\textit{Information cues from each channel before the penultimate layer of a ResNet-50 trained on ImageNet-1k, evaluated with Texture as the OOD dataset. The x-axis shows channel indices; the y-axis shows cue strength. \textbf{\emph{Left to right}}: (a) $\mu(\mathbf{x})$: mean activation, (b) $\sigma(\mathbf{x})$: standard deviation, (c) $\max(\mathbf{x})$: dominant activation, and (d) $H(\mathbf{x})$: entropy per channel.}}
        \label{fig: feature_plot}
     \vspace{-2mm}
    \end{figure*}

    These methods share a common paradigm: they derive their scores using the penultimate feature vector (generally obtained via GAP) as their foundational input. These techniques process this feature vector to derive energy-based scores~\cite{energy, ReAct, SCALE} or distance-based scores~\cite{knn,ssn}. %
    We contend that exclusive reliance on the feature vector creates an information bottleneck, as it discards complementary signals, namely the raw channel-wise statistics of the pre-pooling feature map, which could otherwise be used in tandem with existing methods for improved OOD detection.

    Figure~\ref{fig: feature_plot} illustrates the distribution of these untapped information cues, extracted from the penultimate layer's pre-pooling activation map in an ImageNet-trained ResNet-50, using Textures as the OOD dataset. In exemplary visualization, we observed that pre-pooled activation map encode important channel-specific characteristics that exhibit discriminative attributes between ID (blue) and OOD (orange) samples. Each point on the x-axis corresponds to a single channel, while the y-axis represents the strength of four statistical cues: (a)~mean, (b)~standard deviation, (c)~maximum activation, and (d)~entropy values.

    The existing methods have under-explored these distinctive statistical information. The approach exclusively relies on a score derived from the output logits~\cite{msp,odin,energy,ReAct,ASH,SCALE}: discards potent raw cues (e.g., standard deviation, maximum) and fails to leverage independent discriminative power of raw mean statistics. To address this critical limitation, we propose $\approach$, a simple yet powerful framework that computes an input-dependent \emph{scaling factor} ($\gamma$) designed to be fused in tandem with an existing scoring function. This scaling factor is computed on-the-fly, leveraging these distribution-sensitive cues embedded in the pre-pooled activation maps. $\approach$ is designed to integrate seamlessly with established approaches while significantly improving their ability to distinguish between ID and OOD data. Our key contributions are:

     \begin{enumerate}
        \item $\approach$, a complementary post-hoc OOD detection  framework that leverages pre-pooling channel-wise statistics to augment existing methods, generalizing across architectures like ResNet, DenseNet, and MobileNet.
    
        \item An extensive evaluation showing $\approach$ complements and substantially improves established competitive baselines. Specifically, on the ImageNet benchmark, $\approach$ reduces average FPR95 by 22.25\% using ResNet-50. On CIFAR benchmarks, it reduces FPR95 by 32.87\% on CIFAR-10 and 27.94\% on CIFAR-100 using ResNet-18.

        \item Statistical analysis (Appendix~\ref{appendix: analysis}) and extensive ablation studies (Section~\ref{sec: ablation study}) validate our design choices.

    \end{enumerate}

\section{Preliminaries}
\label{sec: preliminaries}
    \looseness-1
    \textbf{Setup.} This paper focuses on the post-hoc analysis of multiclass classification in supervised settings. Let $\mathcal{X}$ denote the input space and $\mathcal{Y} = \{1, 2, \cdots, C\}$ the output label space. A neural network $\theta: \mathcal{X} \rightarrow \mathbb{R}^{|\mathcal{Y}|}$ is trained on a dataset $\mathcal{D} = {(\mathbf{x}_i, y_i)}_{i=1}^N$ drawn \emph{i.i.d.} from an unknown joint distribution $\mathcal{P_{XY}}$ over $\mathcal{X} \times \mathcal{Y}$. The network outputs a logit vector, which is used to predict the label of an input sample. $\mathcal{D}_{\text{in}}$ represents the marginal distribution of $\mathcal{P_{XY}}$ over $\mathcal{X}$, corresponding to the ID data.

    \noindent
    \textbf{Scoring Function.} As introduced in Section~\ref{sec: introduction}, the core challenge in OOD detection lies in designing effective scoring functions that reliably distinguish between ID and OOD samples. The evolution of scoring functions began with the MSP~\cite{msp} approach and progressed to more robust energy-based scores~\cite{energy}. While other scoring functions exist (e.g., ODIN~\cite{odin}, Mahalanobis~\cite{maha_distance}, KNN~\cite{knn}), we focus on the energy-based score $S_{\texttt{energy}}(\mathbf{x}; \theta)$ due to its prevalence, superior performance and simplicity~\cite{energy, ReAct, DICE, ASH, SCALE}. Without loss of generality, all subsequent mentions of ``score" refer to $S_{\texttt{energy}}(\mathbf{x}; \theta)$ unless specified otherwise. We adopt the negative free energy formulation from \cite{energy}. Formally, given a logit vector \(f(\mathbf{x}) \in \mathbb{R}^C\) produced by the model \(\theta\), the scoring functions is defined as:
    \begin{small}
        \vspace{-2mm}
            \begin{equation}
                S_{\texttt{energy}}(\mathbf{x}; \theta)  = \log\left(\sum_{j=1}^C e^{f_j(\mathbf{x})}\right)
                \label{eq: scoring_energy}
            \end{equation}
           \vspace{-2mm}
    \end{small}
    
    \noindent
    \textbf{Out-of-distribution Detection.} At inference time, the model $\theta$ operating in real-world will inevitably encounter OOD samples $\mathcal{D}_{\text{out}}$ whose label sets are disjoint from $\mathcal{Y}$. These samples should not be confidently predicted by $\theta$ as one of the known classes, instead necessitating robust OOD detection. Formally, we frame OOD detection as learning a decision boundary $G_{\lambda}(\mathbf{x} ; \theta)$  that classifies a test sample $\mathbf{x} \in \mathcal{X}$ as either ID or OOD:
    \vspace{-2mm}
     \begin{equation}
            \resizebox{0.90\columnwidth}{!}{$
                G_{\lambda}(\mathbf{x} ; \theta)  =
                \begin{cases}
                \mathrm{ID} & \text{if } \mathbf{x} \sim \mathcal{D}_{\text{in}} \\
                \mathrm{OOD} & \text{if } \mathbf{x} \sim \mathcal{D}_{\text{out}}
                \end{cases} \\
                  =
                \begin{cases}
                    \mathrm{ID}  & \text{if } S(\mathbf{x}; \theta) \ge \lambda \\
                    \mathrm{OOD} & \text{if } S(\mathbf{x}; \theta) < \lambda
                \end{cases}
                $}
        \label{eq: decision_boundary}
         \vspace{-2mm}
   \end{equation} 
    where $S(\mathbf{x}; \theta)$ represents a downstream OOD scoring function, and by convention~\cite{energy} $\lambda$ is a threshold calibrated such that 95\% of ID data ($\mathcal{D}_{\text{in}}$) is correctly classified. 
   
    \noindent
    \textbf{Evaluation metrics.} In line with standard evaluation protocol in OOD detection~\cite{energy},  we evaluate the performance of $\approach$ using two key metrics: FPR95 and AUROC:
    \begin{enumerate}
        \item \textit{FPR95} measures the False Positive Rate when 95\% of in-distribution (ID) samples are correctly classified. A lower FPR95 $(\downarrow)$ indicates better OOD detection performance.
    
        \item \textit{AUROC} is a threshold-free metric that computes the area under the receiver operating characteristic curve. Higher AUROC $(\uparrow)$ signifies superior discriminative capability.%
    \end{enumerate}

    \begin{figure*}[ht]
        \centering
        \includegraphics[width=0.90\textwidth]{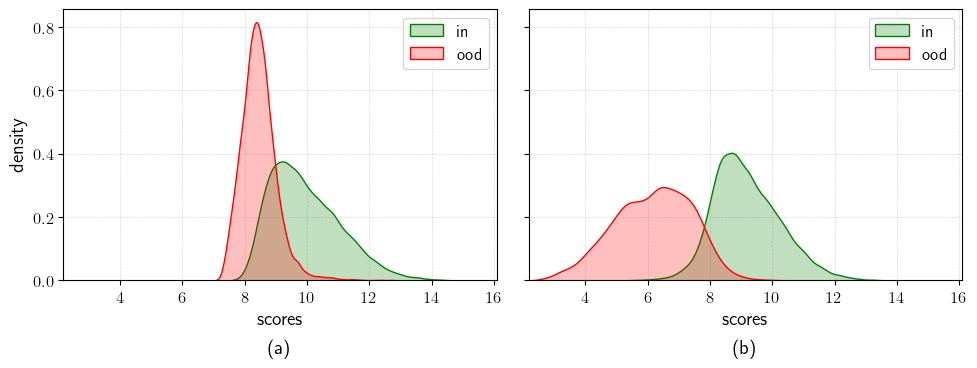}
        \vspace{-3mm}
        \caption{\textit{Illustration of $\approach$'s effectiveness. The model is ResNet-50 trained on ImageNet-1k, evaluated on Texture (OOD). Here, we apply $\gamma$ computed from the channel-maximum statistic (m) multiplicatively to the baseline ReAct. (a) The unscaled score distribution shows more significant overlap than (b) the $\approach$-scaled score distribution.}}
        \label{fig: density_plot}
        \vspace{-2mm}
    \end{figure*}

\section{Methodology}
\label{sec: method}

    The key contribution of this paper is $\approach$, a novel elastic scaling mechanism for enhanced OOD detection. We propose an input-dependent scaling factor ($\gamma$), derived from the overlooked channel-wise statistics of the penultimate layer's pre-pooling activation map. When this factor is fused with a baseline score, it significantly enhances the separability between ID and OOD samples. Figure~\ref{fig: density_plot} illustrates this effect with a trend representative of what we observe across the diverse models and OOD datasets in our evaluation. For instance, in the specific case depicted using a ResNet-50 trained on ImageNet, we see that while baseline score distributions for ID and Texture (OOD) data exhibit significant distributional overlap (Figure~\ref{fig: density_plot}a), multiplicatively fusing \(\gamma\) markedly reduces this overlap, enabling a much clearer separation (Figure~\ref{fig: density_plot}b). 

    In this section, we describe the method to compute input-dependent $\gamma$ and how it is fused with score. Finally, we discuss the compatibility of $\approach$ with other scoring functions and its integration with existing baselines.

\subsection{Computing the Scaling Factor $\gamma$}
\label{methodology: scaling factor}

    \looseness-1
    To compute  scaling factor \(\gamma\), we consider
    a trained DNN \(\theta: \mathbb{R}^d \rightarrow \mathbb{R}^C\) that maps an input \(\mathbf{x} \in \mathbb{R}^d\) to a logit vector \(f(\mathbf{x}) \in \mathbb{R}^C\), where \(C = |\mathcal{Y}|\) denotes the number of classes. The network’s penultimate layer produces a feature vector \(h(\mathbf{x}) \in \mathbb{R}^n\) by applying GAP operation to the activation map \(g(\mathbf{x}) \in \mathbb{R}^{n \times k \times k}\). Here, \(n\) is the number of channels, and each channel has spatial resolution \(k \times k\). A weight matrix \(\mathbf{W} \in \mathbb{R}^{n \times C}\) projects \(h(\mathbf{x})\) to the final logit vector.

    In this work, we deliberately focus on this activation map $g(\mathbf{x})$ as our source of statistics. As we will empirically demonstrate in our ablation study (Section~\ref{subsec: penultimate layer}, Appendix~\ref{appendix: penultimate layer}), this specific layer provides the most potent and reliable discriminative information cues for $\gamma$. The \emph{earlier layers} provide less informative signal with high ID/OOD overlap.

    $\approach$ is built upon the core insight, illustrated in Figure~\ref{fig: feature_plot}, that the existing baselines' exclusive reliance on the feature vector fails to leverage valuable, channel-specific statistical information. Building upon this, we identify and extract three key statistical cues from $g(\mathbf{x})$:

    \begin{itemize}
        \item \textit{Channel Mean} [\(\mu(\mathbf{x}) \in \mathbb{R}^n\)] is equivalent to the penultimate feature vector \(h(\mathbf{x})\) obtained via GAP.\footnote{We use \(\mu(\mathbf{x})\) and \(h(\mathbf{x})\) interchangeably.}
        \item \textit{Channel Standard Deviation} [\(\sigma(\mathbf{x}) \in \mathbb{R}^n\)] measures the spatial variability of activations within each channel.
        \item \textit{Channel Maximum} [\(m(\mathbf{x}) \in \mathbb{R}^n\)] captures the peak activation response in each channel.
    \end{itemize}
        
    The information cues \(\mu(\mathbf{x}), \sigma(\mathbf{x})\), and \(m(\mathbf{x}) \) for OOD samples may exhibit extreme unit activations. Prior work~\cite{ReAct} presented a similar phenomenon of abnormally high unit activations that result in overconfident predictions for OOD samples, subsequently distorting the energy score. Extreme values in \(\mu(\mathbf{x}), \sigma(\mathbf{x})\), and \(m(\mathbf{x}) \) can similarly distort scaling factor \(\gamma\) for OOD samples. To mitigate this effect, we introduce a clipping mechanism that bounds each statistic by a threshold \(c > 0\). Specifically, for each input, we compute rectified features via element-wise clipping:
    \begin{small}
        \vspace{-1mm}
        \begin{equation}
             \bar{f}(\mathbf{x}) = \min(f(\mathbf{x}), c)
             \label{eq: gamma_c}
           \vspace{-1mm}
        \end{equation}
    \end{small}
    where \(f(\mathbf{x}) \in \{\mu(\mathbf{x}), \sigma(\mathbf{x}), m(\mathbf{x})\}\). This operation ensures that activation values are capped at \(c\), preventing them from disproportionately influencing \(\gamma\). The rectified vectors are the basis for \(\gamma\)'s calculation: 
    \begin{small}
        \vspace{-3mm}
        \begin{equation}
            \gamma(\mathbf{x};f) = \sum_{i=1}^n \bar{f}_i(\mathbf{x})
            \label{eq: gamma_design}
           \vspace{-2mm}
        \end{equation} 
    \end{small}             
    where %
    the subscript \(i\) denotes the \(i\)-th channel. The selection of this clipping threshold \(c\) is discussed in Section~\ref{subsec: hyper-parameter selection} and detailed in Appendix~\ref{appendix: reproducibility}.

    While we primarily focus on $\mu(\mathbf{x}), \sigma(\mathbf{x})$ and $m(\mathbf{x})$, our framework readily accommodates other channel-wise statistics derived from $g(\mathbf{x})$, such as entropy and median. %
    We provide a detailed comparative analysis of these cues in our ablation study (Section~\ref{subsec: alternate statistics}; Appendix~\ref{appendix: alternate statistics}), which justifies our design and validates our focus on $\mu(\mathbf{x}), \sigma(\mathbf{x})$, and $m(\mathbf{x})$ as robust and generalizable set of statistics for computing $\gamma$.

\subsection{Elastic Scaling of the Score}

    To create a more discriminative score, we dynamically re-calibrate the baseline score $ S(\mathbf{x}; \theta) $ using scaling factor $\gamma$. We explore the two fusion strategies: \textit{multiplicative} and \textit{additive}. We term the multiplicative strategy  \textit{``Elastic Scaling"} because it truly scales (i.e., multiplies) the baseline score, elastically stretching or shrinking it based on the $\gamma$. The additive approach, in contrast, is a simple offset or shift, not a scaling. These are defined in Equation~\ref{eq: adaptive_scaling}: %
    \begin{small}
        \vspace{-1.5mm}
         \begin{subequations}
            \begin{align}
            S^{*}_{\texttt{mul}}(\mathbf{x}; \theta, \gamma) &= \gamma(\mathbf{x}; f) \times S(\mathbf{x}; \theta) \label{eq: adaptive_scaling_mul} \\
            S^{+}_{\texttt{add}}(\mathbf{x}; \theta, \gamma) &= \gamma(\mathbf{x}; f) + S(\mathbf{x}; \theta) \label{eq: adaptive_scaling_add}
            \end{align}
            \label{eq: adaptive_scaling}
        \end{subequations} 
    \end{small}    
    where $\gamma(\mathbf{x}; f)$ is the scaling factor computed from an information cue $f(\mathbf{x}) \in \{\mu(\mathbf{x}), \sigma(\mathbf{x}), m(\mathbf{x})\}$.

    While our analysis in Section~\ref{subsec: fusion strategy} %
    shows that both strategies can achieve similar peak performance, we adopt multiplicative fusion (Eq.~\ref{eq: adaptive_scaling_mul}) as our primary framework. This choice is not arbitrary, as we demonstrate that the additive approach, while effective, is operationally fragile due to its hyperparameter sensitivity. The multiplicative fusion provides not only competitive performance but also the practical robustness and stability required of a general-purpose usage. Therefore, in the remainder of this paper, we will refer multiplicative fusion as \textit{Elastic Scaling}. This final re-calibrated score is subsequently used in the decision rule defined in Equation~\ref{eq: decision_boundary} to classify as ID or OOD.

\subsection{Generalizability of $\approach$} 

    While our analysis primarily focuses on energy-based scoring (given its primary role for competitive methods like ReAct and SCALE), $\approach$ is a general framework. It can be seamlessly integrated with other scoring functions such as MSP~\cite{msp}, ODIN~\cite{odin}, and KNN~\cite{knn,ssn} -- by replacing the baseline score $S(\mathbf{x}; \theta)$ in equation (Eq.~\ref{eq: adaptive_scaling_mul}) with the alternate score.

    Additionally, this elastic scaling retains all advantages of post-hoc methods while transforming scores into a more discriminative metric. $\approach$ is designed to complement existing techniques, including Energy, ReAct, DICE, ASH, SCALE, and KNN. In Appendix~\ref{appendix: analysis}, we provide a formal characterization of why $\approach$ enhances ID-OOD separability, offering deeper insight.

\section{Experiments}
\label{sec: experiments}

    In this section, we evaluate the efficacy of $\approach$ across a diverse set of OOD datasets. We begin with an in-depth empirical analysis on standard CIFAR benchmarks. We then extend our evaluation to a large-scale OOD detection setting using ImageNet, demonstrating the versatility and robustness of $\approach$. Our evaluation does not assume the availability of an OOD validation set and incorporates a wide range of OOD datasets to provide a realistic assessment of $\approach$.
    
    We use the Energy score~\citep{energy} as our default baseline. For brevity, when $\approach$ is applied to Energy, we simply denote it as $\approach$. When applying $\approach$ to other baselines or scoring functions, we state it explicitly (e.g., $\approach$ + \texttt{ReAct}, $\approach$ + \texttt{KNN}).

    We ensure a fair and direct comparison against prior work. As the architectures used in our evaluation (e.g., ResNet-18 for the CIFAR benchmarks; ResNet-34 and DenseNet-121 for ImageNet) were not included in the primary baselines like ReAct, DICE, ASH, and SCALE, we undertook a rigorous re-evaluation of these methods. We carefully followed the official hyperparameter selection protocols and open-sourced implementations from their respective papers to ensure the integrity of our comparisons.

\subsection{CIFAR Evaluation}

    \begin{table}[ht]
        \centering
        \resizebox{\columnwidth}{!}{%
        \begin{tabular}{l l  cc  cc }
            \toprule
            \multirow{2}{*}{\textbf{Model}} & \multirow{2}{*}{\textbf{Method}} & \multicolumn{2}{c}{\textbf{CIFAR-10}} & \multicolumn{2}{c}{\textbf{CIFAR-100}}  \\
            \cmidrule(lr){3-4} \cmidrule(lr){5-6}
            & & \textbf{FPR95} $\downarrow$ & \textbf{AUROC} $\uparrow$ & \textbf{FPR95} $\downarrow$ & \textbf{AUROC} $\uparrow$ \\
            \midrule
            \multirow{14}{*}{\raisebox{5ex}{\rotatebox{90}{ResNet-18}}} & MSP          & 58.33 & 91.28  & 79.92 & 76.66   \\
                                                                        & ODIN         & 28.98 & 95.16 & 66.06 & 84.78  \\
                                                                        & Energy       & 35.50 & 94.17 & 70.21 & 83.54   \\
                                                                        & ReAct        & 29.76 & 95.19 & 57.76 & 87.97   \\
                                                                        & DICE         & 30.98 & 94.69 & 55.66 & 85.97   \\
                                                                        & ReAct+DICE   & 19.65 & 96.50 & 48.23 & 89.06   \\
                                                                        & ASH          & 20.96 & 95.95 & 49.52 & 86.86   \\
                                                                        & SCALE        & 21.05 & 96.19 & 48.10 & 88.70    \\
            \cmidrule(lr){2-6}
            \rowcolor[gray]{0.9}         & \textbf{$\approach(\mu)$}                   & 24.85 & 95.74 & 52.93 & 87.46 \\
            \rowcolor[gray]{0.9}         & \textbf{$\approach(\sigma)$}                & 17.72 & 96.89 & 46.29 & 89.18 \\
            \rowcolor[gray]{0.9}         & \textbf{$\approach(m)$}                     & 16.59 & 97.10 & 45.96 & 89.37 \\
            \rowcolor[gray]{0.9}         & \textbf{$\approach(\mu)+\texttt{ReAct}$}    & 19.88 & 96.41 & 41.93 & 89.99 \\
            \rowcolor[gray]{0.9}         & \textbf{$\approach(\sigma)+\texttt{ReAct}$} & 14.25 & 97.42 & 35.15 & 91.48 \\
            \rowcolor[gray]{0.9}         & \textbf{$\approach(m)+\texttt{ReAct}$}      & \textbf{13.19} & \textbf{97.59} & \textbf{34.66} & \textbf{91.70} \\
            \midrule
            \multirow{14}{*}{\raisebox{7ex}{\rotatebox{90}{DenseNet-101}}}  & MSP          & 45.43 & 92.43 & 77.47 & 74.80 \\
                                                                            & ODIN         & 19.37 & 96.06 & 57.67 & 84.00 \\
                                                                            & Energy       & 22.41 & 95.43 & 58.92 & 83.87 \\
                                                                            & ReAct        & 17.13 & 96.61 & 52.89 & 87.18 \\
                                                                            & DICE         & 14.52 & 96.74 & 40.98 & 87.92 \\
                                                                            & ReAct+DICE   & 10.26 & 97.94 & 34.64 & 91.17 \\
                                                                            & ASH          & 11.71 & 97.44 & 35.84 & 90.85 \\
                                                                            & SCALE        & 19.88 & 96.01 & 38.31 & 90.46 \\
            \cmidrule(lr){2-6}
            \rowcolor[gray]{0.9}         & \textbf{$\approach(\mu)$}                       & 13.73 & 97.14 & 41.42 & 89.45 \\
            \rowcolor[gray]{0.9}         & \textbf{$\approach(\sigma)$}                    & 10.93 & 97.71 & 37.98 & 90.48 \\
            \rowcolor[gray]{0.9}         & \textbf{$\approach(m)$}                         & 10.71 & 97.77 & 36.79 & 90.83 \\
            \rowcolor[gray]{0.9}         & \textbf{$\approach(\mu)+\texttt{ReAct}$}        & 10.24 & 97.85 & 29.36 & 92.56 \\
            \rowcolor[gray]{0.9}         & \textbf{$\approach(\sigma)+\texttt{ReAct}$}     &  8.49 & 98.21 & 29.05 & 92.78 \\
            \rowcolor[gray]{0.9}         & \textbf{$\approach(m)+\texttt{ReAct}$}          &  \textbf{8.42} & \textbf{98.26} & \textbf{28.06} & \textbf{93.06} \\
            \bottomrule
            \end{tabular}
        }
        \caption{\textit{OOD detection results on {CIFAR} benchmarks. All values are percentages, averaged across six OOD test datasets. Full results for each dataset are available in Appendix~\ref{appendix: detailed_cifar_benchmark}.  $\boldsymbol{\downarrow}$ / $\boldsymbol{\uparrow}$ indicates lower / higher values are better.}}
        \label{table: avg_cifar_benchmark}
               \vspace{-4mm}
    \end{table}

    \begin{table*}[ht]
        \centering
        \resizebox{0.85\textwidth}{!}{
        \begin{tabular}{l l cc cc  cc  cc}
        \toprule
        \multirow{2}{*}{\textbf{Method}} & \multicolumn{2}{c}{\textbf{ResNet-34}} & \multicolumn{2}{c}{\textbf{ResNet-50}} & \multicolumn{2}{c}{\textbf{MobileNet-v2}} & \multicolumn{2}{c}{\textbf{DenseNet-121}}\\
        \cmidrule(lr){2-3} \cmidrule(lr){4-5} \cmidrule(lr){6-7} \cmidrule(lr){8-9}
        & \textbf{FPR95} $\downarrow$ & \textbf{AUROC} $\uparrow$ & \textbf{FPR95} $\downarrow$ & \textbf{AUROC} $\uparrow$ & \textbf{FPR95} $\downarrow$ & \textbf{AUROC} $\uparrow$ & \textbf{FPR95} $\downarrow$ & \textbf{AUROC} $\uparrow$ \\
        \midrule
        MSP           & 68.84 & 81.19 & 64.76 & 82.82 & 70.49 & 80.67 & 63.46 & 82.65 \\
        ODIN          & 55.90 & 87.16 & 56.48 & 85.41 & 54.20 & 85.81 & 49.45 & 87.48 \\
        Energy        & 57.20 & 86.84 & 57.48 & 87.05 & 58.87 & 86.59 & 50.68 & 87.60 \\
        ReAct         & 32.24 & 93.08 & 30.77 & 93.27 & 48.91 & 88.75 & 35.99 & 92.27 \\
        DICE          & 39.12 & 89.96 & 35.65 & 90.94 & 41.07 & 89.94 & 38.67 & 89.65 \\
        ReAct+DICE    & 26.25 & 93.99 & 25.41 & 94.10 & 31.06 & 92.84 & 29.33 & 93.42 \\
        ASH           & 29.32 & 93.46 & 22.83 & 95.12 & 38.68 & 90.95 & 30.25 & 93.09 \\
        SCALE         & 27.02 & 94.14 & 21.89 & 95.32 & 34.28 & 92.52 & 28.06 & 93.45 \\
        \cmidrule(lr){1-9}
        \rowcolor[gray]{0.9} $\approach(\mu)$                   & 31.92 & 92.41 & 28.42 & 93.23 & 36.71 & 91.69 & 29.54 & 92.71 \\
        \rowcolor[gray]{0.9} $\approach(\sigma)$                & 31.91 & 92.36 & 29.75 & 92.92 & 33.63 & 92.27 & 29.12 & 92.80 \\
        \rowcolor[gray]{0.9} $\approach(m)$                     & 31.83 & 92.34 & 29.89 & 92.82 & 33.15 & 92.33 & 29.45 & 92.68 \\
        \rowcolor[gray]{0.9} $\approach(\mu)+\texttt{ReAct}$    & \textbf{19.84} & \textbf{95.56} & \textbf{17.02} & \textbf{96.18} & 30.81 & 93.31 & 25.43 & 94.56 \\
        \rowcolor[gray]{0.9} $\approach(\sigma)+\texttt{ReAct}$ & 19.91 & 95.50 & 17.46 & 96.02 & 31.56 & 92.71 & \textbf{24.26} & \textbf{94.61} \\
        \rowcolor[gray]{0.9} $\approach(m)+\texttt{ReAct}$      & 20.16 & 95.44 & 17.64 & 95.93 & \textbf{29.33} & \textbf{93.43} & 24.52 & 94.53 \\
        \bottomrule
        \end{tabular}}
         \caption{\textit{OOD detection results on ImageNet benchmarks. All values are percentages and are averaged over four common OOD benchmark datasets. Complete results for each individual dataset are available in Appendix~\ref{appendix: detailed_imagenet_benchmark}. $\boldsymbol{\downarrow}$ / $\boldsymbol{\uparrow}$ indicates lower / higher values are better.}}
        \label{table: avg_imagenet_benchmark}
     \vspace{-3mm}
   \end{table*}
     
    \noindent
    \textbf{Experimental Setup.} We evaluate on the CIFAR datasets~\citep{cifar_learning}. Following standard protocols~\citep{energy,ReAct,ASH}, we use six common OOD datasets for evaluation: Textures~\citep{texture}, SVHN~\citep{svhn}, Places365~\citep{places365}, LSUN-Crop~\citep{lsun}, LSUN-Resize~\citep{lsun}, and iSUN~\citep{isun}. To ensure fair comparison with prior work, we use a DenseNet-101 backbone~\citep{DenseNet}. To demonstrate architectural generality, we extend our evaluation to ResNet-18~\citep{ResNet}. Training details are detailed in Appendix~\ref{appendix: reproducibility}.
    
    \noindent
    \textbf{Results.} Table~\ref{table: avg_cifar_benchmark} summarizes our results on the CIFAR benchmarks. The table clearly shows the two key benefits%
    (1) $\approach$ (e.g., $\approach(m)$) significantly outperforms the standard energy score baseline, proving the inherent value of scaling factor. (2) When composed with \texttt{ReAct}, $\approach$ establishes a new benchmark. For instance, on CIFAR-10, $\approach(m) + \texttt{ReAct}$ reduces FPR95 by 32.87\%, 28.10\% with ResNet-18 and DenseNet-101 respectively. On CIFAR-100, $\approach(m) + \texttt{ReAct}$ reduces FPR95 by 27.94\% and 18.99\% with ResNet-18 and DenseNet-101 respectively. The detailed per-dataset results are provided in Appendix~\ref{appendix: detailed_cifar_benchmark}.

    \noindent
    \textbf{Near-OOD Evaluation.} We evaluate $\approach$ on the challenging near-OOD task of distinguishing CIFAR-10 from CIFAR-100~\citep{ssn}. While it yields marginal gains over \texttt{SCALE}, the improvements are less pronounced than in far-OOD settings, likely due to the high similarity of learned penultimate representations. Designing a more effective $\gamma$ for near-OOD settings remains an important direction for future work. Detailed results are provided in Appendix~\ref{appendix: near_ood_results}.

\subsection{ImageNet Evaluation}

    \looseness-1 
    \textbf{Experimental Setup.} To assess scalability in a more realistic setting, we evaluate on the ImageNet-1k benchmark. We use four  OOD datasets: iNaturalist~\citep{iNaturalist}, SUN~\citep{sun}, Places365~\citep{places365}, and Textures~\citep{texture}. These datasets are carefully curated to avoid class overlap with ImageNet, while spanning distinct semantic domains to rigorously assess generalization performance~\citep{energy,ReAct}. 
    
    Our evaluation showcases broad architectural robustness by using pre-trained ResNet-34, ResNet-50, DenseNet-121, and MobileNet-v2. Since primary baselines (e.g., ReAct, SCALE) did not originally report results on all of these architectures (such as ResNet-34 and DenseNet-121), we undertook a rigorous re-evaluation of all methods.  

    \noindent %
    \textbf{Results}. Table~\ref{table: avg_imagenet_benchmark} shows that $\approach$ yields~\mbox{consistent} improvements at ImageNet scale. Compared to~\mbox{energy} score,
    $\approach(m)$ reduces  FPR95 by 44.35\%, 47.99\%, 43.69\%, and 21.23\% using ResNet-34, \mbox{ResNet-50},~Mobile\-Net-v2, and DenseNet-121 architectures respectively. The most significant gains are achieved when composing $\approach$ with existing primary methods like \texttt{ReAct}. Specifically, $\approach(m) + \texttt{ReAct}$ improves FPR95 by 25.39\%, 19.41\%, 5.57\% and 12.62\% compared to previous best results using ResNet-34, ResNet-50, MobileNet-v2, and DenseNet-121 respectively. These results validate that the principles of $\approach$ are effective in complex, large-scale datasets and across diverse architectural families. The performance boost confirms that the scaling factor $\gamma$ provides significant discriminative information that is complementary to %
    existing competitive techniques. The detailed per-dataset results are in Appendix~\ref{appendix: detailed_imagenet_benchmark}.

    \noindent
    \textbf{Discussion.} While we acknowledge standardized benchmarks like OpenOOD~\cite{openood}, we adopted a more challenging and principled evaluation for two key reasons: \textit{(a)}~OpenOOD's  dataset selection excludes several difficult, widely-used testbeds like SUN~\cite{sun}, Places~\cite{places365}, and four complex categories (bubbly, honeycombed, cobwebbed, and spiralled) from Texture~\cite{texture, ViM}. \textit{(b)} OpenOOD's setup uses held-out OOD validation set for hyperparameter tuning. Our evaluation is conducted without assuming the availability of an OOD validation set and incorporates these difficult datasets to provide a more rigorous and realistic assessment of $\approach$.

    We also explored combining statistical cues (e.g., mean + std) to compute $\gamma$. Our empirical analysis showed these multivariate combinations did not yield significant performance gains over the best-performing single statistic. This finding reinforces our framework's simplicity and efficiency, as a single, well-chosen statistic is sufficient to provide a robust performance boost.

\subsection{Synergy with Existing Baselines}
    We evaluated the performance of existing baselines when applied in tandem with $\approach$ to demonstrate its complementary effect. The results show that $\approach$ provides consistent relative performance boost across baselines on CIFAR and ImageNet. For example, on  ImageNet $\approach(\mu) + \texttt{DICE}$ improves relative FPR95 by 22.24\% (ResNet-50) and 15.41\% (MobileNet-v2) respectively. A detailed breakdown is provided in Appendix~\ref{appendix: combined method}.

\subsection{Generalizability to Distance-Based Methods}
\label{subsec: knn}

    To validate $\approach$ as a general-purpose framework, we test its synergy with a distance-based K-Nearest Neighbors (KNN)~\cite{knn, ssn} OOD detector. %
    The results in Tables~\ref{table: avg_knn_cifar}~and~\ref{table: avg_knn_imagenet} confirm our hypothesis, showing that $\approach$ provides significant improvement over the KNN baseline across all benchmarks. For instance, %
    on CIFAR-100 (ResNet-18), $\approach(m)$ achieves a 43.84\% reduction. Similarly on, large-scale ImageNet benchmark, where $\approach(\mu)$ on a ResNet-50 results in a 52.13\% reduction in average FPR95. These performance boost highlight that $\approach$ is a general-purpose modulator, providing complementary information for both logit and distance based methods, making it a true \textit{plug-and-play} framework. The full experimental setup and detailed per-dataset results are in Appendix~\ref{appendix: knn}.

    Extending our framework to gradient-based methods~\cite{gradorth, GradNorm} remains future work due to engineering challenges. Furthermore, we omit Mahalanobis~\citep{maha_distance} as a baseline, following recent precedents~\cite{ReAct,DICE,ASH}, owing to its high computational cost and limiting performance. %

    \begin{table}[ht]
        \centering
        \resizebox{\columnwidth}{!}{%
        \begin{tabular}{l l l  cc  cc }
            \toprule
            \multirow{2}{*}{\textbf{Model}} & \multirow{2}{*}{\textbf{Method}} & \multicolumn{2}{c}{\textbf{CIFAR-10}} & \multicolumn{2}{c}{\textbf{CIFAR-100}}  \\
            \cmidrule(lr){3-4} \cmidrule(lr){5-6}
            && \textbf{FPR95} $\downarrow$ & \textbf{AUROC} $\uparrow$ & \textbf{FPR95} $\downarrow$ & \textbf{AUROC} $\uparrow$ \\
            \midrule
            \multirow{4}{*}{ResNet-18}  & KNN                   & 31.02 & 95.00 & 66.81 & 83.40 \\
                                        & + $\approach(\mu)$    & 25.54 & 96.18 & 52.77 & 87.98 \\
                                        & + $\approach(\sigma)$ & 16.87 & 97.28 & 38.28 & 90.80 \\
                                        & + $\approach(m)$      & \textbf{15.62} & \textbf{97.45} & \textbf{37.52} & \textbf{90.99} \\
            \cmidrule(lr){2-6}                                                    
            \multirow{4}{*}{DenseNet-101} & KNN                   & 13.08 & 97.51 & 41.97 & 88.29 \\
                                          & + $\approach(\mu)$    &  9.49 & 98.05 & 36.42 & 91.51 \\
                                          & + $\approach(\sigma)$ &  8.50 & 98.18 & 32.75 & 92.30 \\
                                          & + $\approach(m)$      &  \textbf{8.30} & \textbf{98.23} & \textbf{32.06} & \textbf{92.48} \\
            \bottomrule
            \end{tabular}
        }
        \caption{\textit{Generalizability of $\approach$ to KNN-based OOD detection on the CIFAR benchmarks. All values are averaged across six OOD test datasets. $\boldsymbol{\downarrow}$ / $\boldsymbol{\uparrow}$ indicates lower / higher values are better. Full per-dataset results are in Appendix~\ref{appendix: knn}.}}
     \vspace{-3mm}
       \label{table: avg_knn_cifar}
    \end{table}

    \begin{table*}[ht]
        \centering
        \resizebox{0.85\textwidth}{!}{
        \begin{tabular}{l l cc cc  cc  cc}
        \toprule
        \multirow{2}{*}{\textbf{Method}} & \multicolumn{2}{c}{\textbf{ResNet-34}} & \multicolumn{2}{c}{\textbf{ResNet-50}} & \multicolumn{2}{c}{\textbf{MobileNet-v2}} & \multicolumn{2}{c}{\textbf{DenseNet-121}}\\
        \cmidrule(lr){2-3} \cmidrule(lr){4-5} \cmidrule(lr){6-7} \cmidrule(lr){8-9}
        & \textbf{FPR95} $\downarrow$ & \textbf{AUROC} $\uparrow$ & \textbf{FPR95} $\downarrow$ & \textbf{AUROC} $\uparrow$ & \textbf{FPR95} $\downarrow$ & \textbf{AUROC} $\uparrow$ & \textbf{FPR95} $\downarrow$ & \textbf{AUROC} $\uparrow$ \\
        \midrule
        KNN                   & 73.26 & 93.47 & 64.05 & 95.56 & 75.54 & 91.75 & 74.01 & 92.11 \\
        + $\approach(\mu)$    & \textbf{34.69} & \textbf{97.99} & \textbf{31.11} & \textbf{98.46} & \textbf{46.77} & \textbf{97.10} & \textbf{43.55} & \textbf{97.52} \\
        + $\approach(\sigma)$ & 43.40 & 97.18 & 39.85 & 97.79 & 50.85 & 96.61 & 48.35 & 96.85 \\
        + $\approach(m)$      & 43.16 & 97.17 & 39.60 & 97.78 & 50.52 & 96.61 & 48.89 & 96.75 \\
        \bottomrule
        \end{tabular}}
         \caption{\textit{Generalizability of $\approach$ to KNN-based OOD detection on the ImageNet benchmarks. All values are averaged across six OOD test datasets. $\boldsymbol{\downarrow}$ / $\boldsymbol{\uparrow}$ indicates lower / higher values are better. Full per-dataset results are in Appendix~\ref{appendix: knn}}.}
        \vspace{-3mm}
        \label{table: avg_knn_imagenet}
    \end{table*}

\subsection{Hyperparameter Selection}
\label{subsec: hyper-parameter selection}

    \looseness-1 The clipping threshold $\boldsymbol{c}$ (Eq.~\ref{eq: gamma_c}) is crucial for enhanced performance, as it must be set to optimally distinguish ID from OOD data. Analogous to ReAct~\cite{ReAct}, we set $\boldsymbol{c}$ to the $p$-th percentile of the ID activation distribution. The choice of this percentile $p$ is the key hyperparameter to be tuned. To demonstrate its sensitivity, we summarize the OOD detection performance of $\approach(m)$ in Figure~\ref{fig: c}, varying $p$ from 10 to 100 at 5-point intervals. To this end, we follow established protocols~\citep{DICE,ReAct} and create a proxy OOD validation set,  generated by adding pixel-wise Gaussian noise to images from the ID validation set. We then select the percentile $p$ that yields the best OOD separation on this proxy task. This two-step procedure, which uses a percentile for the mechanism and a proxy set for tuning, is a robust tuning strategy grounded in prior work. The specific details and the selected $p$ values are provided in Appendix~\ref{appendix: reproducibility}.

    \begin{figure}[ht]
        \centering
        \includegraphics[width=0.95\columnwidth]{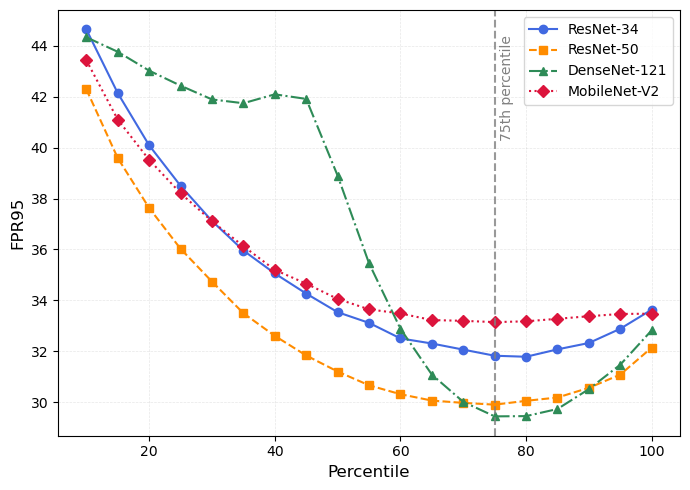}
        \vspace{-3mm}
        \caption{ \textit{Sensitivity analysis of the clipping percentile ($p$) on $\approach(m)$ performance. All values averaged over 4 OOD test datasets for a ResNet-50 (ImageNet).}}
        \label{fig: c}
        \vspace{-2mm}
    \end{figure}

\subsection{Comparison with Other Baselines}

    Comparing  $\approach$ against three contemporary methods, AdaScale~\cite{adascle}, NCI~\citep{NCI} and fDBD~\citep{fdbd}, confirms its  superiority, particularly when used as a complementary module. Aginst, AdaScale's reported reslults using DenseNet-101 on CIFAR-100, $\approach(m) + \texttt{ReAct}$ outperforms AdaScale yielding a 32.45\% gain over the best AdaScale variant. Against NCI's reported results (which were obtained on the OpenOOD settings), $\approach(m) + \texttt{ReAct}$ achieves the average FPR95 by a significant 33.43\% on CIFAR-10 (ResNet-18). This advantage is even more pronounced against fDBD, where our method achieves a substantial FPR95 reduction of 65.54\% on ImageNet (ResNet-50). We also provide a detailed comparison with 19 existing OOD detection methods in literature in Appendix~\ref{appendix: additional baselines}.

\subsection{Accuracy and Computational Overhead}

Our post-hoc method, $\approach$, maintains the original ID classification accuracy of the base model, as it does not alter its inference path. Furthermore, its computational overhead is negligible. The cost depends on the statistic used. $\approach(\mu)$ is the most efficient, as the mean is already computed by the standard GAP. The additional cost is less than 0.0001\% of a ResNet-50's forward pass. Whereas, $\approach(\sigma)$, our most complex statistic, still adds less than 0.01\% overhead. This confirms $\approach$ is lightweight and efficient framework. A detailed breakdown of accuracy and FLOPs is provided in Appendix~\ref{appendix: accuracy}.

\section{Ablation Study}
\label{sec: ablation study}

\subsection{Choice of Layer for Computing $\gamma$}
\label{subsec: penultimate layer}

    A core methodological decision is which network layer provides the most discriminative signal for $\gamma$. We conducted an analysis to locate this optimal signal source and discovered a critical and consistent trend. Using a pre-trained ResNet-50 on ImageNet-1k as a representative example, we found that the $\gamma$ distributions from the early-to-mid residual stages (Layers 1-3) are not sufficiently discriminative, exhibiting high overlap between ID and OOD data and rendering them ineffective (As shown in Figure~\ref{fig: intermediate_layers_places} of Appendix~\ref{appendix: penultimate layer}).

    This finding is intuitively aligned with the principles of hierarchical feature learning \cite{moda, low_level_features}. These initial layers learn general, low-level features such as edges, textures, and color blobs, which are fundamental properties shared by all natural images. Since both ID and OOD samples contain these common features, their activation statistics in these early layers are highly similar, resulting in the non-discriminative, overlapping $\gamma$ distributions we observed. In sharp contrast, the distribution from the final residual stage (Layer 4), immediately preceding GAP, provides a better separation, because it is trained to recognize the complex, high-level concepts and structures specific to the ID classes, which OOD samples lack. This analysis, which held true across all tested OOD datasets and architectures, empirically validates our focus: the penultimate layer's pre-pooling feature map is not a layer of convenience but the most reliable source of a potent signal for $\approach$. The complete details are presented in Appendix~\ref{appendix: penultimate layer}.

\subsection{Analysis of Fusion Strategy}
\label{subsec: fusion strategy}

    In Section~\ref{sec: method}, we alluded to two fusion strategies: multiplicative(*) and additive(+). We  investigated both to validate our design choice. Our analysis on the ImageNet (Table~\ref{table: fusion_strategy_imagenet} in Appendix~\ref{appendix: fusion strategy}) reveals that both strategies can achieve a similar high level of performance, confirming the discriminative power of the scaling factor $\gamma$ itself. 

    However, we found a critical difference in their hyperparameter robustness. The optimal additive method required tuning its clipping threshold $c^+$ at an extremely low percentile (e.g., $\le$ 1st percentile for ResNet-50), making it operationally fragile and highly sensitive to data shifts. In sharp contrast, our proposed multiplicative method tunes its threshold $c^*$ at a stable, moderate percentile, aligning with robust foundational methods like ReAct and SCALE. 
    
    Given its superior robustness and practical stability, we selected multiplicative fusion as our primary strategy. A detailed analysis of this comparison is provided in Appendix~\ref{appendix: fusion strategy}.

\subsection{Alternate Statistics: Median and Entropy}
\label{subsec: alternate statistics}

    To validate our choice of statistics (mean, std, max), we performed a rigorous analysis of two alternatives: median and Shannon entropy. This study found that median is not a viable statistic. It consistently degrades performance across all benchmarks, as its statistical signature fails to produce a discriminative $\gamma$ (see Fig.~\ref{fig: densenet-101_cifar100_svhn_md_scale_density_plot} in Appendix~\ref{appendix: alternate statistics}). The study of Shannon entropy revealed it to be inconsistent. While it provided a strong 14.65\% improvement in a specific case (MobileNet-V2 on ImageNet), this performance was not generalizable, with minimal gains on other architectures like ResNet-50. 

    This confirms our design choice: median was skipped for being ineffective, and entropy was rejected for being unreliable. Our proposed combination of mean, std, and max provides the most robust and consistently high-performing signal. Our full analysis is presented in Appendix~\ref{appendix: alternate statistics}.

\subsection{Scaling Factor $\gamma$ as a Scoring Metric}
\label{subsec: scaling factor as score}

We conducted an analysis to determine if $\gamma$ is powerful enough to serve as a standalone OOD score, similar to MSP~\cite{msp} or Energy~\cite{energy}. Our findings show that $\gamma$ computed from std ($\gamma_{\text{std}}$) and max ($\gamma_{\text{max}}$) are consistently robust signals. On both the CIFAR benchmarks (Table~\ref{table: gamma_score_cifar}) and the large-scale ImageNet benchmark (Table~\ref{table: gamma_score_imagenet}), these two statistics are consistently better than Energy baseline, proving they are viable and generalizable standalone scores. In contrast, the entropy provides a critical insight. While $\gamma_{\text{entropy}}$ appears to be the distinguishable signal on CIFAR, this trend is inconsistent on ImageNet. On this more complex benchmark, $\gamma_{\text{entropy}}$ fails to generalize, suffering a performance collapse and lagging behind Energy. This analysis confirms that entropy, while potent in some cases %
is not a reliable or generalizable statistic for a robust, all-purpose method. The complete details are provided in Appendix~\ref{appendix: scaling factor as score}.

\section{Scope and Future Work}
\label{sec: limitation and future work}

We evaluated $\approach$ using three specific statistics: mean, standard deviation, and max. As our ablations (Appendix~\ref{appendix: alternate statistics}) demonstrated, this choice was deliberate, as other statistics like median were ineffective and entropy was not generalizable. While other aggregate functions could be explored, our focus remained on this robust set.

Additionally, we provide a comprehensive analysis focused on CNN-based architectures, with $\approach$ applied within this setting. This focus is motivated by two factors: \emph{(a)}~Competitive baselines in the literature~\cite{energy, energy_01, energy_02, energy_03, ReAct, DICE, ASH, SCALE} extensively use CNN-based architectures. For fair comparison, we adopt similar architectures to evaluate $\approach$. \emph{(b)} CNN-based architectures continue to be widely used in both the research community and real-world applications. A comprehensive benchmark study carried out in prior work~\cite{Battle_of_backbones} has shown convolutional networks such as ResNet~\cite{ResNet} and ConvNeXt~\cite{ConvNext,ConvNext_v2} remain the default choice in real-world vision systems (including object detection, segmentation, retrieval, and classification) due to their strong inductive bias (translation invariance), computational efficiency, strong performance on moderate-scale data, and extensive ecosystem of pretrained models. 

The core principle of our method, leveraging statistical cues from penultimate pre-pooled activation map is a general strategy that can be extended beyond CNNs to architectures like Vision Transformers (ViTs)~\cite{ViT}. However, adapting $\approach$ to derive an effective scaling factor \(\gamma\) from the intermediate blocks of a transformer requires substantial research and engineering. We are actively exploring the extension of our framework to transformer-based models. %

\section{Related Work}
\label{sec:related work}

\noindent 
\textbf{Scoring-based OOD Detection.} Post-hoc OOD detection is dominated by the design of scoring functions. Early work on MSP~\cite{msp} and its variants~\cite{odin,G-odin,MOS} was shown to be vulnerable to model overconfidence~\cite{msp_failed_01, msp_failed_02}. This led to the development of energy-based scores~\cite{energy}, which have become the foundation for most logit-based OOD detection methods~\cite{ReAct,DICE,ASH,SCALE} due to their superior performance. Other families of scores exist, including distance-based (e.g., Mahalanobis~\cite{maha_distance}, KNN~\cite{knn,ssd,ssn,NNGuide}, fDBD~\cite{fdbd}, NCI~\cite{NCI}), gradient-based (e.g., GradNorm~\cite{GradNorm}, GradOrth~\cite{gradorth}), Virtual-logit~\cite{ViM} and Bayesian approaches~\cite{bayesian_00,bayesian_01,bayesian_02,bayesian_03,bayesian_04}, etc. $\approach$ is designed to complement and enhance these scoring paradigms.

\noindent
\textbf{Post-hoc Pruning based OOD Detection.} Recent approaches like ReAct~\cite{ReAct}, DICE~\cite{DICE}, ASH~\cite{ASH}, and SCALE~\cite{SCALE} operate post-hoc by pruning~\cite{prune_00, prune_01} or modifying feature representations, often using simple heuristics over penultimate activations. Our method, $\approach$, reveals that activation channels of layers prior to penultimate layer possess rich statistical information cues that, when exploited, can substantially improve OOD detection performance when combined with these approaches. It complements existing sparse representation techniques and is easy to integrate into standard pipelines.

\noindent
\textbf{Generative OOD Detection.} Generative models identify OOD samples by estimating data density~\cite{density_2013,density_2014,density_2014_ICLR,density_2016,density_2017,density_2017_07}, but recent work~\cite{density_ood_failed} has shown they may assign high likelihoods to OOD inputs. Moreover, these models are often harder to train and less reliable than discriminative approaches~\cite{msp, odin, GradNorm, energy, ReAct, DICE, ASH, SCALE}. Thus, we primarily focused on such discriminative approaches, while showcasing generality using KNN~\cite{knn}. However, if a generative method relies on a scalar-based scoring function, then $\approach$ can also be extended to such generative methods.

\noindent
\textbf{Training-Time OOD Detection Methods.} A distinct line of work involves modifying the model's training objective with regularization techniques to improve OOD separation~\cite{msp_failed_02, energy, bayesian_02, training_time_00, training_time_01, training_time_04, training_time_05, training_time_06, training_time_07, training_time_08, training_time_09, training_time_10}. These methods often encourage uniform predictions for outliers \cite{training_time_00, training_time_01} or explicitly penalize low energy scores for out-of-distribution samples during training \cite{energy, training_time_02, training_time_03, training_time_04, training_time_05}. In contrast, our work is entirely post-hoc, requiring no changes to the training process. This is highly practical and broadly applicable, particularly~in~scenarios involving large models where retraining is costly or infeasible.

\section{Conclusion}
\label{sec: conclusion}

$\approach$ is a simple yet powerful post-hoc framework for OOD detection that challenges the conventional paradigm of using only the pooled feature vector from the penultimate layer. We demonstrated that rich, discriminative information cues were being discarded, namely, the channel-wise statistics embedded in penultimate layer's pre-pooled feature map. $\approach$ effectively harnesses this under-explored information by computing an input-dependent scaling factor ($\gamma$) that modulates existing baseline scores, significantly enhancing the separation between ID and OOD distributions.
Extensive experiments across diverse models and datasets demonstrate that $\approach$ consistently outperforms recent competitive baselines OOD detection methods. These empirical findings are further supported by ablation studies and statistical analysis. %

{
    \small
    \bibliographystyle{ieeenat_fullname}
    \bibliography{main}
}

\appendix
\clearpage
\setcounter{page}{1}
\maketitlesupplementary

\section{Description of Baseline Methods}
\label{appendix: baseline methods}

In resonance with existing work~\cite{energy,ReAct,DICE,ASH}, for the reader’s convenience, we summarize in detail a few common techniques for defining OOD scores that measure the degree of ID-ness on the given sample. All the methods derive the score post hoc on neural networks trained with in-distribution data only. By convention, a higher score is indicative of being in-distribution, and vice versa.

\textbf{Softmax score } One of the earliest works on OOD detection considered using the maximum softmax
probability (MSP) to distinguish between $\mathcal{D}_{\text {in }}$ and $\mathcal{D}_{\text {out }}$~\cite{msp}. In detail, suppose the label space is \( \mathcal{Y} = \{1, 2, \cdots , C\}\). We assume the classifier $f$ is defined in terms of a feature extractor $f : \mathcal{X} \rightarrow \mathbb{R}^m$ and a linear multinomial regressor with weight matrix $W \in \mathbb{R}^{C \times m}$ and bias vector $\mathbf{b} \in \mathbb{R}^C$. The prediction probability for each class is given by :

\begin{small}
    \begin{equation}
        \mathbb{P}(y = c| \mathbf{x}) = \text{Softmax}(Wh(\mathbf{x}) + \mathbf{b})_{c}
    \end{equation}
\end{small}

The softmax score is defined as $S_{\text{MSP}}(\mathbf{x}; f) := \max_c\mathbb{P}(y = c| \mathbf{x})$.

\textbf{ODIN }~\cite{odin} This method introduced temperature scaling and input perturbation to improve the separation of MSP for ID and OOD data.  $\mathbf{\Tilde{x}}$ denotes perturbed input.

\begin{small}
    \begin{equation}
        \mathbb{P}(y = c| \mathbf{\Tilde{x}}) = \text{Softmax}[(Wh(\mathbf{\Tilde{x}}) + \mathbf{b})/T]_{c}
    \end{equation}
\end{small}

the ODIN score is defined as $S_{\text{ODIN}}(\mathbf{x}; f) := \max_c\mathbb{P}(y = c | \mathbf{\Tilde{x}})$.

\textbf{Energy score }  The energy function~\cite{energy} maps the output logit to a scalar $S_{\text{Energy}}(\mathbf{x}; f) \in \mathbb{R}$, which is relatively lower for ID data:

\begin{small}
    \begin{equation}
        S_{\text{Energy}}(\mathbf{x}; f) = -\text{Energy}(\mathbf{x}; f) = \log{ \left( \sum_{c=1}^{C} \exp( f_{c}(\mathbf{x}) ) \right) }
    \end{equation}
\end{small}

They used the \textit{negative energy score} for OOD detection, in order to align with the convention that $S(\mathbf{x}; f)$ is higher for ID data and vice versa.

\textbf{ReAct }  They perform post hoc modification of penultimate layer of the neural network. It works by truncating the feature activations at a threshold $c$, i.e., replacing each activation with $\min(x,c)$. This limits the influence of abnormally large activations often caused by OOD inputs.The truncation threshold is set with the validation strategy in~\cite{ReAct}.Formally,
    \[  
        \begin{split}
            h^{\text{ReAct}}(\mathbf{x}) &= \text{ReAct}(h(\mathbf{x}); c) \\
                                         &= \min(h(\mathbf{x}), c) \quad \text{(applied element-wise)}
        \end{split}
    \]
The final model output becomes:
    \[
        f^{\text{ReAct}}(\mathbf{x}) = W^\top h^{\text{ReAct}}(\mathbf{x}) + \mathbf{b}
    \]
This method also uses energy score $S_{\text{Energy}}(\mathbf{x}; f^{\text{ReAct}}) \in \mathbb{R}$ for OOD detection.

\textbf{DICE}~\cite{DICE}   It is a post hoc method to improve OOD detection by retaining only the most informative weights in the final layer of a pre-trained neural network. A \textit{contribution matrix} \( V \in \mathbb{R}^{m \times C} \) is computed, where each column is:
    \[
        \mathbf{v}_c = \mathbb{E}_{\mathbf{x} \in \mathcal{D}}[\mathbf{w}_c \odot h(\mathbf{x})]
    \]
with \( \odot \) denoting element-wise multiplication. Each entry in \( V \) quantifies the average contribution of a feature unit to class \( c \). A binary \textit{masking matrix} \( M \in \mathbb{R}^{m \times C} \) selects the top-$k$ highest-contributing weights, setting others to zero. The sparsified output is:
    \[
        f^{\text{DICE}}(\mathbf{x}; \theta) = (M \odot W)^\top h(\mathbf{x}) + \mathbf{b}
    \]
This method also uses energy score $S_{\text{Energy}}(\mathbf{x}; f^{\text{DICE}}) \in \mathbb{R}$ for OOD detection.

\textbf{ASH }~\citep{ASH}  It is also a post-hoc method that simplifies feature representations to improve OOD detection. They proposes three versions of ASH, we presented only the best performing version i.e, ASH-S.  Given an input activation vector \( h(\mathbf{x}) \) and a pruning percentile \( p \), ASH~\citep{ASH} proceeds as follows shaping the activation of penultimate layer $h(\mathbf{x})$ to get $h^{\text{ASH}}(\mathbf{x})$:

\begin{enumerate}
    \item Compute the \( p \)-th percentile threshold \( t \) of \( h(\mathbf{x}) \).
    \item Let \( s_1 = \sum h(\mathbf{x}) \), the sum of all activation values before pruning.
    \item Set all values in \( h(\mathbf{x}) \) less than \( t \) to zero.
    \item Let \( s_2 = \sum h(\mathbf{x}) \), the sum after pruning.
    \item Scale all non-zero values in \( h(\mathbf{x}) \) by \( \exp(s_1/s_2) \).
\end{enumerate}

The final model output becomes, which is then used to compute energy score $S_{\text{Energy}}(\mathbf{x}; f^{\text{ASH}}) \in \mathbb{R}$ for OOD detection :
    \[
        f^{\text{ASH}}(\mathbf{x}) = W^\top h^{\text{ASH}}(\mathbf{x}) + \mathbf{b}
    \]

\textbf{SCALE}~\citep{SCALE} It is a post-hoc method designed to enhance out-of-distribution (OOD) detection by adaptively scaling the activation of the penultimate layer $h(\mathbf{x})$ before computing the final classifier output. Given an input activation vector $h(\mathbf{x})$ and a pruning percentile $p$, SCALE~\citep{SCALE} proceeds as follows to obtain the scaled activation $h^{\text{SCALE}}(\mathbf{x})$:

\begin{enumerate}
    \item Compute the $p$-th percentile threshold $t$ of $h(\mathbf{x})$.
    \item Let $s_1 = \sum h(\mathbf{x})$, the sum of all activation values before pruning.
    \item Construct a binary mask $\mathbf{1}_{\{h(\mathbf{x}) \geq t\}}$ that keeps only the top-$p$ activations.
    \item Let $s_2 = \sum h(\mathbf{x}) \cdot \mathbf{1}_{\{h(\mathbf{x}) \geq t\}}$, the sum of the top-$p$ activations.
    \item Compute the scaling ratio $r = \tfrac{s_1}{s_2}$.
    \item Scale the original activations by $\exp(r)$:
    \[
        h^{\text{SCALE}}(\mathbf{x};\theta) = \exp(r) \cdot h(\mathbf{x};\theta).
    \]
\end{enumerate}

The final model output is then computed with the scaled activations, and the \emph{energy score} is used for OOD detection:
\[
    f^{\text{SCALE}}(\mathbf{x}) = W^\top h^{\text{SCALE}}(\mathbf{x}) + \mathbf{b}, 
    \quad
    S_{\text{Energy}}(\mathbf{x}; f^{\text{SCALE}}) \in \mathbb{R}.
\]

\textbf{KNN}~\citep{knn}. This post-hoc, feature-space method identifies OOD samples based on their distance from ID training manifold. Let $\mathcal{H}_{\text{train}} = \{h(\mathbf{x}_i) \in \mathbb{R}^d\}_{i=1}^N$ be the set of $N$ penultimate-layer feature vectors stored from the ID training set. For a new test input $\mathbf{x}$ with feature $h(\mathbf{x})$, the kNN score is computed in three steps:

\begin{enumerate}
    \item Compute Distances: The set of Euclidean distances $\{d_i\}$ between $h(\mathbf{x})$ and all stored ID features in $\mathcal{H}_{\text{train}}$ is computed:$$d_i = \| h(\mathbf{x}) - h(\mathbf{x}_i) \|_2, \quad \forall h(\mathbf{x}_i) \in \mathcal{H}_{\text{train}}$$.

    \item Identify Neighbors: The $k$ smallest distances are identified and sorted, $d_{(1)} \leq d_{(2)} \leq \dots \leq d_{(k)}$.

    \item Calculate Score: The final kNN score is the average distance to these $k$ nearest neighbors:$$S_{\text{kNN}}(\mathbf{x}) = \frac{1}{k} \sum_{j=1}^k d_{(j)}$$
\end{enumerate}

A large score $S_{\text{kNN}}(\mathbf{x})$ indicates that the sample lies far from the ID training manifold and is therefore flagged as out-of-distribution.

\section{Statistical Analysis}
\label{appendix: analysis}

    In this section, we present a detailed statistical analysis of our method, $\approach$, exhibiting how it enhances the separation between in-distribution (ID) and out-of-distribution (OOD) samples. This increased separation leads to a sharper decision boundary between ID and OOD regions. Our analysis builds on key observations commonly made in prior work on OOD detection~\citep{energy, ReAct, DICE, ASH, SCALE}. %

    \begin{figure*}[ht]
        \centering
        \includegraphics[width=0.97\textwidth]{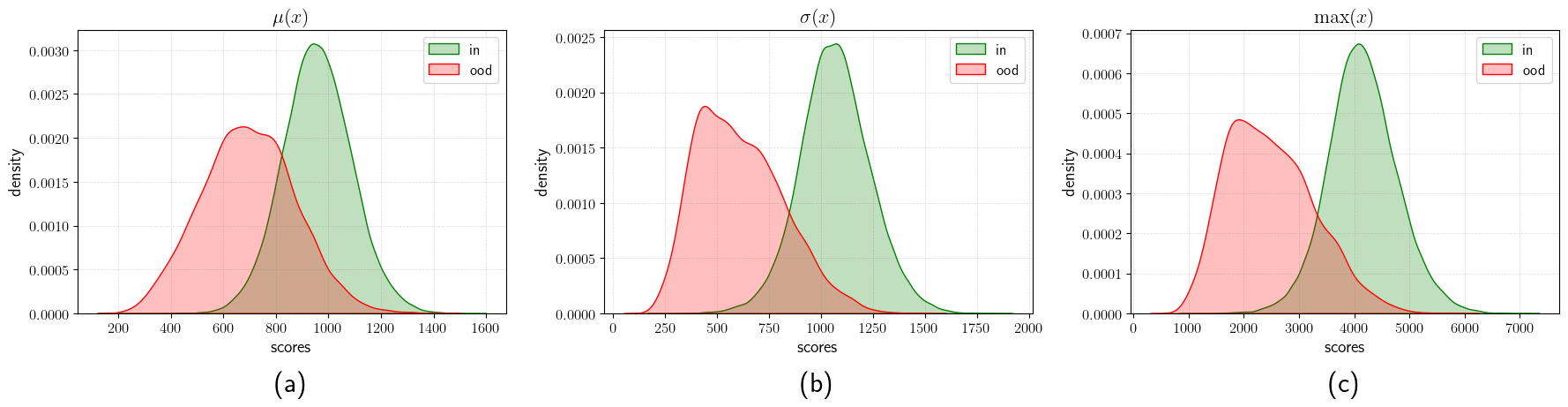}
        \caption{\textit{Distribution of scaling factor \(\gamma\) from the penultimate layer of a ResNet-50 trained on ImageNet-1k, evaluated with Texture as the OOD dataset. The scales show clear separation between ID and OOD samples. Left to right: (a) $\mu(\mathbf{x})$: mean, (b) $\sigma(\mathbf{x})$: standard deviation, (c) $\max(\mathbf{x})$: max }}
        \label{fig: scale_density_plot}
    \end{figure*}

    \subsection{Framework and Objective}
    In this section, we provide a statistical analysis demonstrating that our method, $\approach$, improves OOD detection by increasing the distributional separation between the expected scores of in-distribution (ID) and out-of-distribution (OOD) data.

    Let $S(\mathbf{x})$ be the baseline OOD score and $\gamma(\mathbf{x})$ be our input-dependent scaling factor. We analyze two fusion strategies multiplicative scaling (i.e, elastic scaling) and additive shift as described in Equation~\ref{eq: adaptive_scaling} of Section~\ref{sec: method}: 
        \[
            \begin{split}
                S^{*}(\mathbf{x}) &= \gamma(\mathbf{x})S(\mathbf{x}) \\
                S^{+}(\mathbf{x}) &= \gamma(\mathbf{x}) + S(\mathbf{x})
            \end{split}
        \]
    Our objective is to formally show that the separability of the re-calibrated scores ($\Delta_{\mathrm{scaled}}$ and $\Delta_{\mathrm{shift}}$) is greater than or equal to the separability of the original score ($\Delta_{\mathrm{original}}$). The separation $\Delta$ is defined as the difference between the expected score for in-distribution and out-of-distribution data:
        \[ 
        \begin{split}
            \Delta_{\mathrm{shift}} &= \mathbb{E}_{\mathbf{x} \sim \mathcal{D}_{\mathrm{in}}}[S^{+}(\mathbf{x})] - \mathbb{E}_{\mathbf{x} \sim \mathcal{D}_{\mathrm{out}}}[S^{+}(\mathbf{x})] \\
            \Delta_{\mathrm{scaled}} &= \mathbb{E}_{\mathbf{x} \sim \mathcal{D}_{\mathrm{in}}}[S^{*}(\mathbf{x})] - \mathbb{E}_{\mathbf{x} \sim \mathcal{D}_{\mathrm{out}}}[S^{*}(\mathbf{x})] \\
            \Delta_{\mathrm{original}} &= \mathbb{E}_{\mathbf{x} \sim \mathcal{D}_{\mathrm{in}}}[S(\mathbf{x})] - \mathbb{E}_{\mathbf{x} \sim \mathcal{D}_{\mathrm{out}}}[S(\mathbf{x})] \\
        \end{split}
        \]

    \subsection{Rationale and Assumptions}

    The rationale for OOD scoring functions~\cite{msp, energy} is to map inputs to a scalar $S(\mathbf{x})$ that separates ID from OOD data. For clarity, we will follow the convention where ID samples yield higher scores and OOD samples yield lower scores%
    The success of any post-hoc method relies on this baseline separation as a necessary condition.  

    Building on this, $\approach$ introduces a complementary scaling factor, $\gamma(\mathbf{x})$. For the fusion to be effective, $\gamma(\mathbf{x})$ must also be larger for a typical ID samples than OOD samples. This property is a necessary condition for success of $\approach$.%

    \begin{assumption}
        \label{assumption: scaling_pattern}
        The expected value of the scaling factor for in-distribution data is greater than or equal to its expected value for out-of-distribution data:
        $$\bar{\gamma}_{\text{in}} = \mathbb{E}_{\mathbf{x} \sim \mathcal{D}_{\mathrm{in}}}[\gamma(\mathbf{x})] \ge \mathbb{E}_{\mathbf{x} \sim \mathcal{D}_{\mathrm{out}}}[\gamma(\mathbf{x})] = \bar{\gamma}_{\text{out}}$$
    \end{assumption}

    In other word, by fusing these two "higher-is-ID" signals multiplicatively ($S^{*}(\mathbf{x}) = S(\mathbf{x}) \times \gamma(\mathbf{x})$), $\approach$ uses differential amplification to actively widen the ID-OOD gap:
    \begin{itemize}
        \item For a typical ID sample, the high baseline score $S(\mathbf{x})$ is amplified by the high $\gamma(\mathbf{x})$, pushing it further into the ID region.
        \item  For a typical OOD sample, the low baseline score $S(\mathbf{x})$ is suppressed by the low $\gamma(\mathbf{x})$, pushing it further into the OOD region.
    \end{itemize}
    The additive fusion, $S^{+}(\mathbf{x}) = S(\mathbf{x}) + \gamma(\mathbf{x})$, achieves a similar separation by applying a differential shift.
    
    Additionally, to simplify the theoretical analysis, we introduce the following sufficient condition, which is empirically supported by our observations (Figure~\ref{fig: density_plot}, \ref{fig: scale_density_plot}). We note that this condition is primarily for theoretical tractability; our method is empirically robust and does not strictly require this assumption to hold to achieve strong performance.
    
    \begin{assumption}
        \label{assumption: scaling pattern}
        The mean scaling factor for ID data is larger than for OOD data, and both are bounded by one as illustrated in Figure~\ref{fig: scale_density_plot}. Formally, defining $\bar{\gamma}_{\text{in}} = \mathbb{E}_{\mathbf{x} \sim \mathcal{D}_{\text{in}}}[\gamma(\mathbf{x})]$ and $\bar{\gamma}_{\text{out}} = \mathbb{E}_{\mathbf{x} \sim \mathcal{D}_{\text{out}}}[\gamma(\mathbf{x})]$, we have  
            \begin{equation}
                \bar{\gamma}_{\text{in}} \ge \bar{\gamma}_{\text{out}} \ge 1
                \label{eq:gamma_relations}
            \end{equation}
    \end{assumption}
    
    \begin{assumption}
        \label{assumption: approx uncorrelation}
        The scaling factor $\gamma(\mathbf{x})$ and baseline score $S(\mathbf{x})$ are approximately uncorrelated. This is a simplifying assumption for the analysis that the covariance is negligible for both ID and OOD data.
            \begin{equation}
                \mathrm{Cov}\big(\gamma(\mathbf{x}), S(\mathbf{x})\big) = 0
                \label{eq:co-variance}
            \end{equation}
    \end{assumption}

    \subsection{ $\approach$'s Improved Separation}
    In this section, we provide a formal characterization of how $\approach$ widens the separability between the expected ID and OOD scores under both multiplicative ($^*$) and additive ($^+$) fusion.

    \begin{theorem}
        Under Assumptions \ref{assumption: scaling pattern} and \ref{assumption: approx uncorrelation}, the distributional separation of the multiplicatively scaled score, $S^{*}(\mathbf{x})$, is at least as great as that of the original score, $S(\mathbf{x})$, i.e., $\Delta_{\text{scaled}} \ge \Delta_{\text{original}}$.
    \end{theorem}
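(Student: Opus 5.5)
The plan is to reduce everything to first moments, using Assumption~\ref{assumption: approx uncorrelation}, and then do a short algebraic decomposition that relies on Assumption~\ref{assumption: scaling pattern}. Write $\bar{S}_{\text{in}} = \mathbb{E}_{\mathbf{x}\sim\mathcal{D}_{\text{in}}}[S(\mathbf{x})]$ and $\bar{S}_{\text{out}} = \mathbb{E}_{\mathbf{x}\sim\mathcal{D}_{\text{out}}}[S(\mathbf{x})]$.

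First, I would apply the identity $\mathbb{E}[\gamma S] = \mathbb{E}[\gamma]\,\mathbb{E}[S] + \mathrm{Cov}(\gamma,S)$ separately under $\mathcal{D}_{\text{in}}$ and under $\mathcal{D}_{\text{out}}$. Assumption~\ref{assumption: approx uncorrelation} sets the covariance to zero for both distributions, so
\[
\Delta_{\text{scaled}} = \bar{\gamma}_{\text{in}}\bar{S}_{\text{in}} - \bar{\gamma}_{\text{out}}\bar{S}_{\text{out}} .
\]

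Second, I would add and subtract $\bar{\gamma}_{\text{out}}\bar{S}_{\text{in}}$ to isolate the original gap:
\[
\Delta_{\text{scaled}} = \bar{\gamma}_{\text{out}}\,\Delta_{\text{original}} + (\bar{\gamma}_{\text{in}}-\bar{\gamma}_{\text{out}})\,\bar{S}_{\text{in}} .
\]
Subtracting $\Delta_{\text{original}}$ then gives the exact excess:
\[
\Delta_{\text{scaled}} - \Delta_{\text{original}} = (\bar{\gamma}_{\text{out}}-1)\,\Delta_{\text{original}} + (\bar{\gamma}_{\text{in}}-\bar{\gamma}_{\text{out}})\,\bar{S}_{\text{in}} .
\]

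Third, I would show that both terms are nonnegative.
\begin{itemize}
\item In the first term, $\bar{\gamma}_{\text{out}} - 1 \ge 0$ by Assumption~\ref{assumption: scaling pattern}. Also $\Delta_{\text{original}} \ge 0$ by the ``higher-is-ID'' convention stated in the Rationale subsection, namely that the baseline score already separates ID above OOD in expectation.
\item In the second term, $\bar{\gamma}_{\text{in}} - \bar{\gamma}_{\text{out}} \ge 0$ again by Assumption~\ref{assumption: scaling pattern}. It then remains to have $\bar{S}_{\text{in}} \ge 0$.
\end{itemize}

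The main obstacle is this sign condition on $\bar{S}_{\text{in}}$, which the stated assumptions do not supply. For the energy score, $S(\mathbf{x}) = \log\sum_j e^{f_j(\mathbf{x})} \ge \max_j f_j(\mathbf{x})$. I would argue that ID inputs produce a confidently large positive top logit, so $\bar{S}_{\text{in}} > 0$ in practice. In the proof itself I would state $\bar{S}_{\text{in}} \ge 0$ and $\Delta_{\text{original}} \ge 0$ explicitly as standing conventions of the framework, alongside Assumption~\ref{assumption: scaling_pattern}. Alternatively, one can note that adding a constant to $S$ preserves the ranking, so a shift makes $S$ nonnegative without changing FPR95 or AUROC of the baseline. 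The caveat is that such a shift does change $S^{*}$, so the theorem should be read for the nonnegative score actually used.

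With these sign facts in place, the displayed decomposition gives $\Delta_{\text{scaled}} \ge \Delta_{\text{original}}$. Equality holds exactly when $\bar{\gamma}_{\text{out}} = 1$ (or $\Delta_{\text{original}} = 0$) and, in addition, $\bar{\gamma}_{\text{in}} = \bar{\gamma}_{\text{out}}$ (or $\bar{S}_{\text{in}} = 0$).
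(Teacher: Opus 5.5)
Your proof is the paper's proof written as an exact identity. The paper drops the term $(\bar{\gamma}_{\text{in}}-\bar{\gamma}_{\text{out}})\,\bar{S}_{\text{in}}$ through the inequality $\bar{\gamma}_{\text{in}}\bar{S}_{\text{in}}\ge\bar{\gamma}_{\text{out}}\bar{S}_{\text{in}}$, and then uses $\bar{\gamma}_{\text{out}}\ge 1$ on $\bar{\gamma}_{\text{out}}\Delta_{\text{original}}$, so it silently relies on exactly the two sign conditions you flag, $\bar{S}_{\text{in}}\ge 0$ and $\Delta_{\text{original}}\ge 0$. Stating them explicitly is the right call, because the claim fails without them: take $\gamma\equiv 3$ on ID, $\gamma\equiv 1$ on OOD, $\bar{S}_{\text{in}}=-10$, $\bar{S}_{\text{out}}=-11$; both assumptions hold, yet $\Delta_{\text{scaled}}=-19<1=\Delta_{\text{original}}$.
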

    
    \begin{proof} By definition of \( \Delta_{\mathrm{scaled}} \):
            \[
            \resizebox{\columnwidth}{!}{$
                \begin{aligned}
                \Delta_{\text{scaled}} &= \mathbb{E}_{\mathbf{x} \sim \mathcal{D}_{\mathrm{in}}}[\gamma(\mathbf{x})S(\mathbf{x})] - \mathbb{E}_{\mathbf{x} \sim \mathcal{D}_{\mathrm{out}}}[\gamma(\mathbf{x})S(\mathbf{x})] \\
                &= \mathbb{E}_{\mathbf{x} \sim \mathcal{D}_{\mathrm{in}}}[\gamma(\mathbf{x})]\mathbb{E}_{\mathbf{x} \sim \mathcal{D}_{\mathrm{in}}}[S(\mathbf{x})]  - \mathbb{E}_{\mathbf{x} \sim \mathcal{D}_{\mathrm{out}}}[\gamma(\mathbf{x})]\mathbb{E}_{\mathbf{x} \sim \mathcal{D}_{\mathrm{out}}}[S(\mathbf{x})] \\
                &=   \bar{\gamma}_{\text{in}}\mathbb{E}_{\mathbf{x} \sim \mathcal{D}_{\mathrm{in}}}[S(\mathbf{x})] - \bar{\gamma}_{\text{out}}\mathbb{E}_{\mathbf{x} \sim \mathcal{D}_{\mathrm{out}}}[S(\mathbf{x})] \\
                &\ge  \bar{\gamma}_{\text{out}}\mathbb{E}_{\mathbf{x} \sim \mathcal{D}_{\mathrm{in}}}[S(\mathbf{x})] - \bar{\gamma}_{\text{out}}\mathbb{E}_{\mathbf{x} \sim \mathcal{D}_{\mathrm{out}}}[S(\mathbf{x})]  \\
                &= \bar{\gamma}_{\text{out}}\Big(\mathbb{E}_{\mathbf{x} \sim \mathcal{D}_{\mathrm{in}}}[S(\mathbf{x})] - \mathbb{E}_{\mathbf{x} \sim \mathcal{D}_{\mathrm{out}}}[S(\mathbf{x})] \Big) \\
                &= \bar{\gamma}_{\text{out}}\Delta_{\text{original}} \\
                \end{aligned}
                $}
            \]
            \[  
                \begin{aligned}
                     \therefore \quad \Delta_{\text{scaled}} & \ge \bar{\gamma}_{\text{out}}\Delta_{\text{original}} \\
                \end{aligned}
            \]
        \(\because \bar{\gamma}_{\text{out}} \ge 1\), we conclude scaling increases the separation between typical ID and OOD samples.     
    \end{proof}

    \begin{theorem}
        Under Assumption \ref{assumption: scaling_pattern}, the additive fusion score $S^{+}(\mathbf{x})$ increases or maintains the distributional separation compared to the baseline score, i.e., $\Delta_{\text{shift}} \ge \Delta_{\text{original}}$.
    \end{theorem}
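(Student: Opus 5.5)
The plan is to compute $\Delta_{\text{shift}}$ directly from its definition and use linearity of expectation. No independence or covariance condition is needed, which is why only Assumption~\ref{assumption: scaling_pattern} appears in the statement. Since $S^{+}(\mathbf{x}) = \gamma(\mathbf{x}) + S(\mathbf{x})$, each expectation splits into a $\gamma$-part and an $S$-part. We need $\gamma$ and $S$ to be integrable under both $\mathcal{D}_{\mathrm{in}}$ and $\mathcal{D}_{\mathrm{out}}$. This is automatic for $\gamma$: by Eq.~\ref{eq: gamma_design} and the clipping in Eq.~\ref{eq: gamma_c}, every summand is at most $c$, so $\gamma \le nc$. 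Moreover, each statistic $\mu$, $\sigma$, $m$ is nonnegative for post-ReLU feature maps, so $\gamma \ge 0$. For $S$, integrability is implicit in $\Delta_{\text{original}}$ being defined.

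The key steps, in order, are as follows. First, write
\[
\Delta_{\text{shift}} = \mathbb{E}_{\mathcal{D}_{\mathrm{in}}}[\gamma + S] - \mathbb{E}_{\mathcal{D}_{\mathrm{out}}}[\gamma + S].
\]
Second, expand by linearity and regroup:
\[
\Delta_{\text{shift}} = (\bar{\gamma}_{\text{in}} - \bar{\gamma}_{\text{out}}) + \big(\mathbb{E}_{\mathcal{D}_{\mathrm{in}}}[S] - \mathbb{E}_{\mathcal{D}_{\mathrm{out}}}[S]\big) = (\bar{\gamma}_{\text{in}} - \bar{\gamma}_{\text{out}}) + \Delta_{\text{original}}.
\]
Third, invoke Assumption~\ref{assumption: scaling_pattern}, which gives $\bar{\gamma}_{\text{in}} - \bar{\gamma}_{\text{out}} \ge 0$. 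This yields $\Delta_{\text{shift}} \ge \Delta_{\text{original}}$. Equality holds exactly when the mean scaling factors coincide, which accounts for the ``or maintains'' clause in the statement.

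There is no genuine obstacle. The only point requiring care is the integrability justification that licenses the use of linearity, which the boundedness of $\gamma$ from clipping settles. It is also worth noting why this argument is simpler than the multiplicative theorem. The additive gain $\bar{\gamma}_{\text{in}} - \bar{\gamma}_{\text{out}}$ does not depend on the sign or scale of $S$. Hence neither Assumption~\ref{assumption: scaling pattern} ($\bar{\gamma}_{\text{out}} \ge 1$) nor Assumption~\ref{assumption: approx uncorrelation} is needed.
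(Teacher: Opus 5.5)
Your proof is correct and matches the paper's. Both expand $\Delta_{\text{shift}}$ by linearity of expectation into $\Delta_{\text{original}} + (\bar{\gamma}_{\text{in}} - \bar{\gamma}_{\text{out}})$ and conclude via Assumption~\ref{assumption: scaling_pattern}; your integrability remark (that $\gamma$ is bounded via clipping and nonnegativity) is a small rigor addition the paper leaves implicit.
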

    
    \begin{proof} By the definition of $\Delta_{\mathrm{shift}}$ and the linearity of expectation:
            \[
            \resizebox{\columnwidth}{!}{$
                \begin{aligned}
               \Delta_{\mathrm{shift}} &= \mathbb{E}_{\mathbf{x} \sim \mathcal{D}_{\mathrm{in}}}[S^{+}(\mathbf{x})] - \mathbb{E}_{\mathbf{x} \sim \mathcal{D}_{\mathrm{out}}}[S^{+}(\mathbf{x})] \\
                &= \mathbb{E}_{\mathbf{x} \sim \mathcal{D}_{\mathrm{in}}}[\gamma(\mathbf{x}) + S(\mathbf{x})]  - \mathbb{E}_{\mathbf{x} \sim \mathcal{D}_{\mathrm{out}}}[\gamma(\mathbf{x})+S(\mathbf{x})] \\
                 &= \mathbb{E}_{\mathbf{x} \sim \mathcal{D}_{\mathrm{in}}}[S(\mathbf{x})]  - \mathbb{E}_{\mathbf{x} \sim \mathcal{D}_{\mathrm{out}}}[S(\mathbf{x})] + \mathbb{E}_{\mathbf{x} \sim \mathcal{D}_{\mathrm{in}}}[\gamma(\mathbf{x})]  - \mathbb{E}_{\mathbf{x} \sim \mathcal{D}_{\mathrm{out}}}[\gamma(\mathbf{x})]\\
                 &= \Delta_{\text{original}} + \Big( \bar{\gamma}_{\text{in}} - \bar{\gamma}_{\text{out}}\Big) \\
                 &\ge \Delta_{\text{original}} \hspace{20pt} \Big( \because  \bar{\gamma}_{\text{in}} - \bar{\gamma}_{\text{out}} \ge 0 \Big)  \\
                \end{aligned}
                $}
            \]
            \[  
                \begin{aligned}
                     \therefore \quad \Delta_{\text{shift}} & \ge \Delta_{\text{original}} \\
                \end{aligned}
            \]
        We conclude shifting increases the separation between typical ID and OOD samples.     
    \end{proof}

\section{Accuracy and Computational Overhead}
\label{appendix: accuracy}

\textbf{Classification Accuracy.} Our method, $\approach$, is designed to be post-hoc. The scaling factor \(\gamma\) is computed from penultimate pre-pooled activation map without altering the network's weights or its standard forward pass. Consequently, when used as a standalone method, $\approach$ does not interfere with the model's inference process and maintains its original ID classification accuracy. We report the specific ID classification accuracy for all models used in our evaluation in Table~\ref{table: accuracy}.

    \begin{table}[th]
        \centering
        \begin{tabular}{l l l}
        \toprule
        \textbf{Dataset} & \textbf{Model} & \textbf{Accuracy} \\
        \midrule
        \multirow{2}{*}{CIFAR-10} & ResNet-18     & 93.89 \\
                                  & DenseNet-101  & 93.61 \\
        \midrule
        \multirow{2}{*}{CIFAR-100} & ResNet-18    & 75.20 \\
                                   & DenseNet-101 & 74.47 \\
        \midrule
        \multirow{4}{*}{ImageNet} & ResNet-34     & 73.31 \\
                                  & ResNet-50     & 76.13 \\
                                  & MobileNet-v2  & 71.88 \\
                                  & DenseNet-121  & 74.44 \\ 
        \bottomrule
        \end{tabular}
        \caption{\textit{In-distribution classification accuracy (\%) of the all the model used in evaluation of $\approach$.}}
        \label{table: accuracy}
    \end{table}

\noindent
\textbf{Computational Overhead.} The computational overhead introduced by $\approach$ is negligible. The primary cost is computing one of channel-wise statistics (mean, std, max) from the $n \times k \times k$ pre-pooling map, followed by the clipping (Equation~\ref{eq: gamma_c}) and summation (Equation~\ref{eq: gamma_design}).

\begin{enumerate}
    \item $\approach(\mu)$. This is the most efficient scenario. The mean statistic is simply the output of the GAP operation, which is already part of the standard forward pass. The only additional cost is the clipping (Equation~\ref{eq: gamma_c}) and summation (Equation~\ref{eq: gamma_design}) of the resulting 2048-dimensional vector. This requires only 4,096 FLOP, an overhead of less than 0.0001\% compared to the ~5.42 GFLOPs of a ResNet-50.

    \item $\approach(\sigma)$ or $\approach(m)$. These require computing a new statistic from the $n \times k \times k$ pre-pooling map. This is still negligible. For ResNet-50 (with a $2048 \times 7 \times 7$ map), computing the channel-wise maximum requires ~0.1 MFLOPs, and the standard deviation requires ~0.3 MFLOPs. In the worst-case scenario (standard deviation), the overhead is still less than 0.01\% of the full forward pass. This confirms that $\approach$ is lightweight and efficient post-hoc method.
\end{enumerate}

\section{Generalizability to Distance-Based Methods}
\label{appendix: knn}

    \begin{table*}[ht]
        \centering
        \resizebox{\textwidth}{!}{
        \begin{tabular}{l l  cc cc cc cc cc}
            \toprule
            \multirow{2}{*}{\textbf{Model}} & \multirow{2}{*}{\textbf{Method}} & \multicolumn{2}{c}{SUN} & \multicolumn{2}{c}{Places} & \multicolumn{2}{c}{Texture} & \multicolumn{2}{c}{iNaturalist} & \multicolumn{2}{c}{Average}  \\
            \cmidrule(lr){3-4} \cmidrule(lr){5-6} \cmidrule(lr){7-8} \cmidrule(lr){9-10} \cmidrule(lr){11-12}
            && \textbf{FPR95} $\downarrow$ & \textbf{AUROC} $\uparrow$ & \textbf{FPR95} $\downarrow$ & \textbf{AUROC} $\uparrow$ & \textbf{FPR95} $\downarrow$ & \textbf{AUROC} $\uparrow$ & \textbf{FPR95} $\downarrow$ & \textbf{AUROC} $\uparrow$ & \textbf{FPR95} $\downarrow$ & \textbf{AUROC} $\uparrow$ \\
            \midrule
            \multirow{4}{*}{ResNet-34}  & KNN                     & 88.42 & 91.38 & 88.21 & 90.55 & 31.08 & 98.76 & 85.31 & 93.21 & 73.26 & 93.47 \\
                                        & + $\approach(\mu)$      & \textbf{40.16} & \textbf{97.55} & \textbf{53.81} & \textbf{96.38} & 11.97 & 99.57 & \textbf{32.82} & \textbf{98.45} & \textbf{34.69} & \textbf{97.99} \\
                                        & + $\approach(\sigma)$   & 56.43 & 96.20 & 70.31 & 94.44 &  \textbf{7.77} & \textbf{99.76} & 39.11 & 98.34 & 43.40 & 97.18 \\
                                        & + $\approach(m)$        & 54.22 & 96.30 & 68.95 & 94.48 &  8.30 & 99.72 & 41.16 & 98.18 & 43.16 & 97.17 \\
            \midrule
            \multirow{4}{*}{ResNet-50}  & KNN                     & 79.08 & 94.43 & 82.21 & 93.31 & 16.29 & 99.43 & 78.61 & 95.05 & 64.05 & 95.56 \\
                                        & + $\approach(\mu)$      & \textbf{35.94} & \textbf{98.12} & \textbf{50.41} & \textbf{97.06} & 10.53 & 99.74 & \textbf{27.54} & \textbf{98.92} & \textbf{31.11} & \textbf{98.46} \\
                                        & + $\approach(\sigma)$   & 51.05 & 97.08 & 66.25 & 95.58 &  \textbf{6.88} & \textbf{99.83} & 35.23 & 98.67 & 39.85 & 97.79 \\
                                        & + $\approach(m)$        & 49.86 & 97.09 & 65.58 & 95.56 &  7.23 & 99.81 & 35.74 & 98.65 & 39.60 & 97.78 \\
            \midrule
            \multirow{4}{*}{MobileNet-v2} & KNN                   & 94.24 & 88.46 & 94.29 & 87.77 & 20.48 & 99.31 & 93.16 & 91.46 & 75.54 & 91.75 \\
                                          & + $\approach(\mu)$    & \textbf{53.00} & \textbf{96.68} & \textbf{69.59} & \textbf{94.97} & 12.48 & 99.61 & \textbf{52.00} & 97.15 & \textbf{46.77} & \textbf{97.10} \\
                                          & + $\approach(\sigma)$ & 62.90 & 95.84 & 77.39 & 93.64 &  \textbf{8.62} & \textbf{99.77} & 54.50 & \textbf{97.17} & 50.85 & 96.61 \\
                                          & + $\approach(m)$      & 61.41 & 95.88 & 76.40 & 93.66 &  8.88 & 99.76 & 55.39 & 97.14 & 50.52 & 96.61 \\
            \midrule
            \multirow{4}{*}{DenseNet-121} & KNN                   & 91.80 & 89.06 & 91.66 & 88.94 & 21.86 & 99.22 & 90.70 & 91.23 & 74.01 & 92.11 \\
                                          & + $\approach(\mu)$    & \textbf{54.87} & \textbf{96.78} & \textbf{65.96} & \textbf{95.28} & 16.45 & 99.57 & \textbf{36.91} & \textbf{98.43} & \textbf{43.55} & \textbf{97.52} \\
                                          & + $\approach(\sigma)$ & 61.96 & 95.85 & 73.53 & 93.92 & \textbf{14.11} & \textbf{99.64} & 43.79 & 97.97 & 48.35 & 96.85 \\
                                          & + $\approach(m)$      & 61.73 & 95.76 & 73.68 & 93.80 & 14.97 & 99.62 & 45.20 & 97.83 & 48.89 & 96.75 \\
            \bottomrule
            \end{tabular}
            }
            \caption{\textit{Detailed KNN-based OOD detection results for ImageNet benchmarks, using ResNet-34, ResNet-50, MobileNet-v2, and DenseNet-121. All values are percentages and are averaged over four common OOD benchmark datasets: SUN~\cite{sun}, Places~\cite{places365}, Texture~\cite{texture} and iNaturalist~\cite{iNaturalist}. The symbol $\boldsymbol{\downarrow}$ indicates lower values are better; $\boldsymbol{\uparrow}$ indicates larger values are better.}}
            \label{table: detailed_knn_imagenet}
        \end{table*}

A key question for our framework is its generalizability: is $\approach$ merely an enhancement for logit-based methods, or is it a truly general-purpose framework? To answer this, we conducted a targeted study on its synergy with an entirely different family of OOD detectors: distance-based K-Nearest Neighbors (KNN)~\cite{knn}.

\noindent
\textbf{Setup.} Our goal here is not to reproduce a specific, highly-optimized KNN baseline (which often rely on contrastive pre-training~\cite{ssd, knn, ssn} to structure the feature space). Instead, our goal is to test a hypothesis: can $\approach$ boost a generic KNN detector applied to a standard off-the-shelf pre-trained model?

To this end, we use the pre-trained models from our main experiments. Following the standard KNN OOD protocol~\cite{knn}, we build a Faiss~\cite{faiss} index of the (ID) training set's feature vectors. At inference, the baseline score $S_{\text{KNN}}(\mathbf{x})$ is the $L_2$ distance to the $k$-th nearest neighbor (we use $k=50$, a standard value from prior work~\cite{knn, ssn}). A high distance indicates an OOD sample.

We integrate $\approach$ by fusing scaling factor $\gamma$ to elastically scale this distance score. As $\gamma$ is high for ID (low distance) and low for OOD (high distance) samples, the signals are anti-correlated. We therefore use the fusion as shown in Equation~\ref{eq: knn}. This pushes ID scores even lower and OOD scores even higher, widening the separation.

    \begin{equation}
        S'_{\text{KNN}}(\mathbf{x}) = S_{\text{KNN}}(\mathbf{x}) / \gamma(\mathbf{x})
        \label{eq: knn}
    \end{equation}

\noindent
\textbf{Results.}  As shown in Table~\ref{table: detailed_knn_cifar}, $\approach$ provides an consistent improvement over the standard KNN baseline across CIFAR and ImageNet benchmark. For instance, elastically scaling using scaling factor derived from max statistics $\approach(m)$ we observed:

\begin{itemize}
    \item On CIFAR-10, $\approach$ reduces the FPR95 by 49.64\% for ResNet-18 (from 31.02\% to 15.62\%) and by 36.54\% for DenseNet-101 (from 13.08\% to 8.30\%).
    \item On CIFAR-100, $\approach$ reduces the FPR95 by 43.84\% for ResNet-18 (from 66.81\% to 37.52\%) and by 23.61\% for DenseNet-101 (from 41.97\% to 32.06\%). 
\end{itemize}

Similarly, in Table~\ref{table: detailed_knn_imagenet}, we can see an consistent improvement across the model over standard KNN baselines across all tested models on ImageNet-1k. For instance, we observe $\approach(\mu)$ reduces the FPR95 by 52.64\%, 52.13\%, 38.08\% and 41.16\% for ResNet-34, ResNet-50, MobileNet-v2, and DenseNet-121 respectively.

\noindent
\textbf{Discussion.} These performance boost demonstrate that $\approach$ is a general-purpose framework. It successfully modulates a distance-based score on a standard cross-entropy trained model, proving its utility without requiring specialized training. The discriminative signal from scaling factor $\gamma$ provides additional complementary information captured by both logit-based and distance-based methods, making it a powerful, ``plug-and-play" enhancer for diverse OOD detection paradigms.

While this principle could be extended to other families, such as gradient-based methods like GradOrth~\cite{gradorth}, we note that integrating with such methods requires substantial, non-trivial engineering to reproduce their codebases and is beyond our current scope. We therefore leave this as a promising direction for future work. Finally, we note that our evaluation omits a direct comparison to Mahalanobis~\citep{maha_distance}. This follows the precedent set by recent works~\cite{ReAct,DICE,ASH}, which has shown it to be computationally expensive while offering limiting performance on these benchmarks.

    \begin{landscape}
        \begin{table*}[ht]
        \centering
        \resizebox{\linewidth}{!}{
        \begin{tabular}{ l l l  cc cc cc cc cc cc cc }
            \toprule
             \multirow{2}{*}{\textbf{Dataset}}& \multirow{2}{*}{\textbf{Model}} & \multirow{2}{*}{\textbf{Method}} & \multicolumn{2}{c}{\textbf{SVHN}} & \multicolumn{2}{c}{\textbf{Place365}} & \multicolumn{2}{c}{\textbf{iSUN}} & \multicolumn{2}{c}{\textbf{Textures}} & \multicolumn{2}{c}{\textbf{LSUN-c}}  & \multicolumn{2}{c}{\textbf{LSUN-r}} & \multicolumn{2}{c}{\textbf{Average}}  \\
            \cmidrule(lr){4-5} \cmidrule(lr){6-7} \cmidrule(lr){8-9} \cmidrule(lr){10-11} \cmidrule(lr){12-13} \cmidrule(lr){14-15} \cmidrule(lr){16-17}
            &&& \textbf{FPR95} $\downarrow$ & \textbf{AUROC} $\uparrow$ & \textbf{FPR95} $\downarrow$ & \textbf{AUROC} $\uparrow$ & \textbf{FPR95} $\downarrow$ & \textbf{AUROC} $\uparrow$ & \textbf{FPR95} $\downarrow$ & \textbf{AUROC} $\uparrow$ & \textbf{FPR95} $\downarrow$ & \textbf{AUROC} $\uparrow$ & \textbf{FPR95} $\downarrow$ & \textbf{AUROC} $\uparrow$ & \textbf{FPR95} $\downarrow$ & \textbf{AUROC} $\uparrow$ \\
            \midrule
            \multirow{8}{*}{CIFAR-10}& \multirow{4}{*}{ResNet-18}   & KNN                      & 13.72 & 97.96 & 49.67 & 89.77 & 38.52 & 93.94 & 29.42 & 96.82 & 16.25 & 97.61 & 38.55 & 93.92 & 31.02 & 95.00 \\
                                                                    && + $\approach(\mu)$      &  9.28 & 98.51 & 52.18 & 90.16 & 31.49 & 95.81 & 25.23 & 97.67 &  4.63 & 99.20 & 30.42 & 95.76 & 25.54 & 96.18 \\
                                                                    && + $\approach(\sigma)$   &  4.83 & 99.16 & 41.62 & 91.80 & 19.54 & 97.34 & 14.45 & 98.58 &  \textbf{2.45} & 99.55 & 18.32 & 97.26 & 16.87 & 97.28 \\
                                                                    && + $\approach(m)$        &  \textbf{4.54} & \textbf{99.20} & \textbf{39.88} & \textbf{92.09} & \textbf{17.24} & \textbf{97.62} & \textbf{13.39} & \textbf{98.69} &  2.48 & \textbf{99.56} & \textbf{16.20} & \textbf{97.54} & \textbf{15.62} & \textbf{97.45} \\

            \cmidrule(lr){2-17}
            &\multirow{4}{*}{DenseNet-101}   & KNN                      & 1.51 & 99.67 & 41.83 & 90.26 & 6.89 & 98.95 & 14.36 & 98.59 & 5.59 & 98.95 & 8.28 & 98.63 & 13.08 & 97.51 \\
                                            && + $\approach(\mu)$       & 0.96 & 99.74 & 41.61 & 90.67 & 2.31 & 99.55 &  8.81 & 99.09 & \textbf{0.65} & \textbf{99.84} & 2.57 & 99.44 &  9.49 & 98.05 \\
                                            && + $\approach(\sigma)$    & \textbf{0.82} & 99.79 & 38.70 & 91.14 & \textbf{1.92} & 99.58 &  6.33 & 99.31 & 0.93 & 99.77 & 2.30 & 99.47 &  8.50 & 98.18 \\
                                            && + $\approach(m)$         & 0.84 & \textbf{99.80} & \textbf{37.51} & \textbf{91.42} & \textbf{1.92} & \textbf{99.60} &  \textbf{6.15} & \textbf{99.35} & 1.14 & 99.75 & \textbf{2.21} & \textbf{99.49} &  \textbf{8.30} & \textbf{98.23} \\

            \midrule
            \multirow{8}{*}{CIFAR-100}& \multirow{4}{*}{ResNet-18}   & KNN                      & 60.35 & 92.40 & \textbf{86.34} & \textbf{71.63} & 69.50 & 82.53 & 40.78 & 94.58 & 76.15 & 77.29 & 67.73 & 81.96 & 66.81 & 83.40 \\
                                                                    && + $\approach(\mu)$       & 34.89 & 95.51 & 90.25 & 69.69 & 65.40 & 86.66 & 40.18 & 94.72 & 21.31 & 95.50 & 64.59 & 85.78 & 52.77 & 87.98 \\
                                                                    && + $\approach(\sigma)$    &  8.33 & 98.56 & 89.69 & 70.39 & 46.61 & 91.39 & 22.22 & 97.19 & 14.48 & 96.92 & 48.38 & \textbf{90.35} & 38.28 & 90.80 \\
                                                                    && + $\approach(m)$         &  \textbf{7.67} & \textbf{98.62} & 88.64 & 71.25 & \textbf{45.87} & \textbf{91.42} & \textbf{21.92} & \textbf{97.23} & \textbf{12.95} & \textbf{97.24} & \textbf{48.05} & 90.20 & \textbf{37.52} & \textbf{90.99} \\

            \cmidrule(lr){2-17}
            &\multirow{4}{*}{DenseNet-101}   & KNN                      & 15.24 & 97.22 & \textbf{88.30} & 67.69 & 43.04 & 90.41 & 26.79 & 96.35 & 35.63 & 88.68 & 42.84 & 89.42 & 41.97 & 88.29 \\
                                            && + $\approach(\mu)$       & 11.07 & 98.20 & 89.38 & 69.47 & 44.03 & 93.20 & 23.65 & 96.70 &  \textbf{3.00} & \textbf{99.39} & 47.41 & 92.12 & 36.42 & 91.51 \\
                                            && + $\approach(\sigma)$    &  8.79 & \textbf{98.53} & 89.12 & 70.06 & 36.09 & 94.74 & 17.41 & 97.63 &  5.96 & 98.98 & 39.14 & 93.88 & 32.75 & 92.30 \\
                                            && + $\approach(m)$         &  \textbf{8.45} & \textbf{98.53} & 88.46 & \textbf{70.71} & \textbf{34.32} & \textbf{95.01} & \textbf{16.26} & \textbf{97.75} &  7.34 & 98.76 & \textbf{37.51} & \textbf{94.15} & \textbf{32.06} & \textbf{92.48} \\

            \bottomrule
            \end{tabular}
            }
            \caption{\textit{Detailed KNN-based OOD detection results for the CIFAR-10 and CIFAR-100 benchmarks, using ResNet-18 and DenseNet-101. Results are evaluated against six common OOD datasets: SVHN~\cite{svhn}, Places365~\cite{places365}, iSUN~\cite{isun}, Textures~\cite{texture}, LSUN-crop~\cite{lsun}, and LSUN-resize~\cite{lsun}. $\boldsymbol{\downarrow}$ indicates lower values are better and $\boldsymbol{\uparrow}$ indicates larger values are better.}}
            \label{table: detailed_knn_cifar}
        \end{table*}
    \end{landscape}

\section{Detailed OOD Detection Performance}
\label{appendix: detailed results}

\subsection{Near-OOD Evaluation}
\label{appendix: near_ood_results}

    We also evaluate $\approach$ on the challenging near-OOD task of distinguishing CIFAR-10 from CIFAR-100, a commonly used setup used in prior work~\citep{ssn}. As shown in Table~\ref{table: cifar_near_ood}, while the separation is inherently more difficult for all methods, $\approach(m)$ still provides a marginal performance improvement over the baselines when applied in tandem, demonstrating its robustness even in fine-grained detection scenarios. For instance, with ResNet-18, $\approach(m) + \texttt{ReAct}$ reduces the FPR95 from 52.04\% to 49.62\%, an improvement of 4.65\%. We attribute the limited improvement in near-OOD settings to the high similarity of the learned penultimate representations. A valuable direction for future research is to design a suitable scaling factor $\gamma$ in near-ood evaluation settings. %

        \begin{table}[ht]
        \centering
        \resizebox{0.75\columnwidth}{!}{
        \begin{tabular}{l c c  c}
    
        \toprule
        \multirow{1}{*}{\textbf{Method}} & \textbf{FPR95} $\downarrow$ & \textbf{AUROC} $\uparrow$ \\
        \midrule
        MSP                     & 65.85	& 88.17   \\
        + $\approach(\mu)$      & 68.10	& 71.94   \\
        + $\approach(\sigma)$   & 60.88	& 86.55   \\
        + $\approach(m)$        & 60.07	& 86.28   \\
        \midrule
        Energy                  & 52.32	& 90.14   \\
        + $\approach(\mu)$      & 52.94	& 89.82   \\
        + $\approach(\sigma)$   & 50.93	& 90.47   \\
        + $\approach(m)$        & 50.98	& 90.38   \\
        \midrule
        ReAct                   & 52.04	& 90.42   \\
        + $\approach(\mu)$      & 52.63	& 89.68   \\
        + $\approach(\sigma)$   & 50.08	& 90.47   \\
        + $\approach(m)$        & \textbf{49.62}	& \textbf{90.52}   \\
        \midrule
        DICE                    & 56.56	& 89.12   \\
        + $\approach(\mu)$      & 65.00	& 86.87   \\
        + $\approach(\sigma)$   & 57.03	& 88.69   \\
        + $\approach(m)$        & 56.89	& 88.79   \\
        \midrule
        ReAct+DICE              & 55.94	& 89.50   \\
        + $\approach(\mu)$      & 69.11	& 84.33   \\
        + $\approach(\sigma)$   & 59.34	& 87.99   \\
        + $\approach(m)$        & 58.09	& 87.99   \\
        \midrule
        ASH                     & 57.14	& 87.60   \\
        + $\approach(\mu)$      & 64.15	& 84.00   \\
        + $\approach(\sigma)$   & 56.86	& 87.38   \\
        + $\approach(m)$        & 56.33	& 87.40   \\
        \midrule
        SCALE                   & 55.58	& 88.60   \\
        + $\approach(\mu)$      & 60.96	& 85.47   \\
        + $\approach(\sigma)$   & 54.93	& 88.15   \\
        + $\approach(m)$        & 54.34	& 88.17   \\
        \bottomrule
        \end{tabular}
        }
        \caption{\textit{Near-OOD detection evaluation using ResNet-18. CIFAR-10 is ID dataset and CIFAR-100 is OOD dataset. The symbol $\boldsymbol{\downarrow}$ indicates lower values are better;  $\boldsymbol{\uparrow}$ indicates higher values are better.}}
        \label{table: cifar_near_ood}
    \end{table}

\subsection{ImageNet Evaluation.}
\label{appendix: detailed_imagenet_benchmark}

\noindent
\textbf{Evaluation.} Table~\ref{table: detailed_imagenet_benchmark} showcases detailed evaluation on ImageNet benchmark, using broad pre-trained model , ResNet-34, ResNet-50, MobileNet-v2, and DenseNet-121 for which we re-evaluated all baselines to ensure a fair comparison. Since results for ResNet-34 and DenseNet-121 were not available in the original publications of foundational baselines (e.g., ReAct, DICE, ASH, SCALE), we rigorously re-evaluated these methods ourselves. To ensure a fair and direct comparison, we carefully followed the hyperparameter selection protocols described in their respective papers (Appendix~\ref{appendix: reproducibility}).

\noindent
\textbf{Discussion.} In the Table~\ref{table: detailed_imagenet_benchmark}, $\approach$ shows limited performance improvement on the SUN and Places datasets, particularly when using the MobileNet-v2 backbone. We empirically observed that this is due to a high degree of overlap between the distribution of the scaling factor, \(\gamma\), for these datasets and for the in-distribution ImageNet-1k data. This overlap can be attributed to the high scene similarity between these datasets, a challenge previously identified by ViM~\cite{ViM}.
    
To demonstrate this, the Figure~\ref{fig:mobilenet-scale-density} presents the distributions of \(\gamma\) for the SUN, Places365, Texture, and iNaturalist datasets, generated using the pre-trained MobileNet-v2 model. A clear pattern emerges: the distributions for the scene-based datasets (SUN, Places365) exhibit a significantly greater overlap with the in-distribution data compared to the more distinct Texture and iNaturalist datasets. This effect is particularly prominent when using the standard deviation $\sigma(\mathbf{x})$ and maximum value $\max(\mathbf{x})$ as information cues. 

For brevity, we omit the \(\gamma\) distribution plots for the ResNet-34 and ResNet-50 backbones, but we confirm they exhibit the same general pattern. However, we also find that the overlap is more prominent for MobileNet-V2 than for ResNet-34, and in turn, more prominent for ResNet-34 than for ResNet-50.

        \begin{figure*}[ht]
    
        \begin{subfigure}{\textwidth}
            \centering
            \includegraphics[width=0.97\textwidth]{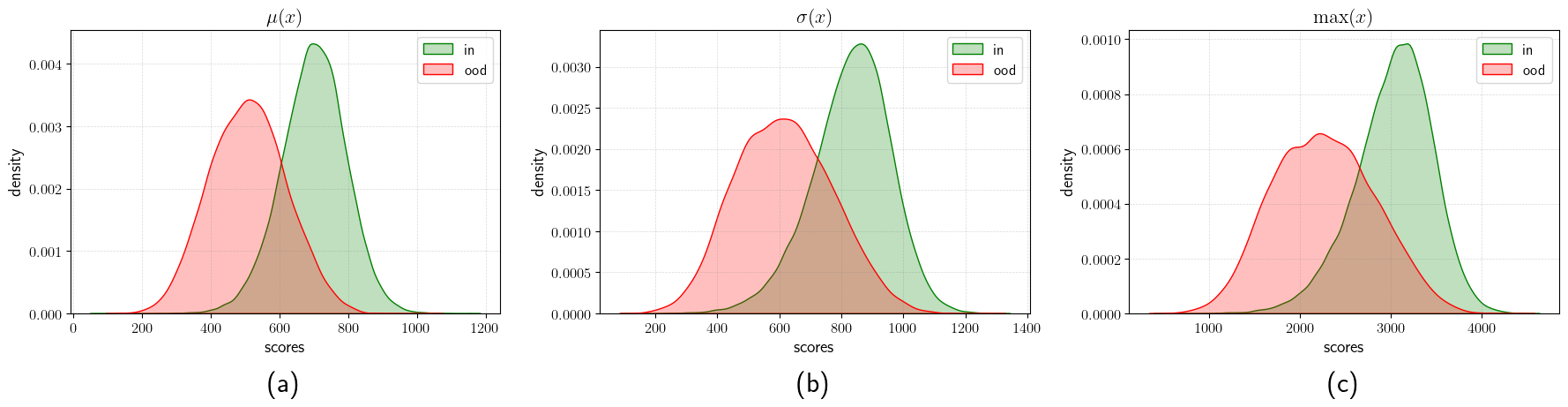}
        \end{subfigure}
    
        \begin{subfigure}{\textwidth}
            \centering
            \includegraphics[width=0.97\textwidth]{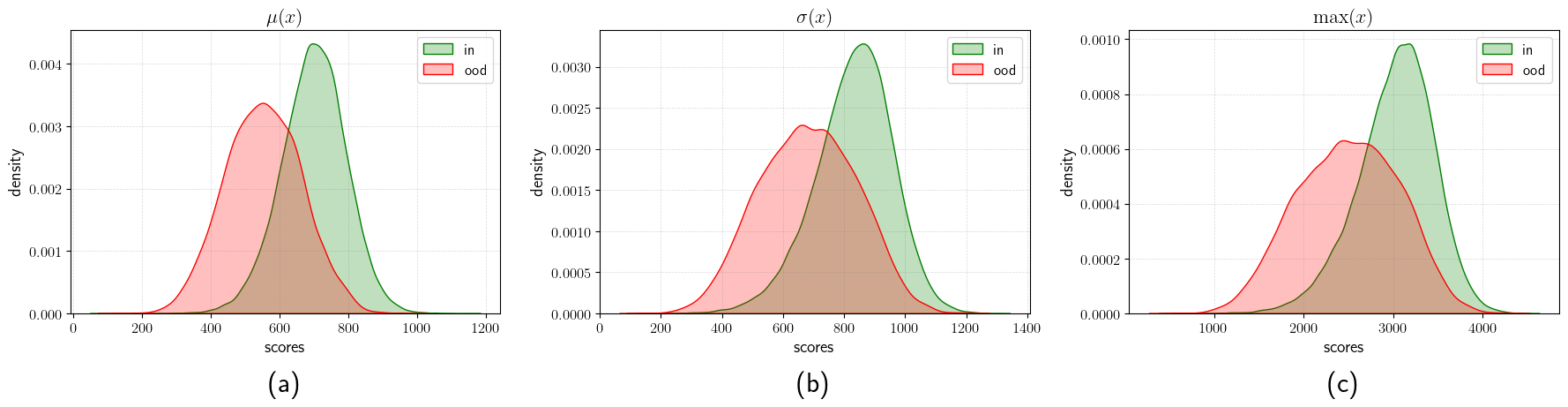}
        \end{subfigure}
    
        \begin{subfigure}{\textwidth}
            \centering
            \includegraphics[width=0.97\textwidth]{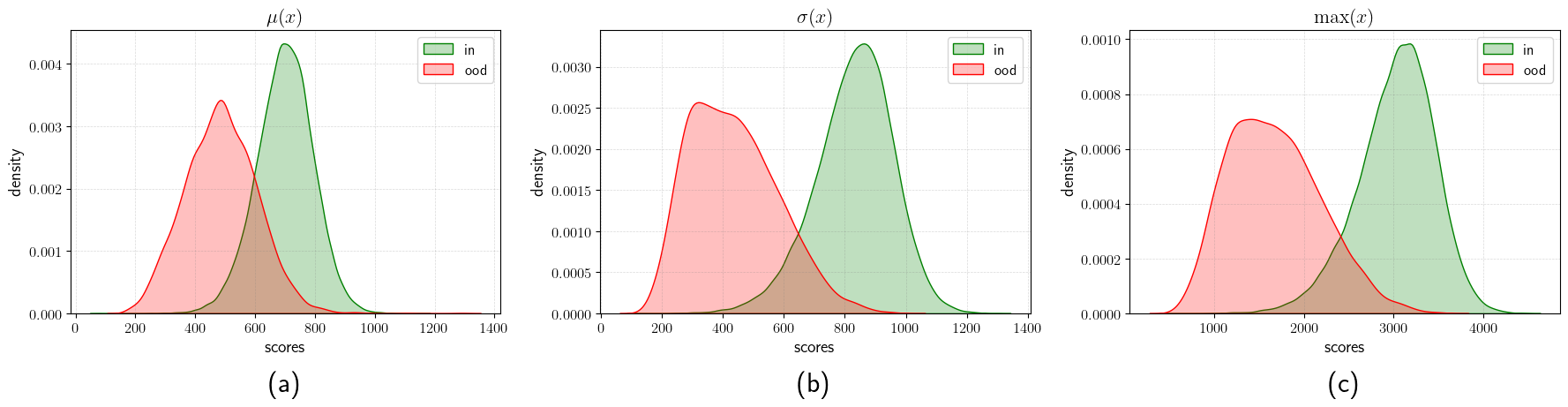}
        \end{subfigure}
    
        \begin{subfigure}{\textwidth}
            \centering
            \includegraphics[width=0.97\textwidth]{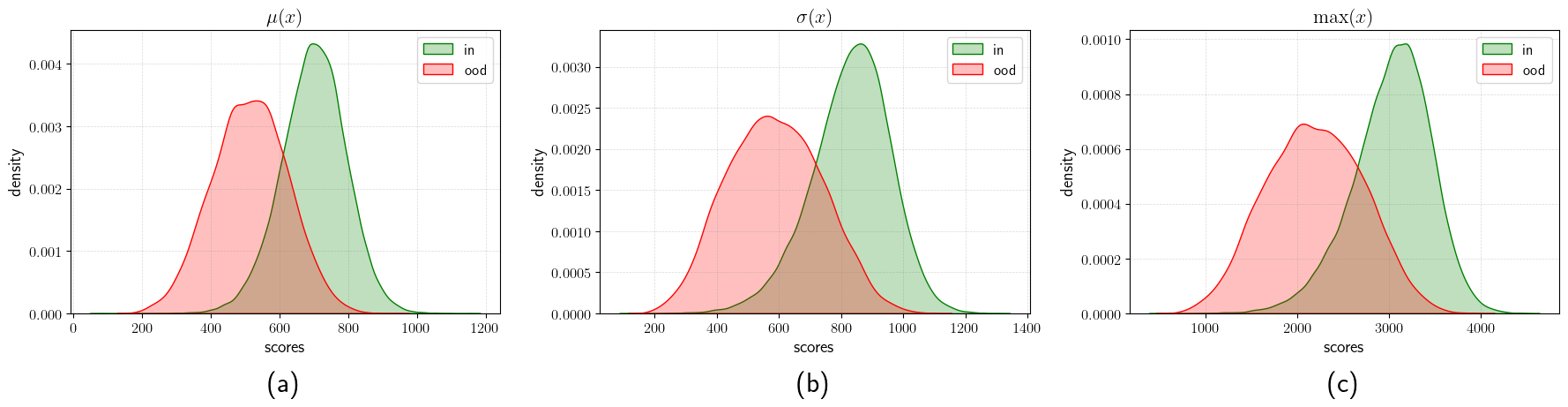}
        \end{subfigure}
    
        \caption {\textit{Distributions of the scaling factor \(\gamma\), derived from the penultimate layer of a MobileNet-V2 model trained on ImageNet-1k. The rows \emph{(top to bottom)} correspond to the OOD datasets: SUN, Places365, Texture, and iNaturalist. The columns (left to right) correspond to the statistical cue used to compute \(\gamma\): (a) mean: $\mu(\mathbf{x})$, (b) standard deviation: $\sigma(\mathbf{x})$, and (c) maximum value: $\max(\mathbf{x})$ ( we used \(\max(\mathbf{x})\) and \(m(\mathbf{x})\) interchangeably). A clear pattern emerges: the distributions for the scene-based datasets (SUN, Places365) exhibit a significantly greater overlap with the in-distribution data compared to the more distinct Texture and iNaturalist datasets. This effect is particularly prominent when using the standard deviation $\sigma(\mathbf{x})$ and maximum value $\max(\mathbf{x})$ as information cues. We observe a similar pattern for ResNet-34 and ResNet-50 backbones. However, we also find that the overlap is more prominent for MobileNet-V2 than for ResNet-34, and in turn, more prominent for ResNet-34 than for ResNet-50.}}
        \label{fig:mobilenet-scale-density}
    \end{figure*}

    \begin{table*}[ht]
        \centering
        \resizebox{\textwidth}{!}{
        \begin{tabular}{l l  cc cc cc cc cc}
            \toprule
            \multirow{2}{*}{\textbf{Model}} & \multirow{2}{*}{\textbf{Method}} & \multicolumn{2}{c}{SUN} & \multicolumn{2}{c}{Places} & \multicolumn{2}{c}{Texture} & \multicolumn{2}{c}{iNaturalist} & \multicolumn{2}{c}{Average}  \\
            \cmidrule(lr){3-4} \cmidrule(lr){5-6} \cmidrule(lr){7-8} \cmidrule(lr){9-10} \cmidrule(lr){11-12}
            && \textbf{FPR95} $\downarrow$ & \textbf{AUROC} $\uparrow$ & \textbf{FPR95} $\downarrow$ & \textbf{AUROC} $\uparrow$ & \textbf{FPR95} $\downarrow$ & \textbf{AUROC} $\uparrow$ & \textbf{FPR95} $\downarrow$ & \textbf{AUROC} $\uparrow$ & \textbf{FPR95} $\downarrow$ & \textbf{AUROC} $\uparrow$ \\
            \midrule
            \multirow{14}{*}{ResNet-34} & MSP        & 72.39 & 79.81 & 73.76 & 79.20 & 69.98 & 79.12 & 59.24 & 86.61 & 68.84 & 81.19 \\
                                        & ODIN       & 59.34 & 86.13 & 64.62 & 84.14 & 51.95 & 87.45 & 47.69 & 90.91 & 55.90 & 87.16 \\
                                        & Energy     & 57.39 & 86.59 & 62.61 & 84.59 & 54.95 & 86.45 & 53.86 & 89.73 & 57.20 & 86.84 \\
                                        & ReAct      & 25.03 & 94.28 & 34.32 & 91.67 & 46.21 & 90.63 & 23.40 & 95.75 & 32.24 & 93.08 \\
                                        & DICE       & 38.03 & 90.20 & 48.40 & 87.18 & 34.72 & 90.24 & 35.34 & 92.22 & 39.12 & 89.96 \\
                                        & ReAct+DICE & 22.33 & 94.79 & 32.85 & 91.84 & 30.39 & 93.21 & 19.42 & 96.13 & 26.25 & 93.99 \\
                                        & ASH        & 36.22 & 91.72 & 47.53 & 88.58 & 14.18 & 97.12 & 19.34 & 96.44 & 29.32 & 93.46 \\
                                        & SCALE      & 32.00 & 92.91 & 42.51 & 90.50 & 16.97 & 96.24 & 16.59 & 96.93 & 27.02 & 94.14 \\
            \cmidrule(lr){2-12}
            \rowcolor[gray]{0.9}        & \textbf{$\approach(\mu)$}                     & 33.46 & 91.85 & 43.78 & 89.39 & 24.86 & 93.39 & 25.60 & 94.99 & 31.92 & 92.41 \\
            \rowcolor[gray]{0.9}        & \textbf{$\approach(\sigma)$}                  & 37.78 & 90.74 & 48.87 & 87.82 & 16.08 & 95.80 & 24.90 & 95.10 & 31.91 & 92.36 \\
            \rowcolor[gray]{0.9}        & \textbf{$\approach(m)$}                       & 36.90 & 90.88 & 48.37 & 87.87 & 16.83 & 95.58 & 25.22 & 95.00 & 31.83 & 92.34 \\
            \rowcolor[gray]{0.9}        & \textbf{$\approach(\mu)+\texttt{ReAct}$}      & \textbf{21.44} & \textbf{95.18} & \textbf{31.74} & \textbf{92.56} & 13.39 & 97.02 & \textbf{12.81} & \textbf{97.47} & \textbf{19.84} & \textbf{95.56} \\
            \rowcolor[gray]{0.9}        & \textbf{$\approach(\sigma)+\texttt{ReAct}$}   & 21.80 & 95.06 & 32.11 & 92.41 & 12.73 & 97.11 & 13.01 & 97.41 & 19.91 & 95.50 \\
            \rowcolor[gray]{0.9}        & \textbf{$\approach(m)+\texttt{ReAct}$}        & 22.03 & 94.99 & 32.58 & 92.31 & \textbf{12.55} & \textbf{97.15} & 13.47 & 97.33 & 20.16 & 95.44 \\
            \midrule
            \multirow{14}{*}{ResNet-50} & MSP        & 68.58 & 81.75 & 71.57 & 80.63 & 66.13 & 80.46 & 52.77 & 88.42 & 64.76 & 82.82 \\
                                        & ODIN	     & 60.15 & 84.59 & 67.89 & 81.78 & 50.23 & 85.62 & 47.66 & 89.66 & 56.48 & 85.41 \\
                                        & Energy     & 58.28 & 86.73 & 65.40 & 84.13 & 52.29 & 86.73 & 53.95 & 90.59 & 57.48 & 87.05 \\
                                        & ReAct      & 23.68 & 94.44 & 33.33 & 91.96 & 46.33 & 90.30 & 19.73 & 96.37 & 30.77 & 93.27 \\
                                        & DICE       & 36.11 & 91.01 & 47.62 & 87.76 & 32.38 & 90.48 & 26.48 & 94.53 & 35.65 & 90.94 \\
                                        & ReAct+DICE & 24.05 & 94.31 & 34.28 & 91.71 & 28.40 & 93.33 & 14.90 & 97.06 & 25.41 & 94.10 \\
                                        & ASH        & 28.01 & 94.02 & 39.84 & 90.98 & 11.95 & 97.60 & 11.52 & 97.87 & 22.83 & 95.12 \\
                                        & SCALE      & 25.78 & 94.54 & 36.86 & 91.96 & 14.56 & 96.75 & 10.37 & 98.02 & 21.89 & 95.32 \\
            \cmidrule(lr){2-12}
            \rowcolor[gray]{0.9}        & \textbf{$\approach(\mu)$}                    & 30.79 & 92.67 & 42.59 & 89.78 & 22.29 & 94.01 & 18.02 & 96.46 & 28.42 & 93.23 \\
            \rowcolor[gray]{0.9}        & \textbf{$\approach(\sigma)$}                 & 35.73 & 91.47 & 48.35 & 88.04 & 15.85 & 95.94 & 19.05 & 96.21 & 29.75 & 92.92 \\
            \rowcolor[gray]{0.9}        & \textbf{$\approach(m)$}                      & 35.79 & 91.40 & 48.68 & 87.82 & 16.08 & 95.88 & 19.00 & 96.18 & 29.89 & 92.82 \\
            \rowcolor[gray]{0.9}        & \textbf{$\approach(\mu)+\texttt{ReAct}$}     & \textbf{18.46} & \textbf{95.82} & \textbf{28.98} & \textbf{93.31} & 12.11 & 97.38 &  \textbf{8.54} & \textbf{98.19} & \textbf{17.02} & \textbf{96.18} \\
            \rowcolor[gray]{0.9}        & \textbf{$\approach(\sigma)+\texttt{ReAct}$}  & 19.13 & 95.61 & 29.58 & 93.04 & \textbf{12.04} & \textbf{97.38} &  9.10 & 98.06 & 17.46 & 96.02 \\
            \rowcolor[gray]{0.9}        & \textbf{$\approach(m)+\texttt{ReAct}$}       & 19.02 & 95.52 & 29.77 & 92.92 & 12.06 & 97.31 &  9.71 & 97.97 & 17.64 & 95.93 \\
            \midrule
            \multirow{14}{*}{MobileNet-v2} & MSP     & 74.20 & 78.88 & 76.89 & 78.14 & 70.99 & 78.95 & 59.86 & 86.72 & 70.49 & 80.67 \\
                                        & ODIN       & 54.07 & 85.88 & 57.36 & 84.71 & 49.96 & 85.03 & 55.39 & 87.62 & 54.20 & 85.81 \\
                                        & Energy     & 59.36 & 86.24 & 66.27 & 83.21 & 54.54 & 86.58 & 55.31 & 90.34 & 58.87 & 86.59 \\
                                        & ReAct      & 52.46 & 87.26 & 59.89 & 84.07 & 40.25 & 90.96 & 43.05 & 92.72 & 48.91 & 88.75 \\
                                        & DICE       & 37.84 & 90.81 & 52.35 & 86.17 & 32.57 & 91.46 & 41.53 & 91.30 & 41.07 & 89.94 \\
                                        & ReAct+DICE & 30.60 & 92.98 & 45.93 & 88.29 & 16.03 & 96.33 & 31.68 & 93.76 & 31.06 & 92.84  \\
                                        & ASH        & 43.63 & 90.02 & 58.85 & 84.73 & 13.12 & 97.10 & 39.13 & 91.94 & 38.68 & 90.95 \\
                                        & SCALE      & 38.74 & 91.64 & 53.49 & 87.34 & 14.79 & 96.65 & 30.09 & 94.46 & 34.28 & 92.52 \\
            \cmidrule(lr){2-12}
            \rowcolor[gray]{0.9}        & \textbf{$\approach(\mu)$}                     & 37.74 & 91.43 & 52.21 & 87.33 & 23.42 & 94.17 & 33.47 & 93.84 & 36.71 & 91.69 \\
            \rowcolor[gray]{0.9}        & \textbf{$\approach(\sigma)$}                  & 38.20 & 91.26 & 53.04 & 86.84 & 14.02 & 96.37 & 29.25 & 94.63 & 33.63 & 92.27 \\
            \rowcolor[gray]{0.9}        & \textbf{$\approach(m)$}                       & 37.41 & 91.37 & 52.24 & 86.89 & 14.18 & 96.35 & 28.78 & 94.70 & 33.15 & 92.33 \\
            \rowcolor[gray]{0.9}        & \textbf{$\approach(\mu)+\texttt{ReAct}$}      & \textbf{32.82} & \textbf{92.93} & \textbf{48.62} & \textbf{88.59} & 13.60 & 96.83 & 28.19 & 94.89 & 30.81 & 93.31 \\
            \rowcolor[gray]{0.9}        & \textbf{$\approach(\sigma)+\texttt{ReAct}$}   & 37.53 & 91.22 & 51.32 & 87.19 & 10.18 & 97.31 & 27.21 & 95.12 & 31.56 & 92.71 \\
            \rowcolor[gray]{0.9}        & \textbf{$\approach(m)+\texttt{ReAct}$}        & 34.77 & 92.26 & 49.77 & 88.06 &  \textbf{8.69} & \textbf{97.76} & \textbf{24.08} & \textbf{95.66} & \textbf{29.33} & \textbf{93.43} \\
            \midrule
            \multirow{14}{*}{DenseNet-121} & MSP        & 67.49 & 81.41 & 69.53 & 80.95 & 67.23 & 79.18 & 49.58 & 89.05 & 63.46 & 82.65 \\
                                           & ODIN       & 54.13 & 86.33 & 60.39 & 84.14 & 50.82 & 85.81 & 32.47 & 93.66 & 49.45 & 87.48 \\
                                           & Energy     & 52.51 & 87.27 & 58.24 & 85.05 & 52.22 & 85.42 & 39.75 & 92.66 & 50.68 & 87.60 \\
                                           & ReAct      & 41.06 & 91.23 & 48.48 & 88.17 & 33.46 & 93.65 & 20.98 & 96.04 & 35.99 & 92.27 \\
                                           & DICE       & 38.75 & 89.91 & 49.29 & 86.24 & 40.85 & 88.09 & 25.78 & 94.37 & 38.67 & 89.65 \\
                                           & ReAct+DICE & 31.36 & 92.99 & 43.91 & 89.11 & 24.38 & 95.14 & 17.68 & 96.44 & 29.33 & 93.42 \\
                                           & ASH        & 37.20 & 91.51 & 46.54 & 88.79 & 21.76 & 95.04 & 15.50 & 97.03 & 30.25 & 93.09 \\
                                           & SCALE      & 33.85 & 92.16 & 42.92 & 89.62 & 22.27 & 94.63 & \textbf{13.21} & \textbf{97.40} & 28.06 & 93.45 \\
            \cmidrule(lr){2-12}
            \rowcolor[gray]{0.9}        & \textbf{$\approach(\mu)$}                     & 33.24 & 91.84 & 42.94 & 89.01 & 25.59 & 93.29 & 16.41 & 96.69 & 29.54 & 92.71 \\
            \rowcolor[gray]{0.9}        & \textbf{$\approach(\sigma)$}                  & 34.12 & 91.57 & 44.34 & 88.53 & 21.06 & 94.52 & 16.95 & 96.57 & 29.12 & 92.80 \\
            \rowcolor[gray]{0.9}        & \textbf{$\approach(m)$}                       & 34.29 & 91.47 & 44.74 & 88.35 & 21.26 & 94.43 & 17.50 & 96.46 & 29.45 & 92.68 \\
            \rowcolor[gray]{0.9}        & \textbf{$\approach(\mu)+\texttt{ReAct}$}      & 31.58 & 93.41 & 42.77 & 90.30 & 12.71 & 97.44 & 14.66 & 97.11 & 25.43 & 94.56 \\
            \rowcolor[gray]{0.9}        & \textbf{$\approach(\sigma)+\texttt{ReAct}$}   & \textbf{30.04} & \textbf{93.44} & \textbf{41.50} & \textbf{90.24} & \textbf{11.37} & \textbf{97.61} & 14.12 & 97.16 & \textbf{24.26} & \textbf{94.61} \\
            \rowcolor[gray]{0.9}        & \textbf{$\approach(m)+\texttt{ReAct}$}        & 30.25 & 93.37 & 41.61 & 90.13 & 11.74 & 97.52 & 14.48 & 97.09 & 24.52 & 94.53 \\
            \bottomrule
            \end{tabular}
            }
            \caption{\textit{Detailed OOD detection results on ImageNet benchmarks. All values are percentages and are averaged over four common OOD benchmark datasets: SUN~\cite{sun}, Places~\cite{places365}, Texture~\cite{texture} and iNaturalist~\cite{iNaturalist}. The symbol $\boldsymbol{\downarrow}$ indicates lower values are better; $\boldsymbol{\uparrow}$ indicates larger values are better.}}
            \label{table: detailed_imagenet_benchmark}
        \end{table*}

\subsection{CIFAR Evaluation}
\label{appendix: detailed_cifar_benchmark}

\noindent
\textbf{Evaluation.} We present detailed performance results across six OOD test datasets for models: ResNet-18, and DenseNet-101 trained on CIFAR-10 and CIFAR-100, in Table~\ref{table:detailed_results_cifar10} and Table~\ref{table:detailed_results_cifar100}, respectively.

\noindent
\textbf{Discussion.} As shown in Table~\ref{table:detailed_results_cifar100}, $\approach$ yields limited improvement on the Places365 dataset for models trained on CIFAR-100. We empirically attribute this to a high degree of overlap between the scaling factor \(\gamma\) distributions of the in-distribution (CIFAR-100) and Places365 samples, as shown in Figures~\ref{fig:resnet-18_cifar100_scale_density_plot} and \ref{fig:densenet-101_cifar100_scale_density_plot}. This pattern is consistent with our analysis on the ImageNet benchmark, where similar overlaps led to reduced performance. This case illustrates a key requirement for our method: its success hinges on a significant distributional separation of $\gamma$, between ID/OOD data.

    \begin{figure*}[ht]
        \centering
        \includegraphics[width=0.97\textwidth]{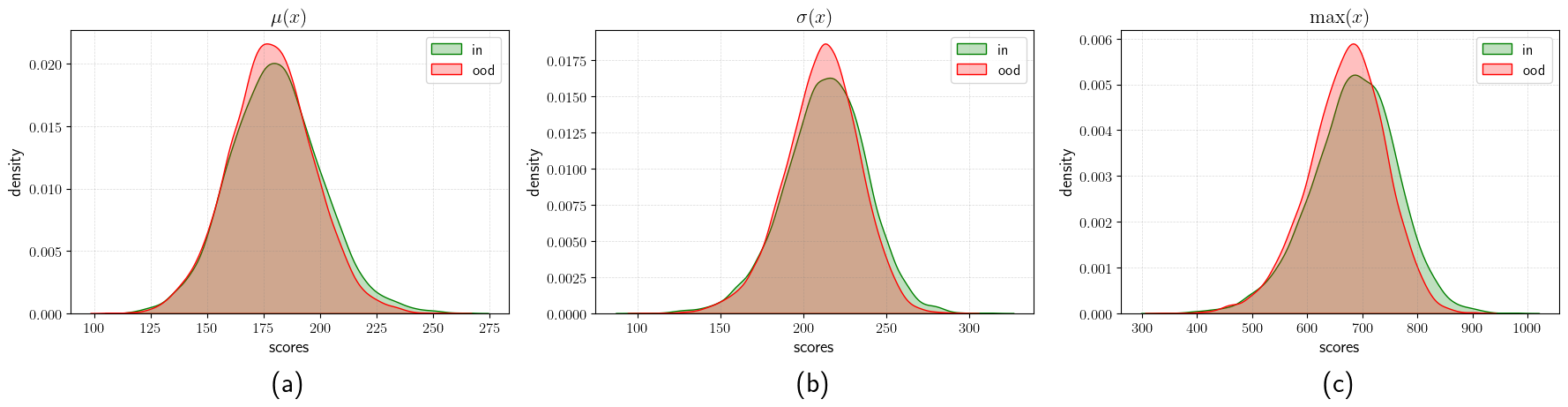}
        \caption{\textit{Distribution of scaling factor \(\gamma\) from the penultimate layer of a ResNet-18 trained on CIFAR-100, evaluated with Places365 as the OOD dataset. The scales shows high overlap between ID and OOD samples. \emph{Left to right}: (a) $\mu(\mathbf{x})$: mean, (b) $\sigma(\mathbf{x})$: standard deviation, (c) $\max(\mathbf{x})$: max }}
        \label{fig:resnet-18_cifar100_scale_density_plot}
    \end{figure*}

    \begin{figure*}[ht]
        \centering
        \includegraphics[width=0.97\textwidth]{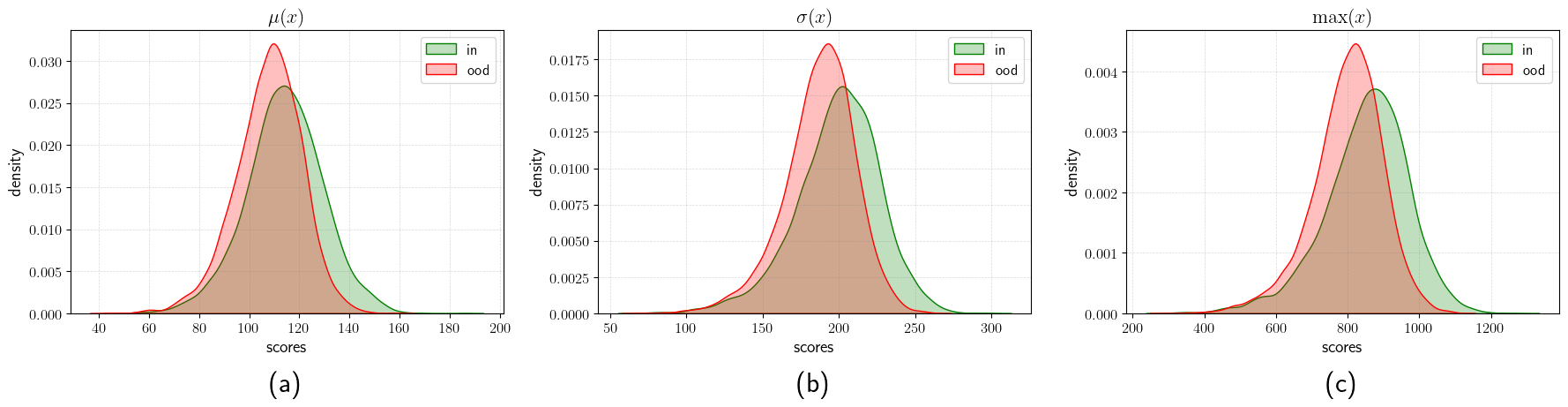}
        \caption{\textit{Distribution of scaling factor \(\gamma\) from the penultimate layer of a DenseNet-101 trained on CIFAR-100, evaluated with Places365 as the OOD dataset. The scales shows high overlap between ID and OOD samples. \emph{Left to right}: (a) $\mu(\mathbf{x})$: mean, (b) $\sigma(\mathbf{x})$: standard deviation, (c) $\max(\mathbf{x})$: max }}
        \label{fig:densenet-101_cifar100_scale_density_plot}
    \end{figure*}

    \begin{landscape}
        \begin{table*}[ht]
        \centering
        \resizebox{\linewidth}{!}{
        \begin{tabular}{ l l  cc cc cc cc cc cc cc }
            \toprule
             \multirow{2}{*}{\textbf{Model}} & \multirow{2}{*}{\textbf{Method}} & \multicolumn{2}{c}{\textbf{SVHN}} & \multicolumn{2}{c}{\textbf{Place365}} & \multicolumn{2}{c}{\textbf{iSUN}} & \multicolumn{2}{c}{\textbf{Textures}} & \multicolumn{2}{c}{\textbf{LSUN-c}}  & \multicolumn{2}{c}{\textbf{LSUN-r}} & \multicolumn{2}{c}{\textbf{Average}}  \\
            \cmidrule(lr){3-4} \cmidrule(lr){5-6} \cmidrule(lr){7-8} \cmidrule(lr){9-10} \cmidrule(lr){11-12} \cmidrule(lr){13-14} \cmidrule(lr){15-16}
            && \textbf{FPR95} $\downarrow$ & \textbf{AUROC} $\uparrow$ & \textbf{FPR95} $\downarrow$ & \textbf{AUROC} $\uparrow$ & \textbf{FPR95} $\downarrow$ & \textbf{AUROC} $\uparrow$ & \textbf{FPR95} $\downarrow$ & \textbf{AUROC} $\uparrow$ & \textbf{FPR95} $\downarrow$ & \textbf{AUROC} $\uparrow$ & \textbf{FPR95} $\downarrow$ & \textbf{AUROC} $\uparrow$ & \textbf{FPR95} $\downarrow$ & \textbf{AUROC} $\uparrow$ \\
            \midrule
            \multirow{10}{*}{ResNet-18} & MSP	      & 60.39 & 92.40 & 63.69 & 88.37 & 56.74 & 91.32 & 62.66 & 90.10 & 51.87 & 93.64 & 54.63 & 91.87 & 58.33 & 91.28 \\
                                        & ODIN        & 35.96 & 94.70 & 41.11 & 92.06 & 23.36 & 96.56 & 46.74 & 91.97 &  6.66 & 98.71 & 20.04 & 96.93 & 28.98 & 95.16 \\
                                        & Energy	  & 44.32 & 94.04 & 41.43 & 91.72 & 35.22 & 94.70 & 50.30 & 91.11 &  9.77 & 98.19 & 31.97 & 95.26 & 35.50 & 94.17 \\
                                        & ReAct	      & 42.31 & 94.12 & 40.70 & 92.25 & 23.07 & 96.37 & 40.44 & 93.69 & 12.27 & 97.90 & 19.78 & 96.80 & 29.76 & 95.19 \\
                                        & DICE	      & 17.60 & 97.09 & 46.14 & 90.66 & 39.08 & 94.32 & 44.65 & 91.80 &  1.90 & 99.57 & 36.52 & 94.70 & 30.98 & 94.69 \\
                                        & ReAct+DICE  & 11.05 & 98.07 & 47.53 & 91.14 & 17.19 & 97.04 & 24.33 & 95.91 &  1.56 & 99.66 & 16.24 & 97.19 & 19.65 & 96.50 \\
                                        & ASH	      &  6.24 & 98.80 & 53.83 & 88.05 & 21.61 & 96.44 & 21.81 & 96.41 &  1.94 & 99.52 & 20.31 & 96.49 & 20.96 & 95.95 \\
                                        & SCALE       &  7.73 & 98.54 & 50.51 & 89.81 & 21.43 & 96.62 & 22.29 & 96.27 &  4.18 & 99.18 & 20.17 & 96.75 & 21.05 & 96.19 \\
            \cmidrule(lr){2-16}
            \rowcolor[gray]{0.9}        & \textbf{$\approach(\mu)$}	    & 15.73 & 97.32 & 43.65 & 91.25 & 26.26 & 96.08 & 35.98 & 94.25 & 3.20 & 99.26 & 24.26 & 96.30 & 24.85 & 95.74 \\
            \rowcolor[gray]{0.9}        & \textbf{$\approach(\sigma)$}  & 10.33 & 98.13 & 37.74 & 92.68 & 16.90 & 97.32 & 24.31 & 96.16 & 1.38 & 99.63 & 15.64 & 97.41 & 17.72 & 96.89 \\
            \rowcolor[gray]{0.9}        & \textbf{$\approach(m)$}	    &  9.93 & 98.24 & 36.59 & 92.97 & 14.89 & 97.61 & 23.01 & 96.43 & 1.31 & 99.65 & 13.80 & 97.68 & 16.59 & 97.10 \\
            \rowcolor[gray]{0.9}        & \textbf{$\approach(\mu)+\texttt{ReAct}$}	    & 14.37 & 97.48 & 43.18 & 91.46 & 16.71 & 97.22 & 25.04 & 95.82 & 4.26 & 99.13 & 15.70 & 97.36 & 19.88 & 96.41 \\
            \rowcolor[gray]{0.9}        & \textbf{$\approach(\sigma)+\texttt{ReAct}$}   &  9.38 & 98.29 & 36.51 & 93.10 & 10.82 & 98.10 & 16.86 & 97.27 & 1.57 & 99.61 & 10.38 & 98.14 & 14.25 & 97.42 \\
            \rowcolor[gray]{0.9}        & \textbf{$\approach(m)+\texttt{ReAct}$}	    &  8.86 & 98.39 & 35.04 & 93.38 &  9.08 & 98.32 & 15.64 & 97.48 & 1.52 & 99.63 &  9.00 & 98.35 & 13.19 & 97.59 \\
            \midrule
            \multirow{14}{*}{DenseNet-101} & MSP	     & 64.76 & 88.33 & 60.30 & 88.55 & 33.57 & 95.41 & 56.67 & 90.17 & 23.41 & 96.75 & 33.87 & 95.37 & 45.43 & 92.43 \\
                                           & ODIN        & 33.09 & 94.41 & 36.68 & 92.34 &  3.22 & 99.20 & 38.49 & 91.61 &  1.84 & 99.53 &  2.89 & 99.28 & 19.37 & 96.06 \\
                                           & Energy	  & 37.91 & 93.59 & 36.42 & 92.38 &  7.33 & 98.27 & 43.87 & 90.48 &  1.95 & 99.47 &  6.97 & 98.38 & 22.41 & 95.43 \\
                                           & ReAct	      & 23.18 & 96.28 & 33.96 & 92.97 &  5.56 & 98.49 & 32.23 & 93.98 &  2.47 & 99.33 &  5.37 & 98.59 & 17.13 & 96.61 \\
                                           & DICE	      & 16.66 & 96.98 & 37.59 & 92.04 &  2.31 & 99.42 & 27.98 & 92.71 &  0.15 & 99.94 &  2.44 & 99.36 & 14.52 & 96.74 \\
                                           & ReAct+DICE  &  4.60 & 99.02 & 35.94 & 92.91 &  1.78 & 99.51 & 17.07 & 96.78 &  0.12 & 99.95 &  2.02 & 99.47 & 10.26 & 97.94 \\
                                           & ASH	      &  5.18 & 98.90 & 42.80 & 90.42 &  2.97 & 99.27 & 15.80 & 97.04 &  0.45 & 99.80 &  3.06 & 99.25 & 11.71 & 97.44 \\
                                           & SCALE       & 29.23 & 95.23 & 37.86 & 92.14 &  6.71 & 98.46 & 36.99 & 92.28 &  1.71 & 99.50 &  6.80 & 98.48 & 19.88 & 96.01 \\
            \cmidrule(lr){2-16}
            \rowcolor[gray]{0.9}        & \textbf{$\approach(\mu)$}	     & 15.12 & 97.51 & 33.93 & 92.75 & 3.32 & 98.98 & 25.71 & 94.88 & 0.61 & 99.79 & 3.70 & 98.92 & 13.73 & 97.14 \\
            \rowcolor[gray]{0.9}        & \textbf{$\approach(\sigma)$}   & 10.95 & 98.11 & 32.51 & 92.99 & 1.73 & 99.47 & 18.26 & 96.34 & 0.26 & 99.90 & 1.88 & 99.43 & 10.93 & 97.71 \\
            \rowcolor[gray]{0.9}        & \textbf{$\approach(m)$}	     & 10.86 & 98.13 & 31.69 & 93.16 & 1.64 & 99.48 & 17.93 & 96.51 & 0.30 & 99.89 & 1.83 & 99.43 & 10.71 & 97.77 \\
            \rowcolor[gray]{0.9}        & \textbf{$\approach(\mu)+\texttt{ReAct}$}	     &  5.82 & 98.76 & 31.59 & 93.50 & 2.87 & 99.15 & 16.91 & 96.83 & 0.91 & 99.75 & 3.32 & 99.09 & 10.24 & 97.85 \\
            \rowcolor[gray]{0.9}        & \textbf{$\approach(\sigma)+\texttt{ReAct}$}    &  5.82 & 98.83 & 30.35 & 93.71 & 1.49 & 99.54 & 11.26 & 97.78 & 0.34 & 99.88 & 1.69 & 99.51 &  8.49 & 98.21 \\
            \rowcolor[gray]{0.9}        & \textbf{$\approach(m)+\texttt{ReAct}$}	     &  5.86 & 98.86 & 29.97 & 93.89 & 1.49 & 99.55 & 11.06 & 97.88 & 0.48 & 99.87 & 1.68 & 99.51 &  8.42 & 98.26 \\
            \bottomrule
            \end{tabular}
            }
            \caption{\textit{Detailed results on six common OOD benchmark datasets: SVHN~\cite{svhn}, Places365~\cite{places365}, iSUN~\cite{isun}, Textures~\cite{texture}, LSUN-crop~\cite{lsun}, LSUN-resize~\cite{lsun}. We used the same ResNet-18 and DenseNet-101 pre-trained on CIFAR-10. $\boldsymbol{\downarrow}$ indicates lower values are better and $\boldsymbol{\uparrow}$ indicates larger values are better.}}
            \label{table:detailed_results_cifar10}
        \end{table*}
    \end{landscape}

    \begin{landscape}
        \begin{table*}[ht]
        \centering        
        \resizebox{\textwidth}{!}{
        \begin{tabular}{ l l  cc cc cc cc cc cc cc }
            \toprule
             \multirow{2}{*}{\textbf{Model}} & \multirow{2}{*}{\textbf{Method}} & \multicolumn{2}{c}{\textbf{SVHN}} & \multicolumn{2}{c}{\textbf{Place365}} & \multicolumn{2}{c}{\textbf{iSUN}} & \multicolumn{2}{c}{\textbf{Textures}} & \multicolumn{2}{c}{\textbf{LSUN-c}}  & \multicolumn{2}{c}{\textbf{LSUN-r}} & \multicolumn{2}{c}{\textbf{Average}}  \\
            \cmidrule(lr){3-4} \cmidrule(lr){5-6} \cmidrule(lr){7-8} \cmidrule(lr){9-10} \cmidrule(lr){11-12} \cmidrule(lr){13-14} \cmidrule(lr){15-16}
            && \textbf{FPR95} $\downarrow$ & \textbf{AUROC} $\uparrow$ & \textbf{FPR95} $\downarrow$ & \textbf{AUROC} $\uparrow$ & \textbf{FPR95} $\downarrow$ & \textbf{AUROC} $\uparrow$ & \textbf{FPR95} $\downarrow$ & \textbf{AUROC} $\uparrow$ & \textbf{FPR95} $\downarrow$ & \textbf{AUROC} $\uparrow$ & \textbf{FPR95} $\downarrow$ & \textbf{AUROC} $\uparrow$ & \textbf{FPR95} $\downarrow$ & \textbf{AUROC} $\uparrow$ \\
            \midrule
            \multirow{10}{*}{ResNet-18} & MSP	      & 74.26 & 83.20 & 82.37 & 75.31 & 84.13 & 71.57 & 85.04 & 74.02 & 70.79 & 82.78 & 82.96 & 73.10 & 79.92 & 76.66 \\
                                        & ODIN		  & 70.30 & 88.06 & 80.14 & 77.02 & 60.26 & 86.98 & 81.56 & 76.56 & 47.73 & 91.84 & 56.35 & 88.23 & 66.06 & 84.78	\\
                                        & Energy	  & 66.64 & 89.53 & 81.39 & 76.83 & 71.46 & 83.02 & 85.18 & 75.68 & 48.01 & 91.63 & 68.57 & 84.53 & 70.21 & 83.54 \\
                                        & ReAct	      & 56.62 & 91.69 & 80.38 & 77.28 & 53.40 & 89.25 & 57.27 & 88.63 & 49.29 & 90.69 & 49.59 & 90.27 & 57.76 & 87.97 \\
                                        & DICE	      & 40.89 & 92.97 & 81.33 & 76.23 & 62.61 & 85.83 & 75.28 & 76.29 & 12.44 & 97.65 & 61.39 & 86.84 & 55.66 & 85.97 \\
                                        & ReAct+DICE  & 34.16 & 94.18 & 83.57 & 74.79 & 54.50 & 89.85 & 52.96 & 87.36 & 10.40 & 97.95 & 53.78 & 90.22 & 48.23 & 89.06 \\
                                        & ASH	      & 22.00 & 96.16 & 86.10 & 69.25 & 64.55 & 84.17 & 37.87 & 91.77 & 23.39 & 95.57 & 63.19 & 84.25 & 49.52 & 86.86 \\
                                        & SCALE       & 22.12 & 96.38 & 81.96 & 74.95 & 61.62 & 86.65 & 44.50 & 90.72 & 18.62 & 96.78 & 59.76 & 86.74 & 48.10 & 88.70 \\
            \cmidrule(lr){2-16}
            \rowcolor[gray]{0.9}        & \textbf{$\approach(\mu)$}	    & 31.13 & 95.02 & 81.53 & 76.00 & 64.83 & 85.24 & 62.06 & 85.32 & 16.45 & 97.17 & 61.59 & 86.00 & 52.93 & 87.46 \\
            \rowcolor[gray]{0.9}        & \textbf{$\approach(\sigma)$}  & 20.60 & 96.54 & 82.09 & 75.57 & 55.69 & 88.63 & 54.61 & 87.27 & 10.36 & 98.19 & 54.42 & 88.87 & 46.29 & 89.18  \\
            \rowcolor[gray]{0.9}        & \textbf{$\approach(m)$}	    & 19.94 & 96.66 & 81.83 & 76.16 & 55.48 & 88.74 & 54.66 & 87.38 &  9.47 & 98.37 & 54.35 & 88.89 & 45.96 & 89.37 \\
            \rowcolor[gray]{0.9}        & \textbf{$\approach(\mu)+\texttt{ReAct}$}	     & 19.43 & 96.65 & 85.03 & 73.97 & 50.51 & 89.41 & 31.76 & 93.30 & 16.52 & 96.91 & 48.33 & 89.70 & 41.93 & 89.99 \\
            \rowcolor[gray]{0.9}        & \textbf{$\approach(\sigma)+\texttt{ReAct}$}    & 12.01 & 97.78 & 84.81 & 73.90 & 38.70 & 92.53 & 28.69 & 93.87 &  8.56 & 98.30 & 38.15 & 92.51 & 35.15 & 91.48 \\
            \rowcolor[gray]{0.9}        & \textbf{$\approach(m)+\texttt{ReAct}$}	     & 11.47 & 97.85 & 83.96 & 74.67 & 38.04 & 92.72 & 28.58 & 93.96 &  7.65 & 98.46 & 38.23 & 92.55 & 34.66 & 91.70 \\
            \midrule
            \multirow{10}{*}{DenseNet-101} & MSP	     & 81.38 & 75.71 & 82.68 & 74.06 & 82.52 & 70.50 & 87.11 & 68.39 & 51.82 & 87.93 & 79.31 & 72.21 & 77.47 & 74.80 \\
                                           & ODIN	      & 85.94 & 80.35 & 75.59 & 77.62 & 48.03 & 89.12 & 83.37 & 67.83 & 12.78 & 97.70 & 40.28 & 91.35 & 57.67 & 84.00 \\
                                           & Energy	  & 70.99 & 86.66 & 77.28 & 76.94 & 59.39 & 85.68 & 83.49 & 67.47 & 11.45 & 97.89 & 50.90 & 88.57 & 58.92 & 83.87 \\
                                           & ReAct	      & 69.82 & 86.30 & 79.23 & 74.09 & 41.50 & 92.40 & 72.09 & 80.38 & 18.14 & 96.26 & 36.53 & 93.64 & 52.89 & 87.18 \\
                                           & DICE	      & 32.93 & 94.09 & 79.90 & 75.43 & 35.50 & 92.50 & 64.84 & 71.95 &  1.93 & 99.57 & 30.81 & 93.96 & 40.98 & 87.92 \\
                                           & ReAct+DICE  & 25.10 & 95.70 & 84.17 & 73.56 & 27.98 & 95.06 & 41.79 & 87.82 &  1.06 & 99.70 & 27.76 & 95.16 & 34.64 & 91.17 \\
                                           & ASH	      & 10.32 & 97.99 & 85.80 & 71.97 & 37.68 & 92.45 & 35.48 & 91.77 &  5.43 & 98.98 & 40.35 & 91.96 & 35.84 & 90.85 \\
                                           & SCALE       & 16.26 & 97.05 & 78.54 & 76.97 & 43.56 & 91.21 & 45.60 & 87.23 &  3.23 & 99.30 & 42.69 & 91.02 & 38.31 & 90.46 \\
            \cmidrule(lr){2-16}
            \rowcolor[gray]{0.9}        & \textbf{$\approach(\mu)$}	    & 22.45 & 96.11 & 78.72 & 77.16 & 48.77 & 89.75 & 52.09 & 83.58 & 1.54 & 99.68 & 44.92 & 90.44 & 41.42 & 89.45 \\
            \rowcolor[gray]{0.9}        & \textbf{$\approach(\sigma)$}  & 21.13 & 96.30 & 78.19 & 77.16 & 42.78 & 91.48 & 44.34 & 86.19 & 1.18 & 99.72 & 40.25 & 92.02 & 37.98 & 90.48 \\
            \rowcolor[gray]{0.9}        & \textbf{$\approach(m)$}	    & 19.90 & 96.45 & 77.30 & 77.67 & 41.02 & 91.81 & 42.48 & 87.12 & 1.28 & 99.70 & 38.78 & 92.25 & 36.79 & 90.83 \\
            \rowcolor[gray]{0.9}        & \textbf{$\approach(\mu)+\texttt{ReAct}$}	     & 11.73 & 97.67 & 83.17 & 74.06 & 25.66 & 95.16 & 26.12 & 93.93 & 1.69 & 99.52 & 27.77 & 94.98 & 29.36 & 92.56 \\
            \rowcolor[gray]{0.9}        & \textbf{$\approach(\sigma)+\texttt{ReAct}$}    & 14.13 & 97.32 & 83.87 & 74.36 & 25.15 & 95.51 & 23.21 & 94.55 & 1.55 & 99.55 & 26.41 & 95.37 & 29.05 & 92.78 \\
            \rowcolor[gray]{0.9}        & \textbf{$\approach(m)+\texttt{ReAct}$}	     & 13.70 & 97.36 & 83.00 & 75.14 & 23.26 & 95.83 & 21.68 & 94.95 & 2.07 & 99.44 & 24.63 & 95.64 & 28.06 & 93.06 \\
            \bottomrule
            \end{tabular}
            }
            \caption{\textit{Detailed results on six common OOD benchmark datasets: SVHN~\cite{svhn}, Places365~\cite{places365}, iSUN~\cite{isun}, Textures~\cite{texture}, LSUN-crop~\cite{lsun}, LSUN-resize~\cite{lsun}. We used the same ResNet-18 and DenseNet-101 pre-trained on CIFAR-100. $\boldsymbol{\downarrow}$ indicates lower values are better and $\boldsymbol{\uparrow}$ indicates larger values are better.}}
            \label{table:detailed_results_cifar100}
        \end{table*}
    \end{landscape}

\section{Comparison with Other Baselines}
\label{appendix: additional baselines}

     While in the main paper we restrict our comparison to foundational representative techniques (i.e., MSP, ODIN, Energy, ReAct, DICE, ASH and SCALE), we provide a comparison of our method, $\approach$, with additional baselines  fDBD~\citep{fdbd} and NCI~\citep{NCI} in this section. A comprehensive re-evaluation of fDBD and NCI across all architectures used in our study was determined to be beyond the scope of this work due to a fundamental difference in their design philosophy.

    Methods like ReAct, DICE, ASH, SCALE and our own $\approach$ are modular, post-hoc techniques that primarily modify the penultimate feature vector itself. In contrast, fDBD and NCI introduce entirely new scoring functions derived from the geometric relationship between features and the classifier's decision boundaries (fDBD) or class weight vectors (NCI). Integrating $\approach$ into these structurally different frameworks would require significant, non-trivial engineering effort. Therefore, for these two methods, we present a comparison limited to the overlapping architectures and datasets from their original publications.

    Additionally, we conduct a large-scale benchmark comparison on ImageNet-1k against plethora of existing literature using both ResNet-50 and MobileNet-v2. As shown in Table~\ref{table: other_baselines_imagenet_benchmark}, we compare our method against 19 existing baselines for ResNet-50~\cite{msp, odin, G-odin, maha_distance, knn, gradorth, GradNorm, NNGuide, ViM, fdbd, BATS, energy, ReAct, DICE, ASH, SCALE} and 15 baselines for MobileNet-v2~\cite{msp, odin, maha_distance, gradorth, GradNorm, NNGuide, ViM, BATS, energy, ReAct, DICE, ASH, SCALE}, with all competitors' results taken directly from their original publications. This comprehensive evaluation demonstrates that $\approach$ achieves competitive and consistent performance compared to all prior post-hoc methods on this challenging benchmark.

    \begin{table*}[ht]
        \centering
        \resizebox{\textwidth}{!}{
        \begin{tabular}{l l  cc cc cc cc cc}
            \toprule
            \multirow{2}{*}{\textbf{Model}} & \multirow{2}{*}{\textbf{Method}} & \multicolumn{2}{c}{SUN} & \multicolumn{2}{c}{Places} & \multicolumn{2}{c}{Texture} & \multicolumn{2}{c}{iNaturalist} & \multicolumn{2}{c}{Average}  \\
            \cmidrule(lr){3-4} \cmidrule(lr){5-6} \cmidrule(lr){7-8} \cmidrule(lr){9-10} \cmidrule(lr){11-12}
            && \textbf{FPR95} $\downarrow$ & \textbf{AUROC} $\uparrow$ & \textbf{FPR95} $\downarrow$ & \textbf{AUROC} $\uparrow$ & \textbf{FPR95} $\downarrow$ & \textbf{AUROC} $\uparrow$ & \textbf{FPR95} $\downarrow$ & \textbf{AUROC} $\uparrow$ & \textbf{FPR95} $\downarrow$ & \textbf{AUROC} $\uparrow$ \\
            \midrule
            \multirow{25}{*}{ResNet-50} & MSP$^*$~\cite{msp}                & 68.58 & 81.75 & 71.57 & 80.63 & 66.13 & 80.46 & 52.77 & 88.42 & 64.76 & 82.82 \\
                                        & ODIN$^*$~\cite{odin}              & 60.15 & 84.59 & 67.89 & 81.78 & 50.23 & 85.62 & 47.66 & 89.66 & 56.48 & 85.41 \\
                                        & GODIN~\cite{G-odin}               & 60.83 & 85.60 & 63.70 & 83.81 & 77.85 & 73.27 & 61.91 & 85.40 & 66.07 & 82.02 \\
                                    & Mahalanobis~\cite{maha_distance}      & 68.36 & 84.35 & 73.32 & 81.46 & 16.05 & 94.96 & 39.90 & 93.76 & 49.41 & 88.63 \\
                            & KNN ($\alpha = 100\%$)~\cite{knn}             & 68.82 & 80.72 & 76.28 & 75.76 & 11.77 & 97.07 & 59.00 & 86.47 & 53.97 & 85.01 \\
                            & KNN ($\alpha = 1\%$)~\cite{knn}               & 69.53 & 80.10 & 77.09 & 74.87 & 11.56 & 97.18 & 59.08 & 86.20 & 54.32 & 84.59 \\
                                        & GradOrth~\cite{gradorth}          & 19.61 & 95.76 & 33.67 & 91.78 & \textbf{11.19} & \textbf{98.06} & 11.04 & 98.00 & 18.57 & 96.31 \\
                                        & GradNorm~\cite{GradNorm}          & 42.81 & 87.26 & 55.62 & 81.85 & 38.15 & 87.73 & 23.73 & 93.97 & 40.08 & 87.70 \\
                                        & NN-Guide~\cite{NNGuide}           & 31.62 & 91.66 & 38.88 & 90.12 & 24.93 & 91.52 & 12.02 & 97.47 & 26.86 & 92.69 \\
                                        & ViM~\cite{ViM}                    & 43.10 & 89.39 & 52.86 & 86.61 & 17.18 & 93.58 & 20.34 & 96.24 & 33.37 & 91.45 \\
                                        & fDBD~\cite{fdbd}                  & 60.60 & 86.97 & 66.40 & 84.27 & 37.50 & 92.12 & 40.24 & 93.67 & 51.19 & 89.26 \\
                                        & BATS~\cite{BATS}                  & 22.62 & 95.33 & 34.34 & 91.83 & 38.90 & 92.27 & 12.57 & 97.67 & 27.11 & 94.20 \\
                                        & LAPS~\cite{laps}                  & \textbf{15.81} & \textbf{96.18} & \textbf{24.71} & \textbf{93.64} & 41.49 & 91.81 & 12.72 & 97.50 & 23.68 & 94.78 \\
                                        & Energy$^*$~\cite{energy}          & 58.28 & 86.73 & 65.40 & 84.13 & 52.29 & 86.73 & 53.95 & 90.59 & 57.48 & 87.05 \\
                                        & ReAct$^*$~\cite{ReAct}            & 23.68 & 94.44 & 33.33 & 91.96 & 46.33 & 90.30 & 19.73 & 96.37 & 30.77 & 93.27 \\
                                        & DICE$^*$~\cite{DICE}              & 36.11 & 91.01 & 47.62 & 87.76 & 32.38 & 90.48 & 26.48 & 94.53 & 35.65 & 90.94 \\
                                        & ReAct+DICE$^*$~\cite{ReAct,DICE}  & 24.05 & 94.31 & 34.28 & 91.71 & 28.40 & 93.33 & 14.90 & 97.06 & 25.41 & 94.10 \\
                                        & ASH$^*$~\cite{ASH}                & 28.01 & 94.02 & 39.84 & 90.98 & 11.95 & 97.60 & 11.52 & 97.87 & 22.83 & 95.12 \\
                                        & SCALE$^*$~\cite{SCALE}            & 25.78 & 94.54 & 36.86 & 91.96 & 14.56 & 96.75 & 10.37 & 98.02 & 21.89 & 95.32 \\
            \cmidrule(lr){2-12}
            \rowcolor[gray]{0.9}        & \textbf{$\approach(\mu)$}                    & 30.79 & 92.67 & 42.59 & 89.78 & 22.29 & 94.01 & 18.02 & 96.46 & 28.42 & 93.23 \\
            \rowcolor[gray]{0.9}        & \textbf{$\approach(\sigma)$}                 & 35.73 & 91.47 & 48.35 & 88.04 & 15.85 & 95.94 & 19.05 & 96.21 & 29.75 & 92.92 \\
            \rowcolor[gray]{0.9}        & \textbf{$\approach(m)$}                      & 35.79 & 91.40 & 48.68 & 87.82 & 16.08 & 95.88 & 19.00 & 96.18 & 29.89 & 92.82 \\
            \rowcolor[gray]{0.9}        & \textbf{$\approach(\mu)+\texttt{ReAct}$}     & 18.46 & 95.82 & 28.98 & 93.31 & 12.11 & 97.38 & \textbf{8.54} & \textbf{98.19} & \textbf{17.02} & \textbf{96.18} \\
            \rowcolor[gray]{0.9}        & \textbf{$\approach(\sigma)+\texttt{ReAct}$}  & 19.13 & 95.61 & 29.58 & 93.04 & 12.04 & 97.38 & 9.10 & 98.06 & 17.46 & 96.02 \\
            \rowcolor[gray]{0.9}        & \textbf{$\approach(m)+\texttt{ReAct}$}       & 19.02 & 95.52 & 29.77 & 92.92 & 12.06 & 97.31 & 9.71 & 97.97 & 17.64 & 95.93 \\
            \midrule
            \multirow{21}{*}{MobileNet-v2} & MSP$^*$~\cite{msp}                 & 74.20 & 78.88 & 76.89 & 78.14 & 70.99 & 78.95 & 59.86 & 86.72 & 70.49 & 80.67 \\
                                        & ODIN$^*$~\cite{odin}                  & 54.07 & 85.88 & 57.36 & 84.71 & 49.96 & 85.03 & 55.39 & 87.62 & 54.20 & 85.81 \\
                                    & Mahalanobis~\cite{maha_distance}          & 54.79 & 86.33 & 53.77 & 83.69 & 88.72 & 37.28 & 62.04 & 82.37 & 64.83 & 72.40 \\
                                        & GradOrth~\cite{gradorth}              & 30.82 & \textbf{93.18} & 40.27 & 89.12 & 12.69 & 97.52 & 26.81 & 93.17 & 27.65 & 93.25 \\
                                        & GradNorm~\cite{GradNorm}              & 42.15 & 89.65 & 56.56 & 83.93 & 34.95 & 90.99 & 33.70 & 92.46 & 41.84 & 89.20 \\
                                        & NN-Guide~\cite{NNGuide}               & 79.57 & 76.10 & 81.87 & 74.23 & 38.78 & 89.32 & 68.24 & 82.07 & 67.12 & 80.43 \\
                                        & ViM~\cite{ViM}                        & 88.67 & 66.37 & 92.16 & 62.43 & 40.71 & 89.59 & 86.86 & 69.57 & 77.10 & 71.99 \\
                                        & BATS~\cite{BATS}                      & 41.68 & 90.21 & 52.43 & 86.26 & 38.69 & 90.76 & 31.56 & 94.33 & 41.09 & 90.39 \\
                                        & LAPS~\cite{laps}                      & \textbf{30.07} & 92.98 & \textbf{39.70} & \textbf{90.10} & 51.37 & 88.29 & \textbf{18.82} & \textbf{96.76} & 34.99 & 92.03 \\
                                        & Energy$^*$~\cite{energy}              & 59.36 & 86.24 & 66.27 & 83.21 & 54.54 & 86.58 & 55.31 & 90.34 & 58.87 & 86.59 \\
                                        & ReAct$^*$~\cite{ReAct}                & 52.46 & 87.26 & 59.89 & 84.07 & 40.25 & 90.96 & 43.05 & 92.72 & 48.91 & 88.75 \\
                                        & DICE$^*$~\cite{DICE}                  & 37.84 & 90.81 & 52.35 & 86.17 & 32.57 & 91.46 & 41.53 & 91.30 & 41.07 & 89.94 \\
                                        & ReAct+DICE$^*$~\cite{ReAct,DICE}      & 30.60 & 92.98 & 45.93 & 88.29 & 16.03 & 96.33 & 31.68 & 93.76 & 31.06 & 92.84  \\
                                        & ASH$^*$~\cite{ASH}                    & 43.63 & 90.02 & 58.85 & 84.73 & 13.12 & 97.10 & 39.13 & 91.94 & 38.68 & 90.95 \\
                                        & SCALE$^*$~\cite{SCALE}                & 38.74 & 91.64 & 53.49 & 87.34 & 14.79 & 96.65 & 30.09 & 94.46 & 34.28 & 92.52 \\
            \cmidrule(lr){2-12}
            \rowcolor[gray]{0.9}        & \textbf{$\approach(\mu)$}                     & 37.74 & 91.43 & 52.21 & 87.33 & 23.42 & 94.17 & 33.47 & 93.84 & 36.71 & 91.69 \\
            \rowcolor[gray]{0.9}        & \textbf{$\approach(\sigma)$}                  & 38.20 & 91.26 & 53.04 & 86.84 & 14.02 & 96.37 & 29.25 & 94.63 & 33.63 & 92.27 \\
            \rowcolor[gray]{0.9}        & \textbf{$\approach(m)$}                       & 37.41 & 91.37 & 52.24 & 86.89 & 14.18 & 96.35 & 28.78 & 94.70 & 33.15 & 92.33 \\
            \rowcolor[gray]{0.9}        & \textbf{$\approach(\mu)+\texttt{ReAct}$}      & 32.82 & 92.93 & 48.62 & 88.59 & 13.60 & 96.83 & 28.19 & 94.89 & 30.81 & 93.31 \\
            \rowcolor[gray]{0.9}        & \textbf{$\approach(\sigma)+\texttt{ReAct}$}   & 37.53 & 91.22 & 51.32 & 87.19 & 10.18 & 97.31 & 27.21 & 95.12 & 31.56 & 92.71 \\
            \rowcolor[gray]{0.9}        & \textbf{$\approach(m)+\texttt{ReAct}$}        & 34.77 & 92.26 & 49.77 & 88.06 &  \textbf{8.69} & \textbf{97.76} & 24.08 & 95.66 & \textbf{29.33} & \textbf{93.43} \\
            \bottomrule
            \end{tabular}
            }
            \caption{\textit{Detailed Comparison with existing OOD detection methods on the ImageNet-1k benchmark, using ResNet-50 and MobileNet-v2. Methods marked with $^*$ were reproduced by us; results for all other methods are taken from their original publications. The symbol $\boldsymbol{\downarrow}$ indicates lower values are better; $\boldsymbol{\uparrow}$ indicates larger values are better.}}
            \label{table: other_baselines_imagenet_benchmark}
        \end{table*}

\subsection{Neural Collapse Inspired (NCI) OOD Detector}
 
    As shown in Table~\ref{table: NCI_cifar_comparison} for CIFAR-10 and Table~\ref{table: NCI_imagenet_comparison} for ImageNet, in a direct comparison against NCI's~\citep{NCI} reported results, our method $\approach$ demonstrates a clear and significant advantage. On CIFAR-10 with a ResNet-18 backbone, $\approach(m) + \texttt{ReAct}$ decisively outperforms NCI, reducing the average FPR95 by 33.43\%. This strong performance is maintained on the large-scale ImageNet benchmark, where our method reduces the FPR95 by 42.83\% on ResNet-50.

    \begin{table*}[ht]
        \centering
        
        \resizebox{0.95\linewidth}{!}{
        \begin{tabular}{l  cc cc cc cc}
            \toprule
             \multirow{2}{*}{\textbf{Method}} & \multicolumn{2}{c}{SVHN} & \multicolumn{2}{c}{Places365} & \multicolumn{2}{c}{Texture} & \multicolumn{2}{c}{Average}  \\
            \cmidrule(lr){2-3} \cmidrule(lr){4-5} \cmidrule(lr){6-7} \cmidrule(lr){8-9}
            & \textbf{FPR95} $\downarrow$ & \textbf{AUROC} $\uparrow$ & \textbf{FPR95} $\downarrow$ & \textbf{AUROC} $\uparrow$ & \textbf{FPR95} $\downarrow$ & \textbf{AUROC} $\uparrow$ & \textbf{FPR95} $\downarrow$ & \textbf{AUROC} $\uparrow$ \\
            \midrule
             \texttt{NCI}                        & 28.92 & 90.81 & 34.01 & 90.74 & 26.53 & 92.18 & 29.82 & 91.24 \\
            $\approach(\mu)$                     & 15.73 & 97.32 & 43.65 & 91.25 & 35.98 & 94.25 & 31.79 & 94.27 \\
            $\approach(\sigma)$                  & 10.33 & 98.13 & 37.74 & 92.68 & 24.31 & 96.16 & 24.13 & 95.66 \\
            $\approach(m)$                       &  9.93 & 98.24 & 36.59 & 92.97 & 23.01 & 96.43 & 23.18 & 95.88 \\
            $\approach(\mu) + \texttt{ReAct}$    & 14.37 & 97.48 & 43.18 & 91.46 & 25.04 & 95.82 & 27.53 & 94.92 \\
            $\approach(\sigma) + \texttt{ReAct}$ &  9.38 & 98.29 & 36.51 & 93.10 & 16.86 & 97.27 & 20.92 & 96.22 \\
            $\approach(m) + \texttt{ReAct}$      &  \textbf{8.86} & \textbf{98.39} & \textbf{35.04} & \textbf{93.38} & \textbf{15.64} & \textbf{97.48} & \textbf{19.85} & \textbf{96.42} \\
            \bottomrule
            \end{tabular}}
            \caption{\textit{A direct comparison of $\approach$ against the NCI baseline, using their originally reported results for CIFAR-10 with a ResNet-18 backbone. The evaluation is restricted to the SVHN, Texture, and Places365 OOD datasets to ensure a fair comparison that matches the protocol from the original NCI paper.}}
            \label{table: NCI_cifar_comparison}
    \end{table*}

    \begin{table*}[ht]
        \centering
        \resizebox{0.75\linewidth}{!}{
        \begin{tabular}{l cc cc cc}
            \toprule
             \multirow{2}{*}{\textbf{Method}} & \multicolumn{2}{c}{Texture} & \multicolumn{2}{c}{iNaturalist} & \multicolumn{2}{c}{Average}  \\
            \cmidrule(lr){2-3} \cmidrule(lr){4-5} \cmidrule(lr){6-7}
            & \textbf{FPR95} $\downarrow$ & \textbf{AUROC} $\uparrow$ & \textbf{FPR95} $\downarrow$ & \textbf{AUROC} $\uparrow$ & \textbf{FPR95} $\downarrow$ & \textbf{AUROC} $\uparrow$  \\
            \midrule
            \texttt{NCI}                         & 23.79 & 96.63 & 14.31 & 96.95 & 19.05 & 96.79 \\
            $\approach(\mu)$                     & 22.29 & 94.01 & 18.02 & 96.46 & 20.16 & 95.24 \\
            $\approach(\sigma)$                  & 15.85 & 95.94 & 19.05 & 96.21 & 17.45 & 96.08 \\
            $\approach(m)$                       & 16.08 & 95.88 & 19.00 & 96.18 & 17.54 & 96.03 \\
            $\approach(\mu) + \texttt{ReAct}$    & 12.11 & \textbf{97.38} &  \textbf{8.54} & \textbf{98.19} & \textbf{10.33} & \textbf{97.79} \\
            $\approach(\sigma) + \texttt{ReAct}$ & \textbf{12.04} & \textbf{97.38} &  9.10 & 98.06 & 10.57 & 97.72 \\
            $\approach(m) + \texttt{ReAct}$      & 12.06 & 97.31 &  9.71 & 97.97 & 10.89 & 97.64 \\                      
            \bottomrule
            \end{tabular}}
            \caption{\textit{A direct comparison of $\approach$ against the NCI baseline, using their originally reported results for ImageNet with a ResNet-50 backbone. The evaluation is restricted to the iNaturalist and Texture OOD datasets to ensure a fair comparison that matches the protocol from the original NCI paper.}}
            \label{table: NCI_imagenet_comparison}
    \end{table*}

\subsection{Fast Decision Boundary based OOD Detector}

    As shown in Tables \ref{table: fDBD_cifar_comparison} and \ref{table: fDBD_imagenet_comparison}, our method, $\approach$, demonstrates a decisive and substantial performance advantage over the fDBD's reported results in all comparable, overlapping settings. The strength of $\approach$ is most apparent when it is composed with existing techniques, creating a powerful synergistic effect that dramatically improves OOD detection. On CIFAR-10, this combination is particularly effective. Using a ResNet-18, $\approach(m) + \texttt{ReAct}$ slashes the average FPR95 by 44.80\% relative to fDBD (from 31.09\% down to 17.16\%). The gains are even more pronounced on a DenseNet-101, where $\approach(m) + \texttt{ReAct}$ achieves an FPR95 reduction of 34.67\%.

        \begin{table*}[ht]
            \centering
            \resizebox{\linewidth}{!}{
            \begin{tabular}{l l  cc cc cc cc cc}
                \toprule
                \multirow{2}{*}{\textbf{Model}} & \multirow{2}{*}{\textbf{Method}} & \multicolumn{2}{c}{SVHN} & \multicolumn{2}{c}{Places365} & \multicolumn{2}{c}{iSUN} & \multicolumn{2}{c}{Texture} & \multicolumn{2}{c}{Average}  \\
                \cmidrule(lr){3-4} \cmidrule(lr){5-6} \cmidrule(lr){7-8} \cmidrule(lr){9-10} \cmidrule(lr){11-12}
                && \textbf{FPR95} $\downarrow$ & \textbf{AUROC} $\uparrow$ & \textbf{FPR95} $\downarrow$ & \textbf{AUROC} $\uparrow$ & \textbf{FPR95} $\downarrow$ & \textbf{AUROC} $\uparrow$ & \textbf{FPR95} $\downarrow$ & \textbf{AUROC} $\uparrow$ & \textbf{FPR95} $\downarrow$ & \textbf{AUROC} $\uparrow$ \\
                \midrule
                \multirow{7}{*}{\textbf{ResNet-18}} & \texttt{fDBD}                    & 22.58 & 96.07 & 46.59 & 90.40 & 23.96 & 95.85 & 31.24 & 94.48 & 31.09 & 94.20 \\
                                                & $\approach(\mu)$                     & 15.73 & 97.32 & 43.65 & 91.25 & 26.26 & 96.08 & 35.98 & 94.25 & 30.41 & 94.73 \\
                                                & $\approach(\sigma)$                  & 10.33 & 98.13 & 37.74 & 92.68 & 16.90 & 97.32 & 24.31 & 96.16 & 22.32 & 96.07 \\
                                                & $\approach(m)$                       &  9.93 & 98.24 & 36.59 & 92.97 & 14.89 & 97.61 & 23.01 & 96.43 & 21.11 & 96.31 \\
                                                & $\approach(\mu) + \texttt{ReAct}$    & 14.37 & 97.48 & 43.18 & 91.46 & 16.71 & 97.22 & 25.04 & 95.82 & 24.83 & 95.99 \\
                                                & $\approach(\sigma) + \texttt{ReAct}$ &  9.38 & 98.29 & 36.51 & 93.10 & 10.82 & 98.10 & 16.86 & 97.27 & 18.89 & 96.69 \\
                                                & $\approach(m) + \texttt{ReAct}$      &  \textbf{8.86} & \textbf{98.39} & \textbf{35.04} & \textbf{93.38} &  \textbf{9.08} & \textbf{98.32} & \textbf{15.64} & \textbf{97.48} & \textbf{17.16} & \textbf{96.89} \\
                \midrule
                \multirow{7}{*}{\textbf{DenseNet-101}} & \texttt{fDBD}                 &  5.89 & 98.67 & 39.52 & 91.53 & 5.90 & 98.75 & 22.75 & 95.81 & 18.52 & 96.19 \\
                                                & $\approach(\mu)$                     & 15.12 & 97.51 & 33.93 & 92.75 & 3.32 & 98.98 & 25.71 & 94.88 & 19.52 & 96.53 \\
                                                & $\approach(\sigma)$                  & 10.95 & 98.11 & 32.51 & 92.99 & 1.73 & 99.47 & 18.26 & 96.34 & 15.86 & 96.73 \\
                                                & $\approach(m)$                       & 10.86 & 98.13 & 31.69 & 93.16 & 1.64 & 99.48 & 17.93 & 96.51 & 15.53 & 96.82 \\
                                                & $\approach(\mu) + \texttt{ReAct}$    &  5.82 & 98.76 & 31.59 & 93.50 & 2.87 & 99.15 & 16.91 & 96.83 & 14.30 & \textbf{97.56} \\
                                                & $\approach(\sigma) + \texttt{ReAct}$ &  \textbf{5.82} & 98.83 & 30.35 & 93.71 & \textbf{1.49} & 99.54 & 11.26 & 97.78 & 12.23 & 97.47 \\
                                                & $\approach(m) + \texttt{ReAct}$      &  5.86 & \textbf{98.86} & \textbf{29.97} & \textbf{93.89} & \textbf{1.49} & \textbf{99.55} & \textbf{11.06} & \textbf{97.88} & \textbf{12.10} & 97.55 \\
                \bottomrule
                \end{tabular}}
                \caption{\textit{Direct comparison of $\approach$ against the fDBD baseline on CIFAR-10. To ensure a fair comparison, the evaluation is restricted to the four OOD datasets reported in the original fDBD paper: SVHN, Places365, iSUN, and Texture.}}
                \label{table: fDBD_cifar_comparison}
        \end{table*}

    As demonstrated in Table~\ref{table: fDBD_imagenet_comparison}, this commanding performance extends to the large-scale ImageNet benchmark. While fDBD struggles with a high average FPR95 of 51.19\%, our $\approach(m) + \texttt{ReAct}$ achieves an FPR95 of just 17.64\% -- a massive 65.54\% relative reduction. These results validate $\approach$ performed significantly better than recent baseliens like NCI and fDBD.

        \begin{table*}[ht]
            \centering
            \resizebox{\linewidth}{!}{
            \begin{tabular}{l l  cc cc cc cc cc}
                \toprule
                \multirow{2}{*}{\textbf{Method}} & \multicolumn{2}{c}{SUN} & \multicolumn{2}{c}{Places365} & \multicolumn{2}{c}{Texture} & \multicolumn{2}{c}{iNaturalist} & \multicolumn{2}{c}{Average}  \\
                \cmidrule(lr){2-3} \cmidrule(lr){4-5} \cmidrule(lr){6-7} \cmidrule(lr){8-9} \cmidrule(lr){10-11}
                & \textbf{FPR95} $\downarrow$ & \textbf{AUROC} $\uparrow$ & \textbf{FPR95} $\downarrow$ & \textbf{AUROC} $\uparrow$ & \textbf{FPR95} $\downarrow$ & \textbf{AUROC} $\uparrow$ & \textbf{FPR95} $\downarrow$ & \textbf{AUROC} $\uparrow$ & \textbf{FPR95} $\downarrow$ & \textbf{AUROC} $\uparrow$ \\
                \midrule
                 \texttt{fDBD}                      & 60.60 & 86.97 & 66.40 & 84.27 & 37.50 & 92.12 & 40.24 & 93.67 & 51.19 & 89.26 \\
                $\approach(\mu)$                    & 30.79 & 92.67 & 42.59 & 89.78 & 22.29 & 94.01 & 18.02 & 96.46 & 28.42 & 93.23 \\
                $\approach(\sigma)$                 & 35.73 & 91.47 & 48.35 & 88.04 & 15.85 & 95.94 & 19.05 & 96.21 & 29.75 & 92.92 \\
                $\approach(m)$                      & 35.79 & 91.40 & 48.68 & 87.82 & 16.08 & 95.88 & 19.00 & 96.18 & 29.89 & 92.82 \\
                $\approach(\mu)+\texttt{ReAct}$     & \textbf{18.46} & \textbf{95.82} & \textbf{28.98} & \textbf{93.31} & 12.11 & \textbf{97.38} &  \textbf{8.54} & \textbf{98.19} & \textbf{17.02} & \textbf{96.18} \\
                $\approach(\sigma)+\texttt{ReAct}$  & 19.13 & 95.61 & 29.58 & 93.04 & \textbf{12.04} & \textbf{97.38} &  9.10 & 98.06 & 17.46 & 96.02 \\
                $\approach(m)+\texttt{ReAct}$       & 19.02 & 95.52 & 29.77 & 92.92 & 12.06 & 97.31 &  9.71 & 97.97 & 17.64 & 95.93 \\                              
                \bottomrule
                \end{tabular}}
                \caption{\textit{This table presents a direct comparison of $\approach$ against the fDBD baseline on ImageNet using a ResNet-50 backbone. To ensure a fair comparison, the evaluation is restricted to the iNaturalist and Texture OOD datasets, matching the protocol in the original fDBD paper.}}
                \label{table: fDBD_imagenet_comparison}
        \end{table*}

\subsection{AdaSCALE OOD Detection}

    AdaSCALE is a post-hoc OOD detection method that replaces fixed activation-scaling strategies with an adaptive, sample-dependent mechanism. Existing approaches (ASH, SCALE, LTS) prune activations using a static percentile threshold, which cannot reliably distinguish ID from OOD data. AdaSCALE leverages the observation that OOD samples experience larger shifts in their top activated neurons under small pixel perturbations, while ID activations remain stable. It measures this activation shift (Q), adjusts it with a correction term (Co), and maps the resulting OODness score through a CDF to produce a dynamic pruning percentile. This causes ID samples to receive stronger scaling and OOD samples weaker scaling, yielding more separated energy scores and improved detection performance.

    We compare $\approach$ against AdaScale in Tables~\ref{table: adascale_cifar_comparison} and \ref{table: adascale_imagenet_comparison}, strictly following the restricted dataset protocol from the original AdaScale paper for a fair comparison. On the CIFAR (DenseNet-101) benchmark, $\approach$ consistently outperforms AdaScale. For instane, on CIFAR-10, the best baseline AdaScale-L achieves an average FPR95 of 40.03\%. Our standalone $\approach(m)$ is already significantly better at 20.16\%, and our combined $\approach(m) + \texttt{ReAct}$ further extends this lead to 15.63\%. On CIFAR-100, our $\approach(m) + \texttt{ReAct}$ (FPR95 39.46\%) likewise outperforms the best AdaScale-A baseline (58.42\%). This trend holds on the ImageNet (ResNet-50) benchmark. As shown in Table~\ref{table: adascale_imagenet_comparison}, our $\approach(\mu) + \texttt{ReAct}$ (FPR95 16.54\%) achieves similar level of performance compared to best AdaScale-L (16.92\%).

    \begin{table*}[ht]
        \centering
        
        \resizebox{0.95\linewidth}{!}{
        \begin{tabular}{l l  cc cc cc cc}
            \toprule
             \multirow{2}{*}{\textbf{Dataset}} & \multirow{2}{*}{\textbf{Method}} & \multicolumn{2}{c}{SVHN} & \multicolumn{2}{c}{Places365} & \multicolumn{2}{c}{Texture} & \multicolumn{2}{c}{Average}  \\
            \cmidrule(lr){3-4} \cmidrule(lr){5-6} \cmidrule(lr){7-8} \cmidrule(lr){9-10}
            && \textbf{FPR95} $\downarrow$ & \textbf{AUROC} $\uparrow$ & \textbf{FPR95} $\downarrow$ & \textbf{AUROC} $\uparrow$ & \textbf{FPR95} $\downarrow$ & \textbf{AUROC} $\uparrow$ & \textbf{FPR95} $\downarrow$ & \textbf{AUROC} $\uparrow$ \\
            \midrule
            \multirow{8}{*}{CIFAR-10} & \texttt{AdaScale-L}     & 25.04 & 94.05 & 36.77 & 91.20 & 58.28 & 87.35 & 40.03 & 90.87 \\
                        & \texttt{AdaScale-A}                   & 26.43 & 93.87 & 37.03 & 91.25 & 58.59 & 87.19 & 40.68 & 90.77 \\
                        & $\approach(\mu)$                      & 15.12 & 97.51 & 33.93 & 92.75 & 25.71 & 94.88 & 24.92 & 95.05 \\
                        & $\approach(\sigma)$                   & 10.95 & 98.11 & 32.51 & 92.99 & 18.26 & 96.34 & 20.57 & 95.15 \\
                        & $\approach(m)$                        & 10.86 & 98.13 & 31.69 & 93.16 & 17.93 & 96.51 & 20.16 & 95.27 \\
                        & $\approach(\mu)+\texttt{ReAct}$       &  \textbf{5.82} & 98.76 & 31.59 & 93.50 & 16.91 & 96.83 & 18.77 & 96.36 \\
                        & $\approach(\sigma)+\texttt{ReAct}$    &  \textbf{5.82} & 98.83 & 30.35 & 93.71 & 11.26 & 97.78 & 15.81 & 96.77 \\
                        & $\approach(m)+\texttt{ReAct}$         &  5.86 & \textbf{98.86} & \textbf{29.97} & \textbf{93.89} & \textbf{11.06} & \textbf{97.88} & \textbf{15.63} & \textbf{96.88} \\
            \cmidrule(lr){2-10}
            \multirow{8}{*}{CIFAR-100} & \texttt{AdaScale-L}    & 46.29 & 84.31 & \textbf{61.70} & \textbf{78.86} & 71.40 & 76.59 & 59.13 & 79.25 \\
                        & \texttt{AdaScale-A}                   & 43.97 & 85.30 & 61.97 & 78.69 & 69.31 & 77.71 & 58.42 & 80.57 \\
                        & $\approach(\mu)$                      & 22.45 & 96.11 & 78.72 & 77.16 & 52.09 & 83.58 & 51.09 & 85.62 \\
                        & $\approach(\sigma)$                   & 21.13 & 96.30 & 78.19 & 77.16 & 44.34 & 86.19 & 47.89 & 86.55 \\
                        & $\approach(m)$                        & 19.90 & 96.45 & 77.30 & 77.67 & 42.48 & 87.12 & 46.56 & 87.08 \\
                        & $\approach(\mu)+\texttt{ReAct}$       & \textbf{11.73} & \textbf{97.67} & 83.17 & 74.06 & 26.12 & 93.93 & 40.34 & 88.55 \\
                        & $\approach(\sigma)+\texttt{ReAct}$    & 14.13 & 97.32 & 83.87 & 74.36 & 23.21 & 94.55 & 40.40 & 88.74 \\
                        & $\approach(m)+\texttt{ReAct}$         & 13.70 & 97.36 & 83.00 & 75.14 & \textbf{21.68} & \textbf{94.95} & \textbf{39.46} & \textbf{89.15} \\
            \bottomrule

            \end{tabular}}
            \caption{\textit{A direct comparison of $\approach$ against the AdaScale baseline, using their originally reported results for CIFAR with a DenseNet-101 backbone. The evaluation is restricted to the SVHN, Texture, and Places365 OOD datasets to ensure a fair comparison that matches the protocol from the original AdaScale paper~\cite{adascle}.}}
            \label{table: adascale_cifar_comparison}
    \end{table*}

    \begin{table*}[ht]
        \centering
        \resizebox{0.75\linewidth}{!}{
        \begin{tabular}{l cc cc cc cc}
            \toprule
             \multirow{2}{*}{\textbf{Method}} & \multicolumn{2}{c}{Places} & \multicolumn{2}{c}{Texture} & \multicolumn{2}{c}{iNaturalist} & \multicolumn{2}{c}{Average}  \\
            \cmidrule(lr){2-3} \cmidrule(lr){4-5} \cmidrule(lr){6-7} \cmidrule(lr){8-9}
            & \textbf{FPR95} $\downarrow$ & \textbf{AUROC} $\uparrow$ & \textbf{FPR95} $\downarrow$ & \textbf{AUROC} $\uparrow$ & \textbf{FPR95} $\downarrow$ & \textbf{AUROC} $\uparrow$ & \textbf{FPR95} $\downarrow$ & \textbf{AUROC} $\uparrow$  \\
            \midrule
            \texttt{AdaScale-L}                  & 32.60 & 92.74 & 10.57 & 97.88 &  7.61 & 98.31 & 16.92 & 96.98 \\
            \texttt{AdaScale-A}                  & 32.97 & 92.63 & \textbf{10.33} & \textbf{97.92} &  \textbf{7.78} & \textbf{98.29} & 17.03 & 96.95 \\
            $\approach(\mu)$                     & 42.59 & 89.78 & 22.29 & 94.01 & 18.02 & 96.46 & 27.63 & 93.42 \\
            $\approach(\sigma)$                  & 48.35 & 88.04 & 15.85 & 95.94 & 19.05 & 96.21 & 27.08 & 93.40 \\
            $\approach(m)$                       & 48.68 & 87.82 & 16.08 & 95.88 & 19.00 & 96.18 & 27.25 & 93.29 \\
            $\approach(\mu) + \texttt{ReAct}$    & \textbf{28.98} & \textbf{93.31} & 12.11 & 97.38 &  8.54 & 98.19 & \textbf{16.54} & \textbf{96.96 }\\
            $\approach(\sigma) + \texttt{ReAct}$ & 29.58 & 93.04 & 12.04 & 97.38 &  9.10 & 98.06 & 16.91 & 96.83 \\
            $\approach(m) + \texttt{ReAct}$      & 29.77 & 92.92 & 12.06 & 97.31 &  9.71 & 97.97 & 17.18 & 96.73 \\ 
            \bottomrule
            \end{tabular}}
            \caption{\textit{A direct comparison of $\approach$ against the \texttt{AdaScale} baseline, using their originally reported results for ImageNet with a ResNet-50 backbone. The evaluation is restricted to the Places, Texture, and iNaturalist OOD datasets to ensure a fair comparison that matches the protocol from the original AdaScale paper~\cite{adascle}.}}
            \label{table: adascale_imagenet_comparison}
    \end{table*}

\section{Reproducibility Statement}
\label{appendix: reproducibility}

    We are committed to ensuring the reproducibility of our research. To this end, we provide detailed information regarding our code, experimental setup, hyperparameter selection, and computational environment.

    \subsection{Code and Data Availability}
    The complete source code used in $\approach$, along with the scripts used to run all experiments and generate figures, is publicly available on GitHub.\footnote{https://github.com/bingabid/Catalyst} We will also provide the model weights for our trained CIFAR models. All datasets used in this work (CIFAR-10, CIFAR-100, ImageNet-1k, and all OOD benchmarks) are publicly available and were used without modification, following the standard preprocessing steps described in their original publications and common benchmarks.

    \noindent
    \textbf{Experimental Setup.}
    \begin{itemize}
        \item \textbf{CIFAR Benchmarks:} Our primary models include ResNet-18 and DenseNet-101. Following established protocols~\citep{ReAct,DICE,ASH, SCALE, cider}, all models were trained from scratch for 100 epochs using SGD with a momentum of 0.9, a weight decay of 0.0001, and a batch size of 64. The learning rate was initialized at 0.1 and decayed by a factor of 10 at epochs 50, 75, and 90.
        
        \item \textbf{ImageNet Benchmark:} For our large-scale experiments, we used the official pre-trained models provided by PyTorch for ResNet-34, ResNet-50, MobileNet-v2, and DenseNet-121. No fine-tuning was performed.
    \end{itemize}

\subsection{Hyperparameter Selection} 

    The clipping threshold $\boldsymbol{c}$ (Eq.~\ref{eq: gamma_c}) is crucial for enhanced performance, as it must be set to optimally distinguish ID from OOD data. Analogous to ReAct~\cite{ReAct}, we do not tune $\boldsymbol{c}$ directly; instead, we control it by setting it to the $p$-th percentile of the ID activation distribution (e.g, when $p = 95$, it indicates that 95\% of the ID activations are less than the threshold $\boldsymbol{c}$). The choice of this percentile $p$ is the key hyperparameter to be tuned. To select the optimal $p$, we follow established protocols from prior work~\citep{DICE,ReAct} and create a proxy OOD validation set, which is generated by adding pixel-wise Gaussian noise $\mathcal{N}(0, 0.2)$ to images from the ID validation set. We then select the percentile $p$ that yields the best OOD separation on this proxy task. This two-step procedure -- using a percentile for the mechanism and a proxy set for tuning -- is a robust tuning strategy grounded in prior work. The selected $p$ values are: 
    \begin{itemize}
        \item \textbf{CIFAR:} We found it optimal to tune the percentile for each statistic individually. These values are fixed for all CIFAR models (ResNet-18, DenseNet-101) and across all baselines (e.g., Energy, ReAct). The selected percentiles are $p_{\text{mean}} = 60$, $p_{\text{std}} = 95$, and $p_{\text{max}} = 95$.
    
        \item \textbf{ImageNet:} For our default method ($\approach + \text{Energy}$), a single percentile of $p = 75$ is used for all ImageNet models. When combining with baselines like ReAct ($\approach + \texttt{ReAct}$), we found it optimal to use a single, shared percentile $p$ across all three statistics (mean, std, and max). The optimal shared percentile $p$ varies by model: $p = 15$ for ResNet-34 and ResNet-50, $p = 35$ for MobileNet-v2, and $p = 52$ for DenseNet-121.  %
    \end{itemize}

    As our hyperparameter search for ImageNet demonstrated, the optimal shared percentile $p$ varies across different architectures (e.g., $p=15$ for ResNet-50 vs. $p=52$ for DenseNet-121). This empirical finding is highly intuitive and aligns with our method's design. The optimal $p$ (which sets the clipping threshold $c$) is naturally coupled with the model's architecture, particularly the dimension ($n$) of the penultimate layer. This is because our scaling factor $\gamma$ (Equation~\ref{eq: gamma_design}) is an aggregation (a sum) over all $n$ channels. A model with a larger channel dimension (e.g., ResNet-50, $n=2048$) will produce a sum of a very different magnitude than a model with a smaller dimension (e.g., DenseNet-121, $n=1024$). Therefore, a different clipping percentile $p$ is required for each architecture to produce the most discriminative $\gamma$ signal.

\subsection{Baseline Hyperparameter Tuning}

    A core principle of our evaluation is to ensure a fair and rigorous comparison against all baselines. For all methods (ODIN, ReAct, DICE, ASH, SCALE, KNN), we strictly followed the hyperparameter selection protocols described in their respective papers.

    When re-evaluating these baselines on new architectures not present in their original work, we performed a new hyperparameter search using the same validation procedures and search spaces they described. This ensures that every baseline is as strong as possible for each specific model. Key hyperparameters for these methods are summarized below:

    \begin{itemize}
        \item \textbf{ODIN:} We adopted the optimal hyperparameter values reported in the original publication. Accordingly, we set the temperature to $T = 1000$, with a noise magnitude $\epsilon$ of 0.004 for CIFAR and 0.0015 for ImageNet.
        
        \item \textbf{ReAct:} The clipping percentile $p$ was selected from $\{85, 90, 95\}$. While we found $p=90$ to be optimal for the standalone ReAct baseline, consistent with the original paper, the optimal value shifted to $p=95$ when ReAct was combined with our $\approach$.
        
        \item \textbf{DICE:} We selected the sparsity ratio $p$ from $\{70, 75, 80, 85, 90, 95\}$. Our validation process consistently identified $p=70\%$ as the optimal value.
        
        \item \textbf{ASH:} The pruning percentile $p$ was selected from $\{80, 85, 90\}$. The optimal value was found to be dependent on the dataset and architecture. We report the specific optimal value for each major setting to ensure the strongest and fairest possible comparison.
        
        \begin{itemize}
            \item For \textbf{ImageNet}, the optimal value was consistently $p=90$ for most architectures, with the exception of EfficientNet-b0, which required a less aggressive pruning of $p=50$.
            
           \item For \textbf{CIFAR-10}, the optimal values were $p=80$ for both ResNet models, $p=90$ for DenseNet, and $p=70$ for MobileNet-v2. These values held for both the standalone baseline and when combined with $\approach$.
            
            \item For \textbf{CIFAR-100}, the optimal value for the ResNet models was consistently $p=80$. For other architectures, we observed an interaction effect: the optimal percentile for DenseNet shifted from $p=90$ (baseline) to $p=80$ (with $\approach$), and for MobileNet-v2, it shifted from $p=90$ to $p=85$.
        \end{itemize}
    
        \item \textbf{SCALE:} For the SCALE baseline, the pruning percentile p was set to a fixed value of $p=85$ across all experiments. We adopted this value directly from the original SCALE paper~\citep{SCALE} to ensure our re-implementation was consistent with the authors' reported optimal setting, providing a fair comparison.
    \end{itemize}

\subsection{Computational Environment}
    All CIFAR model training and OOD detection experiments were conducted on an Apple M2 Max system with 96 GB of RAM. The experiments were implemented in Python using PyTorch (v2.1) and the Torchvision library.

\section{Choice of Layer for Computing $\gamma$}
\label{appendix: penultimate layer}

\begin{figure*}[ht]
    \centering
    \includegraphics[width=0.97\textwidth]{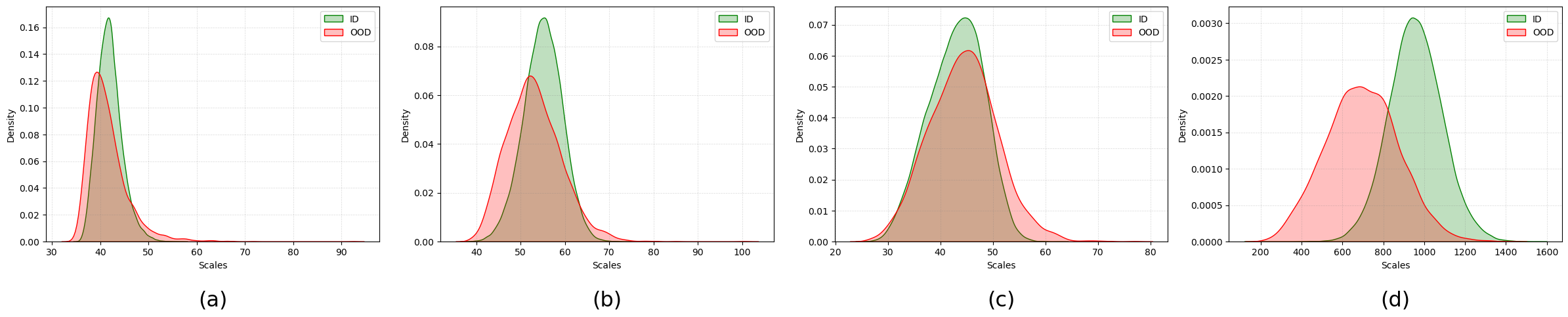}
    \caption{\textit{Distributions of the scaling factor ($\gamma$) are computed from the four residual stages. The model was trained on ImageNet-1K (ID) and evaluated against Texture (OOD). \textit{(a-c)} The $\gamma$ distributions from the early-to-mid stages (Layer 1 to Layer 3) show significant overlap between ID and OOD samples, rendering them ineffective as a discriminative signal. \textit{(d)} In sharp contrast, the distribution from the final residual stage (Layer 4) provides a clear and distinct separation. This result, consistent across OOD datasets, validates our methodological focus on the penultimate layer's pre-pooling feature map as the most potent and reliable signal source.}}
    \label{fig: intermediate_layers_places}
\end{figure*}

A necessary condition for Catalyst to be effective is that its scaling factor ($\gamma$) must be inherently distinguishable between ID and OOD samples. Consequently, a core methodological decision is to identify which network stage produces the most discriminative information cues. To justify our focus on the penultimate layer, we conducted an analysis to locate the most potent source of information cues within the network. We used a ResNet-50 model trained on ImageNet (ID) and computed $\gamma$ using the channel-wise average activation from each of its four main residual stages (Layer 1-4). We then compared the ID $\gamma$ distribution against multiple OOD datasets, including Places, SUN, Texture, and iNaturalist. 

A clear and consistent trend emerged, as illustrated representatively in Figure~\ref{fig: intermediate_layers_places} for the Texture dataset. The analysis reveals that the $\gamma$ distributions from the early-to-mid stages (Layer 1-3) are not sufficiently discriminative. As shown in the Figure~\ref{fig: intermediate_layers_places}, they exhibit high overlap overlap between ID (ImageNet) and OOD (Texture) samples, rendering them ineffective for our scaling purposes. In sharp contrast, the signal from the final residual stage (Layer 4) demonstrates a sufficiently clear between the two distributions. This finding was consistent across all tested OOD datasets.

This analysis empirically validates our methodological focus. The penultimate layer's pre-pooling feature map is not just a layer of convenience; it is the most reliable source of a potent signal for constructing $\gamma$. While we only illustrate this with ResNet-50 for brevity, we observed this same trend with the final feature map consistently providing the most signal separation across all tested architectures. This confirms our choice is a generalizable one, not specific to a single model.

\section{Analysis of Fusion Strategy}
\label{appendix: fusion strategy}

In Section~\ref{sec: method}, we introduced two potential fusion strategies for integrating our scaling factor $\gamma(\mathbf{x})$ with a baseline score $S(\mathbf{x})$: multiplicative (Eq.~\ref{eq: adaptive_fusion_a}) and additive (Eq.~\ref{eq: adaptive_fusion_b}).

    \begin{subequations}
        \begin{align}
            S^{*}(\mathbf{x}; \theta, \gamma) &= \gamma(\mathbf{x}; f) \times S(\mathbf{x}; \theta) \label{eq: adaptive_fusion_a} \\
            S^{+}(\mathbf{x}; \theta, \gamma) &= \gamma(\mathbf{x}; f) + S(\mathbf{x}; \theta) \label{eq: adaptive_fusion_b}
        \end{align}
        \label{eq: adaptive_fusion}
    \end{subequations}

As shown in Table~\ref{table: fusion_strategy_imagenet}, a comparative evaluation on the ImageNet benchmark reveals that both strategies can achieve a similar level of performance. This confirms that the core discriminative power originates from the $\gamma$ signal itself, not the specific mathematical operator. 

However, a critical distinction emerged when analyzing their hyperparameter sensitivity and robustness. The scaling factors $\gamma^*$ (multiplicative) and $\gamma^+$ (additive) are both derived from Eq.~\ref{eq: gamma_design}, but they require different optimal settings for the clipping threshold, $c^*$ and $c^+$ respectively, as detailed in Equation~\ref{eq: fusion_gamma}. 

    \begin{subequations}
        \begin{align}
            \gamma^{*}(\mathbf{x};f) = \sum_{i=1}^n \min(f_{i}(\mathbf{x}), c^{*})  \label{eq: fusion_gamma_a} \\
            \gamma^{+}(\mathbf{x};f) = \sum_{i=1}^n \min(f_{i}(\mathbf{x}), c^{+}) \label{eq: fusion_gamma_b}
        \end{align}
        \label{eq: fusion_gamma}
    \end{subequations}

For proposed multiplicative strategy, the optimal threshold $c^*$ is set by selecting a moderate percentile as detailed in reproducibility section in Appendix~\ref{appendix: reproducibility} of the ID training data's $f_i(\mathbf{x})$ values. This procedure is robust, stable, and aligns with foundational post-hoc methods like ReAct and SCALE (details in Appendix~\ref{appendix: reproducibility}). For the additive strategy, we empirically found that the optimal threshold $c^+$ was consistently an order of magnitude smaller, (e.g., $c^+ \approx 0.1 \times c^*$). On ImageNet, this optimal $c^+$ value corresponds to an extremely low percentile of the ID data (e.g., below the 1st percentile for ResNet-50).

This low-percentile tuning makes the additive strategy operationally fragile. Tuning at the extreme low activation threshold could be fragile and sensitive to small shifts in data or model, making it a poor choice for a general-purpose method. Therefore, while both methods achieve similar performance, we chose multiplicative fusion as our primary strategy. It provides not only competitive performance but also the practical robustness and hyperparameter stability required of a \textit{plug-and-play} framework. This choice aligns conceptually with our \textit{elastic scaling} narrative, where $\gamma$ acts as an input-dependent modulator of the baseline score.

\begin{table*}[ht]
\centering
\resizebox{\textwidth}{!}{
\begin{tabular}{l l l  cc cc cc cc cc}
    \toprule
    \multirow{2}{*}{\textbf{Model}} & \multirow{2}{*}{\textbf{Fusion}} & \multirow{2}{*}{\textbf{Method}} & \multicolumn{2}{c}{SUN} & \multicolumn{2}{c}{Places365} & \multicolumn{2}{c}{Texture} & \multicolumn{2}{c}{iNaturalist} & \multicolumn{2}{c}{Average}  \\
    \cmidrule(lr){4-5} \cmidrule(lr){6-7} \cmidrule(lr){8-9} \cmidrule(lr){10-11} \cmidrule(lr){12-13}
    &&& \textbf{FPR95} $\downarrow$ & \textbf{AUROC} $\uparrow$ & \textbf{FPR95} $\downarrow$ & \textbf{AUROC} $\uparrow$ & \textbf{FPR95} $\downarrow$ & \textbf{AUROC} $\uparrow$ & \textbf{FPR95} $\downarrow$ & \textbf{AUROC} $\uparrow$ & \textbf{FPR95} $\downarrow$ & \textbf{AUROC} $\uparrow$ \\
    \midrule
    \multirow{14}{*}{\rotatebox{90}{ResNet-34}} & \multirow{7}{*}{\rotatebox{90}{$\approach(+)$}} & Energy       & 57.39 & 86.59 & 62.61 & 84.59 & 54.95 & 86.45 & 53.86 & 89.73 & 57.20 & 86.84 \\
                                                                && \textbf{$\approach(\mu)$}                     & 35.50 & 91.69 & 46.05 & 89.07 & 12.66 & 96.66 & 20.03 & 96.20 & 28.56 & 93.41 \\
                                                                && \textbf{$\approach(\sigma)$}                  & 39.93 & 90.29 & 49.61 & 87.65 & 14.45 & 95.83 & 25.53 & 95.03 & 32.38 & 92.20 \\
                                                                && \textbf{$\approach(m)$}                       & 45.07 & 89.94 & 57.08 & 86.80 & 10.46 & 97.61 & 26.55 & 95.17 & 34.79 & 92.38 \\
                                                                && \textbf{$\approach(\mu)+\texttt{ReAct}$}      & 22.23 & 94.80 & 32.07 & 92.09 & 18.19 & 95.91 & 15.96 & 96.89 & 22.11 & 94.92 \\
                                                                && \textbf{$\approach(\sigma)+\texttt{ReAct}$}   & 23.10 & 94.66 & 33.18 & 91.92 & 17.43 & 96.07 & 17.18 & 96.75 & 22.72 & 94.85 \\
                                                                && \textbf{$\approach(m)+\texttt{ReAct}$}        & 37.74 & 92.96 & 49.69 & 89.86 & 11.21 & 97.60 & 23.36 & 96.01 & 30.50 & 94.11 \\
    \cmidrule(lr){2-13}
                                & \multirow{7}{*}{\rotatebox{90}{$\approach(*)$}} & Energy       & 57.39 & 86.59 & 62.61 & 84.59 & 54.95 & 86.45 & 53.86 & 89.73 & 57.20 & 86.84 \\
                                                && \textbf{$\approach(\mu)$}                     & 33.46 & 91.85 & 43.78 & 89.39 & 24.86 & 93.39 & 25.60 & 94.99 & 31.92 & 92.41 \\
                                                && \textbf{$\approach(\sigma)$}                  & 37.78 & 90.74 & 48.87 & 87.82 & 16.08 & 95.80 & 24.90 & 95.10 & 31.91 & 92.36 \\
                                                && \textbf{$\approach(m)$}                       & 36.90 & 90.88 & 48.37 & 87.87 & 16.83 & 95.58 & 25.22 & 95.00 & 31.83 & 92.34 \\
                                                && \textbf{$\approach(\mu)+\texttt{ReAct}$}      & 21.44 & 95.18 & 31.74 & 92.56 & 13.39 & 97.02 & 12.81 & 97.47 & 19.84 & 95.56 \\
                                                && \textbf{$\approach(\sigma)+\texttt{ReAct}$}   & 21.80 & 95.06 & 32.11 & 92.41 & 12.73 & 97.11 & 13.01 & 97.41 & 19.91 & 95.50 \\
                                                && \textbf{$\approach(m)+\texttt{ReAct}$}        & 22.03 & 94.99 & 32.58 & 92.31 & 12.55 & 97.15 & 13.47 & 97.33 & 20.16 & 95.44 \\
    \midrule
    \multirow{14}{*}{\rotatebox{90}{ResNet-50}} & \multirow{7}{*}{\rotatebox{90}{$\approach(+)$}} & Energy       & 58.28 & 86.73 & 65.40 & 84.13 & 52.29 & 86.73 & 53.95 & 90.59 & 57.48 & 87.05 \\
                                                                && \textbf{$\approach(\mu)$}                     & 30.99 & 93.13 & 43.36 & 89.97 &  9.72 & 97.71 & 12.94 & 97.45 & 24.25 & 94.57 \\
                                                                && \textbf{$\approach(\sigma)$}                  & 34.07 & 91.87 & 45.43 & 88.64 & 10.44 & 97.19 & 15.25 & 96.88 & 26.30 & 93.64 \\
                                                                && \textbf{$\approach(m)$}                       & 42.08 & 91.70 & 55.43 & 88.17 &  9.47 & 98.10 & 20.81 & 96.44 & 31.95 & 93.60 \\
                                                                && \textbf{$\approach(\mu)+\texttt{ReAct}$}      & 19.73 & 95.28 & 29.47 & 92.73 & 12.68 & 97.10 &  9.99 & 97.89 & 17.97 & 95.75 \\
                                                                && \textbf{$\approach(\sigma)+\texttt{ReAct}$}   & 20.64 & 95.12 & 30.94 & 92.51 & 10.53 & 97.56 & 10.25 & 97.79 & 18.09 & 95.74 \\
                                                                && \textbf{$\approach(m)+\texttt{ReAct}$}        & 37.59 & 93.66 & 50.32 & 90.69 &  9.88 & 98.07 & 18.78 & 96.93 & 29.14 & 94.84 \\
    \cmidrule(lr){2-13}
                                & \multirow{7}{*}{\rotatebox{90}{$\approach(*)$}} & Energy       & 58.28 & 86.73 & 65.40 & 84.13 & 52.29 & 86.73 & 53.95 & 90.59 & 57.48 & 87.05 \\
                                                && \textbf{$\approach(\mu)$}                     & 30.79 & 92.67 & 42.59 & 89.78 & 22.29 & 94.01 & 18.02 & 96.46 & 28.42 & 93.23 \\
                                                && \textbf{$\approach(\sigma)$}                  & 35.73 & 91.47 & 48.35 & 88.04 & 15.85 & 95.94 & 19.05 & 96.21 & 29.75 & 92.92 \\
                                                && \textbf{$\approach(m)$}                       & 35.79 & 91.40 & 48.68 & 87.82 & 16.08 & 95.88 & 19.00 & 96.18 & 29.89 & 92.82 \\
                                                && \textbf{$\approach(\mu)+\texttt{ReAct}$}      & 18.46 & 95.82 & 28.98 & 93.31 & 12.11 & 97.38 & 8.54 & 98.19 & 17.02 & 96.18 \\
                                                && \textbf{$\approach(\sigma)+\texttt{ReAct}$}   & 19.13 & 95.61 & 29.58 & 93.04 & 12.04 & 97.38 & 9.10 & 98.06 & 17.46 & 96.02 \\
                                                && \textbf{$\approach(m)+\texttt{ReAct}$}        & 19.02 & 95.52 & 29.77 & 92.92 & 12.06 & 97.31 & 9.71 & 97.97 & 17.64 & 95.93 \\
    \midrule
    \multirow{14}{*}{\rotatebox{90}{MobileNet-v2}}  & \multirow{7}{*}{\rotatebox{90}{$\approach(+)$}} & Energy   & 59.36 & 86.24 & 66.27 & 83.21 & 54.54 & 86.58 & 55.31 & 90.34 & 58.87 & 86.59 \\
                                                                && \textbf{$\approach(\mu)$}                     & 39.84 & 90.64 & 54.28 & 86.57 & 12.20 & 96.47 & 32.62 & 94.00 & 34.73 & 91.92 \\
                                                                && \textbf{$\approach(\sigma)$}                  & 42.03 & 90.52 & 56.96 & 86.35 &  9.73 & 97.27 & 34.94 & 93.66 & 35.92 & 91.95 \\
                                                                && \textbf{$\approach(m)$}                       & 43.30 & 89.94 & 57.90 & 85.84 & 10.07 & 97.07 & 36.45 & 93.34 & 36.93 & 91.55 \\
                                                                && \textbf{$\approach(\mu)+\texttt{ReAct}$}      & 40.64 & 90.34 & 53.27 & 86.40 & 11.05 & 96.97 & 29.73 & 94.68 & 33.67 & 92.10 \\
                                                                && \textbf{$\approach(\sigma)+\texttt{ReAct}$}   & 41.41 & 90.61 & 55.85 & 86.46 &  8.17 & 97.78 & 31.46 & 94.37 & 34.22 & 92.31 \\
                                                                && \textbf{$\approach(m)+\texttt{ReAct}$}        & 41.98 & 90.31 & 55.76 & 86.19 &  8.19 & 97.71 & 31.75 & 94.27 & 34.42 & 92.12 \\
    \cmidrule(lr){2-13}
                                    & \multirow{7}{*}{\rotatebox{90}{$\approach(*)$}} & Energy   & 59.36 & 86.24 & 66.27 & 83.21 & 54.54 & 86.58 & 55.31 & 90.34 & 58.87 & 86.59 \\
                                                && \textbf{$\approach(\mu)$}                     & 37.74 & 91.43 & 52.21 & 87.33 & 23.42 & 94.17 & 33.47 & 93.84 & 36.71 & 91.69 \\
                                                && \textbf{$\approach(\sigma)$}                  & 38.20 & 91.26 & 53.04 & 86.84 & 14.02 & 96.37 & 29.25 & 94.63 & 33.63 & 92.27 \\
                                                && \textbf{$\approach(m)$}                       & 37.41 & 91.37 & 52.24 & 86.89 & 14.18 & 96.35 & 28.78 & 94.70 & 33.15 & 92.33 \\
                                                && \textbf{$\approach(\mu)+\texttt{ReAct}$}      & 32.82 & 92.93 & 48.62 & 88.59 & 13.60 & 96.83 & 28.19 & 94.89 & 30.81 & 93.31 \\
                                                && \textbf{$\approach(\sigma)+\texttt{ReAct}$}   & 37.53 & 91.22 & 51.32 & 87.19 & 10.18 & 97.31 & 27.21 & 95.12 & 31.56 & 92.71 \\
                                                && \textbf{$\approach(m)+\texttt{ReAct}$}        & 34.77 & 92.26 & 49.77 & 88.06 &  8.69 & 97.76 & 24.08 & 95.66 & 29.33 & 93.43 \\
    \midrule
    \multirow{14}{*}{\rotatebox{90}{DenseNet-121}}  & \multirow{7}{*}{\rotatebox{90}{$\approach(+)$}} & Energy   & 52.51 & 87.27 & 58.24 & 85.05 & 52.22 & 85.42 & 39.75 & 92.66 & 50.68 & 87.60 \\
                                                                && \textbf{$\approach(\mu)$}                     & 36.68 & 90.59 & 45.60 & 87.91 & 20.62 & 94.00 & 19.43 & 96.07 & 30.58 & 92.14 \\
                                                                && \textbf{$\approach(\sigma)$}                  & 35.10 & 90.97 & 44.54 & 88.22 & 16.90 & 95.11 & 18.40 & 96.24 & 28.73 & 92.64 \\
                                                                && \textbf{$\approach(m)$}                       & 38.70 & 91.31 & 50.39 & 88.13 & 11.86 & 97.36 & 21.68 & 95.89 & 30.66 & 93.17 \\
                                                                && \textbf{$\approach(\mu)+\texttt{ReAct}$}      & 38.24 & 91.84 & 47.00 & 88.61 & 14.34 & 97.03 & 17.80 & 96.50 & 29.35 & 93.49 \\
                                                                && \textbf{$\approach(\sigma)+\texttt{ReAct}$}   & 32.72 & 92.71 & 43.39 & 89.39 & 10.69 & 97.76 & 15.21 & 96.95 & 25.50 & 94.20 \\
                                                                && \textbf{$\approach(m)+\texttt{ReAct}$}        & 37.89 & 92.34 & 51.97 & 88.63 &  7.32 & 98.41 & 20.80 & 96.28 & 29.50 & 93.92 \\
    \cmidrule(lr){2-13}
                                    & \multirow{7}{*}{\rotatebox{90}{$\approach(*)$}} & Energy   & 52.51 & 87.27 & 58.24 & 85.05 & 52.22 & 85.42 & 39.75 & 92.66 & 50.68 & 87.60 \\
                                                && \textbf{$\approach(\mu)$}                     & 33.24 & 91.84 & 42.94 & 89.01 & 25.59 & 93.29 & 16.41 & 96.69 & 29.54 & 92.71 \\
                                                && \textbf{$\approach(\sigma)$}                  & 34.12 & 91.57 & 44.34 & 88.53 & 21.06 & 94.52 & 16.95 & 96.57 & 29.12 & 92.80 \\
                                                && \textbf{$\approach(m)$}                       & 34.29 & 91.47 & 44.74 & 88.35 & 21.26 & 94.43 & 17.50 & 96.46 & 29.45 & 92.68 \\
                                                && \textbf{$\approach(\mu)+\texttt{ReAct}$}      & 31.58 & 93.41 & 42.77 & 90.30 & 12.71 & 97.44 & 14.66 & 97.11 & 25.43 & 94.56 \\
                                                && \textbf{$\approach(\sigma)+\texttt{ReAct}$}   & 30.04 & 93.44 & 41.50 & 90.24 & 11.37 & 97.61 & 14.12 & 97.16 & 24.26 & 94.61 \\
                                                && \textbf{$\approach(m)+\texttt{ReAct}$}        & 30.25 & 93.37 & 41.61 & 90.13 & 11.74 & 97.52 & 14.48 & 97.09 & 24.52 & 94.53 \\
    \bottomrule
    \end{tabular}
    }
    \caption{\textit{Analysis of fusion strategies on detection performance on ImageNet benchmarks. All values are percentages and are averaged over four common OOD benchmark datasets. $\approach(+)$ represents additive strategy and $\approach(*)$ represents multiplicative strategy. The symbol $\boldsymbol{\downarrow}$ indicates lower values are better; $\boldsymbol{\uparrow}$ indicates larger values are better.}}
    \label{table: fusion_strategy_imagenet}
\end{table*}

\section{Alternate Statistics: Median and Entropy}
\label{appendix: alternate statistics}

To justify our final methodological choice of using mean, standard deviation, and maximum statistics, we conducted a rigorous analysis of two common alternatives: median and Shannon entropy. For a statistic to be viable for our framework, it must produce a scaling factor $\gamma$ that has a distinct and reliable signature for ID versus OOD samples.

\noindent
\textbf{Setup.} As we discussed in Section~\ref{sec: method} of main paper, we compute the scaling factor \(\gamma\) using a trained deep neural network \(\theta: \mathbb{R}^d \rightarrow \mathbb{R}^C\) that maps an input \(\mathbf{x} \in \mathbb{R}^d\) to a logit vector \(f(\mathbf{x}) \in \mathbb{R}^C\), where \(C = |\mathcal{Y}|\) denotes the number of output classes. The network’s penultimate layer produces a feature vector \(h(\mathbf{x}) \in \mathbb{R}^n\) by applying global average pooling to the activation map \(g(\mathbf{x}) \in \mathbb{R}^{n \times k \times k}\). Here, \(n\) is the number of channels, and each channel has spatial resolution \(k \times k\). A weight matrix \(\mathbf{W} \in \mathbb{R}^{n \times C}\) projects \(h(\mathbf{x})\) to the final logit vector.

\noindent
\textbf{Median.} We begin by extracting the \textit{median} from each activation map of \( g(\mathbf{x}) \in \mathbb{R}^{n \times k \times k} \), transforming it into an \(n\)-dimensional feature vector \( h(\mathbf{x}) \in \mathbb{R}^n \) using global median pooling, as defined in Equation~\ref{eq: median_pooling}:

\begin{equation}
    h(\mathbf{x}) = \texttt{median}\left( g(\mathbf{x}) \right)
    \label{eq: median_pooling}
\end{equation}

Here, \texttt{median} denotes a global median pooling operation applied independently to each of the \( n \) activation maps in \( g(\mathbf{x}) \). 

\noindent
\textbf{Shannon Entropy}. In addition to the median, we compute the \textit{Shannon entropy} for each activation map. For the \( i \)-th channel activation \( g_i(\mathbf{x}) \in \mathbb{R}^{k \times k} \), the entropy is computed as shown in Equation~\ref{eq: entropy}. To do so, we first flatten \( g_i(\mathbf{x}) \) into a vector of length \( k^2 \), and normalize it to define a discrete probability distribution \( p_{ij} \), as described in Equation~\ref{eq: prob}. By collecting the entropy values across all channels, we obtain the final feature representation \( h(\mathbf{x}) \in \mathbb{R}^n \), as defined in Equation~\ref{eq: entropy_pooling} .

\begin{small}
    \begin{equation}
        p_{ij} = \frac{g_i(\mathbf{x})_j}{\sum_{l=1}^{k^2} g_i(\mathbf{x})_l}, \quad j = 1, \dots, k^2
        \label{eq: prob}
    \end{equation}  
\end{small}
\begin{small}
    \begin{equation}
        \texttt{entropy}_i(\mathbf{x}) = -\sum_{j=1}^{k^2} p_{ij} \log p_{ij}
        \label{eq: entropy}
\end{equation}  
\end{small}
\begin{small}
    \begin{equation}
        \begin{split}
             h(\mathbf{x}) &= \texttt{entropy}\left( g(\mathbf{x}) \right) \\
             &= \left[ \texttt{entropy}_1(\mathbf{x}), \dots, \texttt{entropy}_n(\mathbf{x}) \right]^\top
        \end{split}
        \label{eq: entropy_pooling}
    \end{equation}
\end{small}

\noindent
\textbf{Computing the Scaling Factor $\gamma$.} The \(\gamma(\mathbf{x})\) computation using the median follows the same principle described in Section~\ref{sec: method}: a higher median activation is assumed to indicate an ID sample. The Shannon entropy, however, exhibits the opposite behavior. As alluded to in our motivation (Section~\ref{sec: introduction}) and empirically demonstrated (Figure~\ref{fig: feature_plot}), ID samples typically have lower entropy (i.e., less uncertainty) than OOD samples. So, to maintain the convention that an ID sample exhibits higher score than OOD samples, the entropy-based scaling factor must be inverted (i.e, \(\frac{1}{\gamma(\mathbf{x})}\) ) before it is applied to the baseline score.

\noindent
\textbf{Evaluation.} Using Shannon entropy as the information cue for our scaling factor \(\gamma\) yields a notable performance improvement, particularly when combined with strong baselines like ReAct+DICE. This enhancement is especially pronounced for the MobileNet-V2 architecture. As shown in Table~\ref{table: combination_MobileNet_imagenet}, our entropy-based scaling improves upon the vanilla ReAct+DICE baseline by 14.65\%, achieving superior performance among all foundational methods compared. A slight improvement of 5.90\% is also observed for the ResNet-50 backbone. For brevity, the table presents results for these two representative architectures. 

We present the performance of our method, $\approach$, across both ImageNet and CIFAR benchmarks when the scaling factor ($\gamma$) is computed using the median and entropy statistics. For the ImageNet evaluation, detailed results for the ResNet-50 and MobileNet-V2 architectures are shown in Table~\ref{table: combination_ResNet_imagenet} and Table~\ref{table: combination_MobileNet_imagenet}, respectively. For the CIFAR benchmarks, we present detailed results for two architectures. For ResNet-18, the performance on CIFAR-10 and CIFAR-100 is shown in Table~\ref{table: combination_ResNet-18_CIFAR-10} and Table~\ref{table: combination_ResNet-18_CIFAR-100}, respectively. Similarly, for DenseNet-101, the results for CIFAR-10 and CIFAR-100 are in Table~\ref{table: combination_DenseNet-101_CIFAR-10} and Table~\ref{table: combination_DenseNet-101_CIFAR-100}.

\noindent
\textbf{Discussion.} Across all evaluated benchmarks (Tables \ref{table: combination_ResNet_imagenet}–\ref{table: combination_DenseNet-101_CIFAR-100}), the results for the median statistic are conclusive. Across all evaluated benchmarks, using the median to compute $\gamma$ consistently degrades performance. This degradation occurs because the median violates our method's core assumption: its statistical signature fails to separate ID and OOD samples, resulting in a $\gamma$ with high overlap. Figure~\ref{fig: densenet-101_cifar100_svhn_md_scale_density_plot} provides a representative example of this distribution collapse. Given its consistent failure, the median was conclusively rejected as a viable statistic.

    \begin{figure}[ht]
        \centering
        \includegraphics[width=\columnwidth]{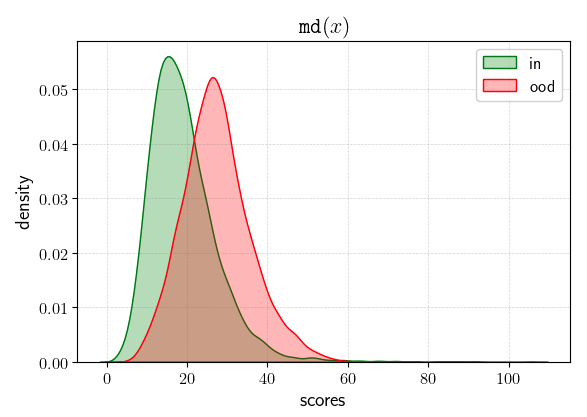}
        \caption{\textit{Distribution of the scaling factor, $\gamma$, computed using the median statistic. The model is a DenseNet-101 trained on CIFAR-100 (ID), evaluated against the SVHN dataset (OOD). The plot reveals that the OOD distribution is shifted to the right of the ID distribution, indicating that OOD samples produce a higher $\gamma$ value than ID samples. This contradicts the core assumption of our method, leading to degraded OOD detection performance.}}
        \label{fig: densenet-101_cifar100_svhn_md_scale_density_plot}
    \end{figure}

The analysis of Shannon entropy is more nuanced and reveals a critical insight. Entropy is not consistently ineffective; rather, it is inconsistent. In specific cases, entropy can be very effective. For example, on the ImageNet benchmark (Table~\ref{table: combination_MobileNet_imagenet}), the entropy-based $\gamma$ improves the \texttt{ReAct+DICE} baseline by 14.65\% on the MobileNet-V2 architecture, achieving the best performance for that specific model. However, this strong performance is not generalizable. On the same dataset but with a ResNet-50 backbone (Table~\ref{table: combination_ResNet_imagenet}), the improvement is minimal (5.90\%) and lags behind our proposed mean/std/max combination. This inconsistency can also be seen in ablation study of using scaling factor standalone as scoring metric in Appendix~\ref{appendix: scaling factor as score}, where we show that $\gamma_{\text{entropy}}$ as a standalone score was dominant on CIFAR but failed to generalize to ImageNet.

This rigor confirms that our chosen statistics (mean, standard deviation, maximum) are the most effective choice, providing a robust and consistently high-performing signal across all models and datasets.

\section{Analysis of $\gamma$ as a Standalone OOD Score}
\label{appendix: scaling factor as score}

    The core hypothesis of our work is that the scaling factor $\gamma(\mathbf{x})$, contains significant discriminative information. While our primary method uses $\gamma$ as a modulator for existing OOD scores (e.g., Energy, MSP, ODIN), an important question is whether $\gamma$ is powerful enough to serve as a standalone OOD scoring function. Furthermore, this analysis allows us to identify which of its component statistics are the most robust and generalizable.

    To investigate this, we conducted a standalone analysis of $\gamma$, comparing it directly to the strong Energy score. We computed $\gamma$ individually from four distinct channel-wise statistics: mean, standard deviation, maximum, and entropy. (We omit median as an initial analysis, detailed in Appendix~\ref{appendix: alternate statistics}, showed insufficient discriminative power). For entropy, we found its reciprocal ($1/\gamma_{\text{entropy}}$) without thresholding was its most potent configuration, and we use that for this analysis.

    \noindent
    \textbf{Analysis on ImageNet.} To test the generalizability of these findings, we repeated the analysis on the large-scale ImageNet benchmark (Table~\ref{table: gamma_score_imagenet}). Here, the trend dramatically reversed. The scores derived from $\gamma_{\text{std}}$, $\gamma_{\text{max}}$, and even $\gamma_{\text{mean}}$ remained robust and generalizable, consistently outperforming the Energy baseline by a significant margin (e.g., 19.32\% for $\gamma_{\text{std}}$ on ResNet-50).

    In sharp contrast, the $\gamma_{\text{entropy}}$ score, which was dominant on CIFAR, failed to generalize. It not only performed worse than the other $\gamma$ statistics but also failed to consistently beat the Energy baseline. For instance, on DenseNet-121, it scored a poor 53.09\% FPR95 compared to Energy's 50.68\% (a ~2.4-point gap), and on ResNet-50, it merely matched the Energy score (56.73\% vs. 57.48\%). This demonstrates that entropy, while powerful on simpler datasets, is not a reliable or generalizable statistic for OOD detection on more complex, large-scale tasks.

    This analysis provides a critical insight and directly justifies our final methodological design (Equation~\ref{eq: gamma_c} and \ref{eq: gamma_design}). Our Catalyst framework is constructed by combining the statistics that proved to be consistently robust across all benchmarks (mean, standard deviation, maximum), while entropy is deliberately excluded due to its clear lack of generalizability.

    \begin{figure*}[ht]
        \centering
        \includegraphics[width=0.95\textwidth]{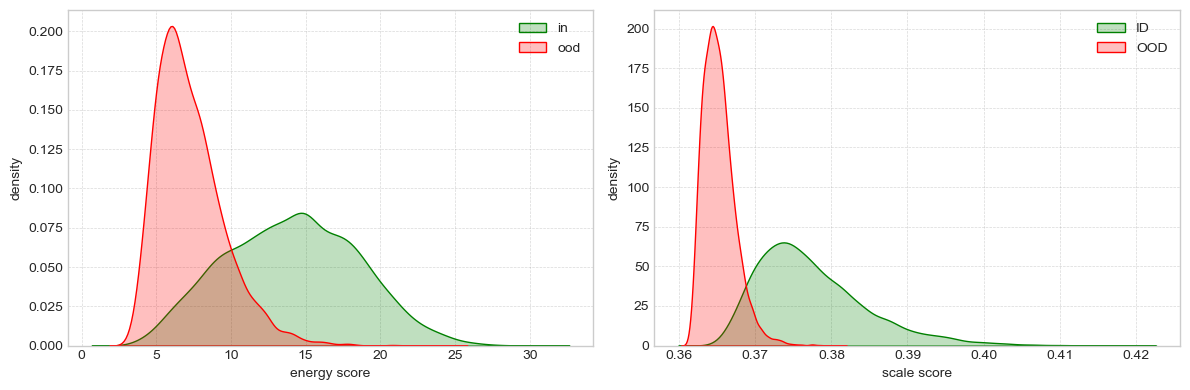}
        \caption{\textit{Superior OOD separation of $\gamma_{\text{entropy}}$ as a standalone score on CIFAR-100. The model is a ResNet-18 trained on CIFAR-100 (ID), evaluated against the Texture dataset (OOD). (Left) Significant distribution overlap between ID and OOD using the baseline Energy score. (Right) Dramatically improved separation using the standalone $\gamma_{\text{entropy}}$ score.This visualization confirms the finding from Table~\ref{table: gamma_score_cifar} that entropy is an exceptionally powerful standalone signal on the CIFAR benchmarks.}}
        \label{fig: ResNet-18_cifar100_textrue_enrgy_scale_density_plot}
    \end{figure*}

    \begin{table*}[ht]
    \centering
    \resizebox{\textwidth}{!}{
    \begin{tabular}{l l  cc cc cc cc cc}
        \toprule
        \multirow{2}{*}{\textbf{Model}} & \multirow{2}{*}{\textbf{Method}} & \multicolumn{2}{c}{SUN} & \multicolumn{2}{c}{Places} & \multicolumn{2}{c}{Texture} & \multicolumn{2}{c}{iNaturalist} & \multicolumn{2}{c}{Average}  \\
        \cmidrule(lr){3-4} \cmidrule(lr){5-6} \cmidrule(lr){7-8} \cmidrule(lr){9-10} \cmidrule(lr){11-12}
        && \textbf{FPR95} $\downarrow$ & \textbf{AUROC} $\uparrow$ & \textbf{FPR95} $\downarrow$ & \textbf{AUROC} $\uparrow$ & \textbf{FPR95} $\downarrow$ & \textbf{AUROC} $\uparrow$ & \textbf{FPR95} $\downarrow$ & \textbf{AUROC} $\uparrow$ & \textbf{FPR95} $\downarrow$ & \textbf{AUROC} $\uparrow$ \\
        \midrule
        \multirow{5}{*}{ResNet-34}  & Energy  & 57.39 & 86.59 & 62.61 & 84.59 & 54.95 & 86.45 & 53.86 & 89.73 & 57.20 & 86.84 \\
                                    & Mean    & 44.83 & 96.01 & 56.88 & 94.29 & 27.99 & 98.88 & 35.96 & 97.69 & 41.42 & 96.72 \\
                                    & Std     & 56.73 & 94.09 & 69.36 & 91.74 & 18.81 & 99.22 & 42.05 & 97.14 & 46.74 & 95.55 \\
                                    & Max     & 54.94 & 94.12 & 68.05 & 91.63 & 19.11 & 99.19 & 42.87 & 97.01 & 46.24 & 95.49 \\
                                    & Entropy & 63.75 & 91.93 & 75.47 & 88.55 & 20.64 & 99.03 & 52.63 & 95.25 & 53.12 & 93.69 \\
    
        \midrule
        \multirow{5}{*}{ResNet-50}  & Energy  & 58.28 & 86.73 & 65.40 & 84.13 & 52.29 & 86.73 & 53.95 & 90.59 & 57.48 & 87.05 \\
                                    & Mean    & 43.71 & 96.52 & 56.03 & 94.69 & 27.23 & 99.00 & 30.67 & 98.22 & 39.41 & 97.11 \\
                                    & Std     & 56.57 & 94.83 & 69.33 & 92.40 & 20.36 & 99.20 & 39.24 & 97.61 & 46.37 & 96.01 \\
                                    & Max     & 56.12 & 94.63 & 69.41 & 92.03 & 20.28 & 99.19 & 39.67 & 97.53 & 46.37 & 95.84 \\
                                    & Entropy & 71.43 & 90.98 & 81.43 & 87.52 & 21.84 & 98.90 & 52.21 & 95.38 & 56.73 & 93.20 \\
    
        \midrule
        \multirow{5}{*}{MobileNet-v2}   & Energy  & 59.36 & 86.24 & 66.27 & 83.21 & 54.54 & 86.58 & 55.31 & 90.34 & 58.87 & 86.59 \\
                                        & Mean    & 44.23 & 96.84 & 60.08 & 94.60 & 21.14 & 99.40 & 45.29 & 96.98 & 42.68 & 96.96 \\
                                        & Std     & 50.97 & 96.02 & 67.22 & 93.28 & 14.54 & 99.60 & 46.03 & 97.19 & 44.69 & 96.52 \\
                                        & Max     & 50.33 & 95.96 & 66.70 & 93.10 & 14.56 & 99.60 & 46.58 & 97.22 & 44.54 & 96.47 \\
                                        & Entropy & 56.30 & 94.59 & 71.04 & 90.89 & 15.43 & 99.52 & 50.40 & 96.28 & 48.29 & 95.32 \\
    
        \midrule
        \multirow{5}{*}{DenseNet-121}   & Energy  & 52.51 & 87.27 & 58.24 & 85.05 & 52.22 & 85.42 & 39.75 & 92.66 & 50.68 & 87.60 \\
                                        & Mean    & 56.90 & 95.02 & 68.16 & 93.11 & 36.99 & 98.58 & 40.51 & 97.42 & 50.64 & 96.03 \\
                                        & Std     & 58.44 & 94.80 & 69.69 & 92.64 & 29.63 & 98.89 & 41.84 & 97.36 & 49.90 & 95.92 \\
                                        & Max     & 58.35 & 94.65 & 70.09 & 92.37 & 29.68 & 98.89 & 43.02 & 97.21 & 50.29 & 95.78 \\
                                        & Entropy & 61.31 & 92.74 & 73.05 & 89.28 & 29.86 & 98.56 & 48.13 & 95.38 & 53.09 & 93.99 \\
    
        \bottomrule
        \end{tabular}
        }
        \caption{\textit{Detailed OOD detection results on ImageNet benchmarks using scaling factor $\gamma$ as a standalone scoring metric. $\boldsymbol{\downarrow}$ indicates lower values are better and $\boldsymbol{\uparrow}$ indicates larger values are better.}}
        \label{table: gamma_score_imagenet}
    \end{table*}

    \noindent
    \textbf{Analysis on CIFAR.} As shown in Table~\ref{table: gamma_score_cifar}, the standalone performance of $\gamma$ on CIFAR is remarkably strong. The scores from $\gamma_{\text{std}}$ and $\gamma_{\text{max}}$ are highly competitive, matching or exceeding the Energy score baseline. The $\gamma_{\text{mean}}$ score is less effective, which aligns with the poor signal separation observed in our motivational analysis (Figure~\ref{fig: feature_plot}). 
    
    Notably, the $\gamma_{\text{entropy}}$ score is effective on these benchmarks. It dramatically outperforms the Energy baseline by 69.03\% (CIFAR-10) and 31.62\% (CIFAR-100) on ResNet-18, and by 33.69\% (CIFAR-10) and 15.29\% (CIFAR-100) on DenseNet-101. Based on this initial finding, entropy would appear to be the most powerful standalone signal. As a representative, Figure~\ref{fig: ResNet-18_cifar100_textrue_enrgy_scale_density_plot} visually demonstrates the superior distribution separation achieved by the standalone $\gamma_{\text{entropy}}$ score compared to the Energy baseline on the CIFAR-100 benchmark.

    \begin{landscape}
        \begin{table*}[ht]
        \centering
        \resizebox{\linewidth}{!}{
        \begin{tabular}{ l l l  cc cc cc cc cc cc cc }
            \toprule
             \multirow{2}{*}{\textbf{Dataset}} & \multirow{2}{*}{\textbf{Model}} & \multirow{2}{*}{\textbf{Method}} & \multicolumn{2}{c}{\textbf{SVHN}} & \multicolumn{2}{c}{\textbf{Place365}} & \multicolumn{2}{c}{\textbf{iSUN}} & \multicolumn{2}{c}{\textbf{Textures}} & \multicolumn{2}{c}{\textbf{LSUN-c}}  & \multicolumn{2}{c}{\textbf{LSUN-r}} & \multicolumn{2}{c}{\textbf{Average}}  \\
            \cmidrule(lr){4-5} \cmidrule(lr){6-7} \cmidrule(lr){8-9} \cmidrule(lr){10-11} \cmidrule(lr){12-13} \cmidrule(lr){14-15} \cmidrule(lr){16-17}
            &&& \textbf{FPR95} $\downarrow$ & \textbf{AUROC} $\uparrow$ & \textbf{FPR95} $\downarrow$ & \textbf{AUROC} $\uparrow$ & \textbf{FPR95} $\downarrow$ & \textbf{AUROC} $\uparrow$ & \textbf{FPR95} $\downarrow$ & \textbf{AUROC} $\uparrow$ & \textbf{FPR95} $\downarrow$ & \textbf{AUROC} $\uparrow$ & \textbf{FPR95} $\downarrow$ & \textbf{AUROC} $\uparrow$ & \textbf{FPR95} $\downarrow$ & \textbf{AUROC} $\uparrow$ \\
            \midrule
            \multirow{10}{*}{CIFAR-10}  & \multirow{5}{*}{ResNet-18} & Energy  & 44.32 & 94.04 & 41.43 & 91.72 & 35.22 & 94.70 & 50.30 & 91.11 &  9.77 & 98.19 & 31.97 & 95.26 & 35.50 & 94.17 \\
                                                                    && Mean    & 29.84 & 85.75 & 97.48 & 38.81 & 84.09 & 57.59 & 58.63 & 79.92 & 30.37 & 91.33 & 87.08 & 53.23 & 64.58 & 67.77 \\
                                                                    && Std     &  5.56 & 98.97 & 56.34 & 88.34 & 17.21 & 97.48 & 12.43 & 98.56 &  0.35 & 99.87 & 20.93 & 96.74 & 18.80 & 96.66 \\
                                                                    && Max     &  5.39 & 99.00 & 54.87 & 88.51 & 14.49 & 97.81 & 11.42 & 98.62 &  0.57 & 99.84 & 17.56 & 97.16 & 17.38 & 96.82 \\
                                                                    && Entropy &  6.14 & 98.81 & 36.31 & 93.29 &  7.28 & 98.83 &  8.09 & 99.07 &  0.73 & 99.81 &  8.65 & 98.55 & 11.20 & 98.06 \\
            \cmidrule(lr){2-17}
            & \multirow{6}{*}{DenseNet-101}  & Energy  & 37.91 & 93.59 & 36.42 & 92.38 &  7.33 & 98.27 & 43.87 & 90.48 &  1.95 & 99.47 &  6.97 & 98.38 & 22.41 & 95.43 \\
                                            && Mean    &  7.66 & 98.49 & 87.02 & 71.43 & 50.85 & 93.13 & 28.39 & 94.51 &  9.86 & 98.39 & 59.84 & 91.56 & 40.60 & 91.25 \\
                                            && Std     & 10.35 & 97.69 & 77.55 & 74.92 & 15.17 & 97.59 & 18.28 & 96.94 &  1.54 & 99.62 & 16.99 & 97.06 & 23.31 & 93.97 \\
                                            && Max     & 10.47 & 97.62 & 77.68 & 74.83 & 16.35 & 97.47 & 17.52 & 97.11 &  2.73 & 99.42 & 18.48 & 96.87 & 23.87 & 93.89 \\
                                            && Entropy &  8.80 & 97.67 & 53.28 & 86.95 &  7.13 & 98.79 &  9.79 & 98.56 &  2.68 & 99.43 &  7.54 & 98.59 & 14.86 & 96.67 \\
            \midrule                              
            \multirow{12}{*}{CIFAR-100}  &\multirow{5}{*}{ResNet-18} & Energy  & 66.64 & 89.53 & 81.39 & 76.83 & 71.46 & 83.02 & 85.18 & 75.68 & 48.01 & 91.63 & 68.57 & 84.53 & 70.21 & 83.54 \\
                                                                    && Mean    & 88.94 & 80.05 & 98.54 & 42.35 & 96.46 & 58.22 & 63.79 & 79.95 & 57.49 & 79.04 & 97.24 & 54.51 & 83.74 & 65.68 \\
                                                                    && Std     & 36.12 & 93.51 & 96.38 & 49.35 & 76.44 & 82.19 & 46.88 & 88.67 & 26.59 & 95.08 & 81.57 & 79.82 & 60.66 & 81.43 \\
                                                                    && Max     & 35.18 & 93.69 & 95.98 & 50.69 & 77.05 & 83.24 & 46.92 & 88.95 & 25.07 & 95.52 & 82.80 & 80.45 & 60.50 & 82.09 \\
                                                                    && Entropy & 22.71 & 95.73 & 93.66 & 62.23 & 56.89 & 90.97 & 36.17 & 93.60 & 16.57 & 97.24 & 62.06 & 89.39 & 48.01 & 88.19 \\
             \cmidrule(lr){2-17}
            & \multirow{6}{*}{DenseNet-101}  & Energy  & 70.99 & 86.66 & 77.28 & 76.94 & 59.39 & 85.68 & 83.49 & 67.47 & 11.45 & 97.89 & 50.90 & 88.57 & 58.92 & 83.87 \\
                                            && Mean    & 31.77 & 94.25 & 95.11 & 55.02 & 78.44 & 81.61 & 40.41 & 92.03 & 18.11 & 97.11 & 87.01 & 77.82 & 58.47 & 82.97 \\
                                            && Std     & 30.65 & 94.73 & 93.83 & 60.93 & 65.01 & 87.40 & 34.13 & 94.21 & 11.03 & 98.26 & 73.04 & 85.11 & 51.28 & 86.77 \\
                                            && Max     & 30.81 & 94.70 & 93.85 & 62.60 & 64.63 & 88.43 & 33.24 & 94.68 & 14.43 & 97.74 & 73.70 & 86.22 & 51.77 & 87.39 \\
                                            && Entropy & 34.67 & 93.78 & 93.64 & 65.11 & 58.01 & 89.88 & 30.53 & 95.32 & 16.15 & 97.34 & 66.51 & 88.03 & 49.91 & 88.24 \\
            \bottomrule
            \end{tabular}
            }
            \caption{\textit{Detailed OOD detection results on CIFAR benchmarks using scaling factor $\gamma$ as a standalone scoring metric.$\boldsymbol{\downarrow}$ indicates lower values are better and $\boldsymbol{\uparrow}$ indicates larger values are better.}}
            \label{table: gamma_score_cifar}
        \end{table*}
    \end{landscape}

\section{Societal Impact}
\label{appendix: social impact}

    The reliable detection of out-of-distribution (OOD) inputs is a fundamental requirement for safe and trustworthy deployment of machine learning systems. This capability is critical in high-stakes domains such as autonomous transportation, where an unexpected object on the road must be identified as anomalous, and in medical diagnostics, where a model must recognize that a scan presents features of an unseen disease. By improving the separation between in-distribution (ID) and OOD data, our work directly contributes to building more robust and dependable AI. The primary benefit of our approach, $\approach$, is its potential to reduce critical failure rates, which is crucial for ensuring user safety and earning public trust in automated systems.

    The broader impact of this research lies in enhancing the safety and reliability of AI. Our work adheres to ethical research standards, does not involve human subjects, and uses publicly available datasets. While any powerful technology can have unforeseen applications, our work is fundamentally aimed at mitigating the harm that arises from brittle AI models that fail silently or unpredictably when faced with novel inputs. By releasing our code to the public, we hope to foster further research, encourage reproducibility, and accelerate the development of more robust AI systems that can be deployed responsibly in society.

\section{Synergy with Existing Methods}
\label{appendix: combined method}

$\approach$ is designed to be fully compatible with existing post-hoc OOD detection techniques, enabling seamless integration with widely used methods such as \texttt{MSP}~\cite{msp}, \texttt{ODIN}~\cite{odin}, \texttt{Energy}~\cite{energy}, \texttt{ReAct}~\cite{ReAct}, \texttt{DICE}~\cite{DICE}, \texttt{ASH}~\cite{ASH} and \texttt{KNN}~\cite{knn}. Rather than replacing these techniques, $\approach$ acts as a complementary module. It enhances their ability to separate in-distribution and out-of-distribution samples through an elastic scaling mechanism, introducing an additional degree of freedom that works in tandem with them. In our evaluation, we omit ODIN~\cite{odin} from our analysis due to its high computational cost and limited performance on large-scale datasets like ImageNet. The method's expense stems from requiring an FGSM-based perturbation for every input sample.

To demonstrate the effectiveness of this synergy on the ImageNet benchmark, we present detailed experimental results in Table~\ref{table: combination_ResNet_imagenet} and Table~\ref{table: combination_MobileNet_imagenet}, showing the performance of each baseline method when combined with $\approach$. The results consistently indicate performance improvements, validating the benefit of integrating $\approach$ with established methods. For brevity, the table presents results for these two representative architectures, ResNet-50 and MobileNet-V2. 

\begin{table*}[ht]
\centering

\resizebox{0.97\textwidth}{!}{
\begin{tabular}{ c l  cc cc cc cc cc }
    \toprule
     \multirow{2}{*}{\textbf{Model}} & \multirow{2}{*}{\textbf{Combined Method}} & \multicolumn{2}{c}{\textbf{SUN}} & \multicolumn{2}{c}{\textbf{Place365}} & \multicolumn{2}{c}{\textbf{Textures}} & \multicolumn{2}{c}{\textbf{iNaturalist}} & \multicolumn{2}{c}{\textbf{Average}}  \\
    \cmidrule(lr){3-4} \cmidrule(lr){5-6} \cmidrule(lr){7-8} \cmidrule(lr){9-10} \cmidrule(lr){11-12}
    && \textbf{FPR95} $\downarrow$ & \textbf{AUROC} $\uparrow$ & \textbf{FPR95} $\downarrow$ & \textbf{AUROC} $\uparrow$ & \textbf{FPR95} $\downarrow$ & \textbf{AUROC} $\uparrow$ & \textbf{FPR95} $\downarrow$ & \textbf{AUROC} $\uparrow$ & \textbf{FPR95} $\downarrow$ & \textbf{AUROC} $\uparrow$ \\
    \midrule
    \multirow{42}{*}{ResNet-50} & \texttt{MSP}                   & 68.58 & 81.75 & 71.57 & 80.63 & 66.13 & 80.46 & 52.77 & 88.42 & 64.76 & 82.82 \\
                                & + $\approach(\mu)$             & 56.58 & 87.33 & 62.93 & 85.10 & 52.48 & 87.81 & 39.20 & 92.41 & 52.80 & 88.16 \\
                                & + $\approach(\sigma)$          & 59.13 & 85.63 & 65.72 & 82.89 & 42.66 & 91.76 & 39.26 & 92.31 & 51.69 & 88.15 \\
                                & + $\approach(m)$               & 58.92 & 85.51 & 65.71 & 82.53 & 42.82 & 91.67 & 39.02 & 92.29 & 51.62 & 88.00 \\
                                & + $\approach(\texttt{md})$     & 53.93 & 88.34 & 59.01 & 86.99 & 63.49 & 79.42 & 42.50 & 91.27 & 54.73 & 86.51 \\
                                & + $\approach(e)$               & 67.59 & 82.41 & 70.90 & 80.79 & 63.87 & 83.21 & 51.05 & 89.05 & 63.35 & 83.86 \\

    \cmidrule(lr){2-12}
                                & \texttt{Energy}                & 58.28 & 86.73 & 65.40 & 84.13 & 52.29 & 86.73 & 53.95 & 90.59 & 57.48 & 87.05 \\
                                & + $\approach(\mu)$             & 30.79 & 92.67 & 42.59 & 89.78 & 22.29 & 94.01 & 18.02 & 96.46 & 28.42 & 93.23 \\
                                & + $\approach(\sigma)$          & 35.73 & 91.47 & 48.35 & 88.04 & 15.85 & 95.94 & 19.05 & 96.21 & 29.75 & 92.92 \\
                                & + $\approach(m)$               & 35.79 & 91.40 & 48.68 & 87.82 & 16.08 & 95.88 & 19.00 & 96.18 & 29.89 & 92.82 \\
                                & + $\approach(\texttt{md})$     & 30.23 & 93.24 & 38.47 & 91.31 & 47.70 & 87.77 & 25.46 & 95.29 & 35.46 & 91.90 \\
                                & + $\approach(e)$               & 52.40 & 87.75 & 62.06 & 84.76 & 41.35 & 89.27 & 44.22 & 92.20 & 50.01 & 88.50 \\
    \cmidrule(lr){2-12}
                                & \texttt{ReAct}                 & 23.68 & 94.44 & 33.33 & 91.96 & 46.33 & 90.30 & 19.73 & 96.37 & 30.77 & 93.27 \\
                                & + $\approach(\mu)$             & 18.46 & 95.82 & 28.98 & 93.31 & 12.11 & 97.38 & 8.54 & 98.19 & 17.02 & 96.18 \\
                                & + $\approach(\sigma)$          & 19.13 & 95.61 & 29.58 & 93.04 & 12.04 & 97.38 & 9.10 & 98.06 & 17.46 & 96.02 \\
                                & + $\approach(m)$               & 19.02 & 95.52 & 29.77 & 92.92 & 12.06 & 97.31 & 9.71 & 97.97 & 17.64 & 95.93 \\
                                & + $\approach(\texttt{md})$     & 24.41 & 95.55 & 31.75 & 93.63 & 57.62 & 87.82 & 22.06 & 96.18 & 33.96 & 93.30 \\
                                & + $\approach(e)$               & 22.64 & 94.55 & 34.09 & 91.58 & 22.27 & 94.91 & 13.94 & 97.25 & 23.23 & 94.57 \\
    \cmidrule(lr){2-12}
                                & \texttt{DICE}                  & 36.11 & 91.01 & 47.62 & 87.76 & 32.38 & 90.48 & 26.48 & 94.53 & 35.65 & 90.94 \\
                                & + $\approach(\mu)$             & 31.28 & 92.16 & 43.09 & 89.11 & 19.91 & 94.46 & 16.59 & 96.58 & 27.72 & 93.08 \\
                                & + $\approach(\sigma)$          & 32.86 & 91.85 & 45.67 & 88.45 & 20.05 & 94.45 & 20.18 & 95.82 & 29.69 & 92.65 \\
                                & + $\approach(m)$               & 33.56 & 91.77 & 47.04 & 88.18 & 18.62 & 95.06 & 20.63 & 95.82 & 29.96 & 92.71 \\
                                & + $\approach(\texttt{md})$     & 30.26 & 93.53 & 38.80 & 91.09 & 46.90 & 88.00 & 25.14 & 95.17 & 35.27 & 91.95 \\
                                & + $\approach(e)$               & 36.11 & 90.98 & 47.72 & 87.72 & 32.38 & 90.48 & 26.60 & 94.50 & 35.70 & 90.92 \\
    \cmidrule(lr){2-12}
                                & \texttt{ReAct+DICE}            & 24.05 & 94.31 & 34.28 & 91.71 & 28.40 & 93.33 & 14.90 & 97.06 & 25.41 & 94.10 \\
                                & + $\approach(\mu)$             & 23.47 & 94.82 & 34.08 & 92.18 & 14.45 & 96.91 & 10.86 & 97.86 & 20.72 & 95.44 \\
                                & + $\approach(\sigma)$          & 23.76 & 94.65 & 34.58 & 92.06 & 15.07 & 96.70 & 11.40 & 97.75 & 21.20 & 95.29 \\
                                & + $\approach(m)$               & 25.33 & 94.46 & 36.92 & 91.67 & 13.81 & 97.02 & 12.35 & 97.61 & 22.10 & 95.19 \\
                                & + $\approach(\texttt{md})$     & 24.25 & 95.25 & 32.08 & 93.19 & 46.68 & 90.50 & 18.80 & 96.63 & 30.45 & 93.89 \\
                                & + $\approach(e)$               & 22.07 & 94.78 & 31.61 & 92.32 & 28.00 & 93.54 & 13.98 & 97.25 & 23.91 & 94.47 \\
    \cmidrule(lr){2-12}
                                & \texttt{ASH}                   & 28.01 & 94.02 & 39.84 & 90.98 & 11.95 & 97.60 & 11.52 & 97.87 & 22.83 & 95.12 \\
                                & + $\approach(\mu)$             & 28.97 & 93.75 & 41.04 & 90.53 & 11.47 & 97.79 & 12.08 & 97.74 & 23.39 & 94.95 \\
                                & + $\approach(\sigma)$          & 29.76 & 93.73 & 41.75 & 90.77 & 11.56 & 97.57 & 12.24 & 97.75 & 23.83 & 94.95 \\
                                & + $\approach(m)$               & 28.23 & 93.97 & 40.20 & 90.90 & 11.49 & 97.73 & 11.60 & 97.85 & 22.88 & 95.11 \\
                                & + $\approach(\texttt{md})$     & 26.32 & 94.43 & 36.36 & 91.67 & 19.52 & 96.17 & 13.95 & 97.35 & 24.04 & 94.91 \\
                                & + $\approach(e)$               & 27.96 & 94.01 & 39.81 & 90.97 & 11.93 & 97.60 & 11.50 & 97.87 & 22.80 & 95.11 \\
    \cmidrule(lr){2-12}
                                & \texttt{SCALE}                 & 25.78 & 94.54 & 36.86 & 91.96 & 14.56 & 96.75 & 10.37 & 98.02 & 21.89 & 95.32 \\
                                & + $\approach(\mu)$             & 25.38 & 94.57 & 36.55 & 91.83 & 11.90 & 97.45 & 10.11 & 98.06 & 20.98 & 95.48 \\
                                & + $\approach(\sigma)$          & 25.58 & 94.51 & 36.99 & 91.77 & 11.83 & 97.48 & 10.31 & 98.03 & 21.18 & 95.45 \\
                                & + $\approach(m)$               & 25.60 & 94.51 & 37.09 & 91.76 & 11.79 & 97.48 & 10.32 & 98.02 & 21.20 & 95.44 \\
                                & + $\approach(\texttt{md})$     & 23.67 & 95.12 & 33.19 & 92.83 & 23.37 & 95.15 & 12.91 & 97.56 & 23.28 & 95.17 \\
                                & + $\approach(e)$               & 25.77 & 94.47 & 37.04 & 91.81 & 14.01 & 96.84 & 10.34 & 98.02 & 21.79 & 95.29  \\
    \bottomrule
    \end{tabular}
    }
    \caption{\textit{Detailed results of post-hoc methods combined with $\approach$ on four OOD benchmarks: SUN, Places365, Textures, and iNaturalist using ResNet-50 trained on ImageNet-1K. $\boldsymbol{\uparrow}$ indicates higher is better; $\boldsymbol{\downarrow}$ indicates lower is better. The symbols denote the statistic used: $\mu$ (mean), $\sigma$ (std. deviation), $m$ (maximum), $\texttt{md}$ (median), and $e$ (Shannon entropy).}}
    \label{table: combination_ResNet_imagenet}
\end{table*}

\begin{table*}[ht]
\centering

\resizebox{0.97\textwidth}{!}{
\begin{tabular}{ c l  cc cc cc cc cc }
    \toprule
     \multirow{2}{*}{\textbf{Model}} & \multirow{2}{*}{\textbf{Combined Method}} & \multicolumn{2}{c}{\textbf{SUN}} & \multicolumn{2}{c}{\textbf{Place365}} & \multicolumn{2}{c}{\textbf{Textures}} & \multicolumn{2}{c}{\textbf{iNaturalist}} & \multicolumn{2}{c}{\textbf{Average}}  \\
    \cmidrule(lr){3-4} \cmidrule(lr){5-6} \cmidrule(lr){7-8} \cmidrule(lr){9-10} \cmidrule(lr){11-12}
    && \textbf{FPR95} $\downarrow$ & \textbf{AUROC} $\uparrow$ & \textbf{FPR95} $\downarrow$ & \textbf{AUROC} $\uparrow$ & \textbf{FPR95} $\downarrow$ & \textbf{AUROC} $\uparrow$ & \textbf{FPR95} $\downarrow$ & \textbf{AUROC} $\uparrow$ & \textbf{FPR95} $\downarrow$ & \textbf{AUROC} $\uparrow$ \\
    \midrule
    \multirow{42}{*}{MobileNet-V2} & \texttt{MSP}                & 74.20 & 78.88 & 76.89 & 78.14 & 70.99 & 78.95 & 59.86 & 86.72 & 70.49 & 80.67 \\
                                & + $\approach(\mu)$             & 63.48 & 85.01 & 69.78 & 82.57 & 57.46 & 86.55 & 48.69 & 90.06 & 59.85 & 86.05 \\
                                & + $\approach(\sigma)$          & 63.13 & 84.81 & 69.71 & 81.73 & 43.67 & 91.44 & 45.33 & 90.88 & 55.46 & 87.22 \\
                                & + $\approach(m)$               & 63.19 & 84.78 & 69.63 & 81.84 & 46.12 & 90.69 & 46.05 & 90.73 & 56.25 & 87.01 \\
                                & + $\approach(\texttt{md})$     & 66.47 & 83.52 & 70.73 & 82.29 & 73.14 & 76.11 & 57.43 & 87.31 & 66.94 & 82.31 \\
                                & + $\approach(e)$               & 72.66 & 80.27 & 75.86 & 78.90 & 67.94 & 81.88 & 57.41 & 87.59 & 68.47 & 82.16 \\

    \cmidrule(lr){2-12}
                                & \texttt{Energy}                & 59.36 & 86.24 & 66.27 & 83.21 & 54.54 & 86.58 & 55.31 & 90.34 & 58.87 & 86.59 \\
                                & + $\approach(\mu)$             & 37.74 & 91.43 & 52.21 & 87.33 & 23.42 & 94.17 & 33.47 & 93.84 & 36.71 & 91.69 \\
                                & + $\approach(\sigma)$          & 38.20 & 91.26 & 53.04 & 86.84 & 14.02 & 96.37 & 29.25 & 94.63 & 33.63 & 92.27 \\
                                & + $\approach(m)$               & 37.41 & 91.37 & 52.24 & 86.89 & 14.18 & 96.35 & 28.78 & 94.70 & 33.15 & 92.33 \\
                                & + $\approach(\texttt{md})$     & 52.89 & 88.64 & 62.06 & 85.82 & 67.66 & 82.51 & 62.71 & 87.50 & 61.33 & 86.12 \\
                                & + $\approach(e)$               & 52.16 & 87.95 & 61.69 & 84.32 & 41.17 & 89.95 & 45.70 & 92.08 & 50.18 & 88.58 \\
    \cmidrule(lr){2-12}
                                & \texttt{ReAct}                 & 52.46 & 87.26 & 59.89 & 84.07 & 40.25 & 90.96 & 43.05 & 92.72 & 48.91 & 88.75 \\
                                & + $\approach(\mu)$             & 32.82 & 92.93 & 48.62 & 88.59 & 13.60 & 96.83 & 28.19 & 94.89 & 30.81 & 93.31 \\
                                & + $\approach(\sigma)$          & 37.53 & 91.22 & 51.32 & 87.19 & 10.18 & 97.31 & 27.21 & 95.12 & 31.56 & 92.71 \\
                                & + $\approach(m)$               & 34.77 & 92.26 & 49.77 & 88.06 &  8.69 & 97.76 & 24.08 & 95.66 & 29.33 & 93.43 \\
                                & + $\approach(\texttt{md})$     & 50.96 & 89.62 & 59.73 & 86.80 & 71.45 & 82.57 & 63.14 & 86.49 & 61.32 & 86.37 \\
                                & + $\approach(e)$               & 36.32 & 91.14 & 50.91 & 86.16 & 13.71 & 96.34 & 23.20 & 95.70 & 31.03 & 92.33 \\
    \cmidrule(lr){2-12}
                                & \texttt{DICE}                  & 37.84 & 90.81 & 52.35 & 86.17 & 32.57 & 91.46 & 41.53 & 91.30 & 41.07 & 89.94 \\
                                & + $\approach(\mu)$             & 36.22 & 91.56 & 51.48 & 87.17 & 16.45 & 95.78 & 34.79 & 93.38 & 34.74 & 91.97 \\
                                & + $\approach(\sigma)$          & 36.31 & 91.29 & 51.20 & 87.03 & 17.15 & 95.40 & 35.07 & 93.25 & 34.93 & 91.74 \\
                                & + $\approach(m)$               & 34.90 & 91.80 & 50.45 & 87.32 & 14.86 & 96.22 & 32.41 & 93.85 & 33.15 & 92.30 \\
                                & + $\approach(\texttt{md})$     & 47.96 & 89.30 & 58.91 & 85.37 & 61.90 & 83.26 & 61.27 & 84.74 & 57.51 & 85.67 \\
                                & + $\approach(e)$               & 37.96 & 90.76 & 52.11 & 86.28 & 28.88 & 92.59 & 36.19 & 93.07 & 38.79 & 90.68 \\

    \cmidrule(lr){2-12}
                                & \texttt{ReAct+DICE}            & 30.60 & 92.98 & 45.93 & 88.29 & 16.03 & 96.33 & 31.68 & 93.76 & 31.06 & 92.84 \\
                                & + $\approach(\mu)$             & 33.90 & 92.65 & 49.33 & 88.22 & 9.52  & 97.87 & 31.04 & 94.44 & 30.95 & 93.30 \\
                                & + $\approach(\sigma)$          & 32.67 & 92.57 & 47.68 & 88.29 & 9.88  & 97.61 & 29.20 & 94.66 & 29.86 & 93.28 \\
                                & + $\approach(m)$               & 32.60 & 92.58 & 47.77 & 88.30 & 9.79  & 97.63 & 29.29 & 94.66 & 29.86 & 93.29 \\
                                & + $\approach(\texttt{md})$     & 48.95 & 89.51 & 59.14 & 85.64 & 65.62 & 83.49 & 63.50 & 83.56 & 59.30 & 85.55 \\
                                & + $\approach(e)$               & 26.33 & 93.86 & 40.71 & 89.65 & 13.87 & 96.60 & 25.12 & 95.22 & 26.51 & 93.83 \\
    \cmidrule(lr){2-12}
                                & \texttt{ASH}                   & 43.63 & 90.02 & 58.85 & 84.73 & 13.12 & 97.10 & 39.13 & 91.94 & 38.68 & 90.95 \\
                                & + $\approach(\mu)$             & 40.05 & 90.86 & 55.47 & 85.83 & 14.49 & 96.77 & 37.05 & 92.59 & 36.76 & 91.51 \\
                                & + $\approach(\sigma)$          & 41.76 & 90.45 & 57.32 & 85.12 & 10.92 & 97.64 & 34.17 & 93.32 & 36.04 & 91.63 \\
                                & + $\approach(m)$               & 42.01 & 90.41 & 57.41 & 85.02 & 11.12 & 97.62 & 36.11 & 92.94 & 36.66 & 91.50 \\
                                & + $\approach(\texttt{md})$     & 43.12 & 89.49 & 57.70 & 83.57 & 20.02 & 95.86 & 45.78 & 88.40 & 41.65 & 89.33 \\
                                & + $\approach(e)$               & 43.33 & 89.97 & 58.82 & 84.47 & 12.71 & 97.24 & 38.66 & 91.97 & 38.38 & 90.91 \\
    \cmidrule(lr){2-12}
                                & \texttt{SCALE}                 & 38.74 & 91.64 & 53.49 & 87.34 & 14.79 & 96.65 & 30.09 & 94.46 & 34.28 & 92.52 \\
                                & + $\approach(\mu)$             & 37.12 & 91.82 & 52.31 & 87.00 & 13.97 & 97.01 & 31.85 & 93.82 & 33.81 & 92.41 \\
                                & + $\approach(\sigma)$          & 39.23 & 91.57 & 54.34 & 87.00 & 11.81 & 97.43 & 31.22 & 94.20 & 34.15 & 92.55 \\
                                & + $\approach(m)$               & 38.70 & 91.66 & 53.53 & 86.95 & 11.33 & 97.56 & 30.97 & 94.27 & 33.63 & 92.61 \\
                                & + $\approach(\texttt{md})$     & 40.23 & 91.51 & 53.09 & 87.71 & 28.53 & 93.86 & 40.72 & 91.62 & 40.64 & 91.18 \\
                                & + $\approach(e)$               & 38.73 & 91.62 & 53.46 & 87.20 & 14.47 & 96.74 & 29.90 & 94.46 & 34.14 & 92.51 \\
    \bottomrule
    \end{tabular}
    }
    \caption{\textit{Detailed results of post-hoc methods combined with $\approach$ on four OOD benchmarks: SUN, Places365, Textures, and iNaturalist using MobileNet-V2 trained on ImageNet-1K. $\boldsymbol{\uparrow}$ indicates higher is better; $\boldsymbol{\downarrow}$ indicates lower is better. The symbols denote the statistic used: $\mu$ (mean), $\sigma$ (std. deviation), $m$ (maximum), $\texttt{md}$ (median), and $e$ (Shannon entropy).}}
    \label{table: combination_MobileNet_imagenet}
\end{table*}

To demonstrate the effectiveness of this synergy on the CIFAR benchmarks, we present detailed experimental results in Table~\ref{table: combination_ResNet-18_CIFAR-10}, ~\ref{table: combination_DenseNet-101_CIFAR-10} (CIFAR-10) and Table~\ref{table: combination_ResNet-18_CIFAR-100}, ~\ref{table: combination_DenseNet-101_CIFAR-100} (CIFAR-100), showing the performance of each baseline method when combined with $\approach$. The results demonstrate consistent performance improvements, validating the benefit of integrating $\approach$ with established baselines.

    \begin{landscape}
        \begin{table*}[ht]
        \centering
        \resizebox{\linewidth}{!}{
        \begin{tabular}{ l l  cc cc cc cc cc cc cc }
            \toprule
             \multirow{2}{*}{\textbf{Model}} & \multirow{2}{*}{\textbf{Combined Method}} & \multicolumn{2}{c}{\textbf{SVHN}} & \multicolumn{2}{c}{\textbf{Place365}} & \multicolumn{2}{c}{\textbf{iSUN}} & \multicolumn{2}{c}{\textbf{Textures}} & \multicolumn{2}{c}{\textbf{LSUN-c}}  & \multicolumn{2}{c}{\textbf{LSUN-r}} & \multicolumn{2}{c}{\textbf{Average}}  \\
            \cmidrule(lr){3-4} \cmidrule(lr){5-6} \cmidrule(lr){7-8} \cmidrule(lr){9-10} \cmidrule(lr){11-12} \cmidrule(lr){13-14} \cmidrule(lr){15-16}
            && \textbf{FPR95} $\downarrow$ & \textbf{AUROC} $\uparrow$ & \textbf{FPR95} $\downarrow$ & \textbf{AUROC} $\uparrow$ & \textbf{FPR95} $\downarrow$ & \textbf{AUROC} $\uparrow$ & \textbf{FPR95} $\downarrow$ & \textbf{AUROC} $\uparrow$ & \textbf{FPR95} $\downarrow$ & \textbf{AUROC} $\uparrow$ & \textbf{FPR95} $\downarrow$ & \textbf{AUROC} $\uparrow$ & \textbf{FPR95} $\downarrow$ & \textbf{AUROC} $\uparrow$ \\
            \midrule
            \multirow{42}{*}{ResNet-18} & MSP	                      & 60.39 & 92.40 & 63.69 & 88.37 & 56.74 & 91.32 & 62.66 & 90.10 & 51.87 & 93.64 & 54.63 & 91.87 & 58.33 & 91.28 \\
                                        & + $\approach(\mu)$          & 28.18 & 95.48 & 62.71 & 85.40 & 48.66 & 93.02 & 49.93 & 92.15 & 14.77 & 97.72 & 46.89 & 92.97 & 41.86 & 92.79 \\
                                        & + $\approach(\sigma)$	      & 11.66 & 97.82 & 54.35 & 90.08 & 30.90 & 95.83 & 25.07 & 96.22 & 4.01  & 98.93 & 31.90 & 95.67 & 26.32 & 95.76 \\
                                        & + $\approach(m)$	          & 10.60 & 97.97 & 51.94 & 90.72 & 25.71 & 96.30 & 22.29 & 96.56 & 3.61  & 99.00 & 27.37 & 96.14 & 23.59 & 96.11 \\
                                        & + $\approach(\texttt{md})$  & 97.77 & 44.48 & 86.80 & 56.39 & 87.60 & 62.30 & 95.18 & 42.70 & 90.74 & 51.01 & 85.24 & 65.28 & 90.55 & 53.70 \\    	      
                                        & + $\approach(e)$            & 58.23 & 93.11 & 62.81 & 89.84 & 55.29 & 92.70 & 61.06 & 91.96 & 48.72 & 94.37 & 53.22 & 93.00 & 56.56 & 92.50 \\ 
            \cmidrule(lr){2-16}
                                        & Energy                      & 44.32 & 94.04 & 41.43 & 91.72 & 35.22 & 94.70 & 50.30 & 91.11 &  9.77 & 98.19 & 31.97 & 95.26 & 35.50 & 94.17 \\
                                        & + $\approach(\mu)$          & 15.73 & 97.32 & 43.65 & 91.25 & 26.26 & 96.08 & 35.98 & 94.25 &  3.20 & 99.26 & 24.26 & 96.30 & 24.85 & 95.74 \\
                                        & + $\approach(\sigma)$       & 10.33 & 98.13 & 37.74 & 92.68 & 16.90 & 97.32 & 24.31 & 96.16 &  1.38 & 99.63 & 15.64 & 97.41 & 17.72 & 96.89 \\
                                        & + $\approach(m)$	          &  9.93 & 98.24 & 36.59 & 92.97 & 14.89 & 97.61 & 23.01 & 96.43 &  1.31 & 99.65 & 13.80 & 97.68 & 16.59 & 97.10 \\
                                        & + $\approach(\texttt{md})$  & 98.21 & 73.52 & 79.74 & 78.51 & 80.55 & 83.81 & 94.36 & 68.85 & 82.37 & 84.90 & 77.77 & 85.43 & 85.50 & 79.17 \\    	      
                                        & + $\approach(e)$            & 41.92 & 94.36 & 41.20 & 91.90 & 34.44 & 94.94 & 49.01 & 91.58 &  9.16 & 98.30 & 31.11 & 95.46 & 34.47 & 94.42\\ 
            \cmidrule(lr){2-16}
                                        & ReAct	                      & 42.31 & 94.12 & 40.70 & 92.25 & 23.07 & 96.37 & 40.44 & 93.69 & 12.27 & 97.90 & 19.78 & 96.80 & 29.76 & 95.19 \\
                                        & + $\approach(\mu)$          & 14.37 & 97.48 & 43.18 & 91.46 & 16.71 & 97.22 & 25.04 & 95.82 &  4.26 & 99.13 & 15.70 & 97.36 & 19.88 & 96.41 \\
                                        & + $\approach(\sigma)$       &  9.38 & 98.29 & 36.51 & 93.10 & 10.82 & 98.10 & 16.86 & 97.27 &  1.57 & 99.61 & 10.38 & 98.14 & 14.25 & 97.42 \\
                                        & + $\approach(m)$	          &  8.86 & 98.39 & 35.04 & 93.38 &  9.08 & 98.32 & 15.64 & 97.48 &  1.52 & 99.63 &  9.00 & 98.35 & 13.19 & 97.59 \\
                                        & + $\approach(\texttt{md})$  & 98.91 & 65.96 & 84.00 & 74.67 & 84.16 & 83.07 & 95.57 & 65.25 & 90.41 & 76.61 & 80.67 & 84.90 & 88.95 & 75.08 \\    	      
                                        & + $\approach(e)$            & 40.32 & 94.25 & 39.76 & 92.54 & 22.24 & 96.49 & 37.87 & 93.99 & 13.64 & 97.70 & 18.89 & 96.92 & 28.79 & 95.31 \\ 
            \cmidrule(lr){2-16}
                                        & DICE	                      & 17.60 & 97.09 & 46.14 & 90.66 & 39.08 & 94.32 & 44.65 & 91.80 & 1.90  & 99.57 & 36.52 & 94.70 & 30.98 & 94.69 \\
                                        & + $\approach(\mu)$          & 8.39  & 98.56 & 58.07 & 88.42 & 29.21 & 95.58 & 30.30 & 95.00 & 0.61  & 99.81 & 30.51 & 95.48 & 26.18 & 95.47 \\
                                        & + $\approach(\sigma)$	      & 6.39  & 98.88 & 44.70 & 91.48 & 15.07 & 97.40 & 19.04 & 96.86 & 0.41  & 99.89 & 16.09 & 97.28 & 16.95 & 96.96 \\
                                        & + $\approach(m)$	          & 5.98  & 98.96 & 42.31 & 92.01 & 12.18 & 97.82 & 17.02 & 97.17 & 0.36  & 99.90 & 12.91 & 97.69 & 15.13 & 97.26 \\
                                        & + $\approach(\texttt{md})$  & 99.11 & 68.64 & 91.50 & 68.78 & 94.43 & 74.02 & 98.37 & 58.86 & 86.39 & 80.37 & 93.52 & 76.03 & 93.89 & 71.12 \\    	      
                                        & + $\approach(e)$            & 15.85 & 97.30 & 44.64 & 90.97 & 36.25 & 94.74 & 41.95 & 92.43 & 1.60  & 99.61 & 33.86 & 95.08 & 29.02 & 95.02 \\ 
            \cmidrule(lr){2-16}
                                        & ReAct+DICE                  & 11.05 & 98.07 & 47.53 & 91.14 & 17.19 & 97.04 & 24.33 & 95.91 &  1.56 & 99.66 & 16.24 & 97.19 & 19.65 & 96.50 \\
                                        & + $\approach(\mu)$          & 7.81  & 98.50 & 64.61 & 86.47 & 22.48 & 96.21 & 21.29 & 96.29 &  0.80 & 99.72 & 23.08 & 96.04 & 23.35 & 95.54 \\
                                        & + $\approach(\sigma)$	      & 5.41  & 98.98 & 47.06 & 91.11 & 11.54 & 97.85 & 12.55 & 97.83 &  0.38 & 99.88 & 12.91 & 97.64 & 14.98 & 97.21 \\
                                        & + $\approach(m)$	          & 5.13  & 99.03 & 45.17 & 91.44 & 9.74  & 98.14 & 11.33 & 98.00 &  0.39 & 99.88 & 10.58 & 97.94 & 13.72 & 97.41 \\
                                        & + $\approach(\texttt{md})$  & 99.48 & 56.14 & 93.84 & 58.15 & 96.45 & 66.78 & 98.76 & 49.11 & 94.49 & 67.60 & 95.53 & 69.35 & 96.42 & 61.19 \\    	      
                                        & + $\approach(e)$            & 10.23 & 98.21 & 46.14 & 91.57 & 15.65 & 97.29 & 22.27 & 96.32 &  1.30 & 99.69 & 15.00 & 97.41 & 18.43 & 96.75 \\ 
            \cmidrule(lr){2-16}
                                        & ASH	                      &  6.24 & 98.80 & 53.83 & 88.05 & 21.61 & 96.44 & 21.81 & 96.41 &  1.94 & 99.52 & 20.31 & 96.49 & 20.96 & 95.95 \\
                                        & + $\approach(\mu)$          &  5.68 & 98.90 & 62.59 & 83.99 & 24.06 & 95.79 & 20.51 & 96.34 &  2.09 & 99.53 & 23.79 & 95.66 & 23.12 & 95.03 \\
                                        & + $\approach(\sigma)$	      &  4.13 & 99.19 & 49.53 & 89.46 & 13.09 & 97.61 & 13.01 & 97.73 &  0.62 & 99.81 & 13.41 & 97.50 & 15.63 & 96.88 \\
                                        & + $\approach(m)$	          &  3.85 & 99.24 & 47.16 & 90.05 & 11.05 & 97.89 & 11.56 & 97.91 &  0.53 & 99.82 & 11.46 & 97.78 & 14.27 & 97.11 \\
                                        & + $\approach(\texttt{md})$  & 94.21 & 79.42 & 83.91 & 72.01 & 83.37 & 79.32 & 92.84 & 67.47 & 40.13 & 92.93 & 81.73 & 80.74 & 79.36 & 78.65 \\    	      
                                        & + $\approach(e)$            & 5.86  & 98.84 & 52.81 & 88.49 & 20.22 & 96.63 & 20.46 & 96.60 &  1.77 & 99.55 & 19.18 & 96.67 & 20.05 & 96.13 \\  
            \cmidrule(lr){2-16}
                                        & \texttt{SCALE}              &  7.73 & 98.54 & 50.51 & 89.81 & 21.43 & 96.62 & 22.29 & 96.27 &  4.18 & 99.18 & 20.17 & 96.75 & 21.05 & 96.19 \\
                                        & + $\approach(\mu)$          &  6.80 & 98.69 & 58.87 & 86.60 & 22.20 & 96.17 & 21.03 & 96.22 &  3.28 & 99.30 & 21.93 & 96.09 & 22.35 & 95.51 \\   
                                        & + $\approach(\sigma)$       &  4.80 & 99.06 & 46.79 & 90.50 & 12.12 & 97.71 & 13.44 & 97.62 &  1.04 & 99.70 & 12.83 & 97.61 & 15.17 & 97.03 \\   
                                        & + $\approach(m)$            &  4.56 & 99.10 & 45.10 & 90.80 & 10.30 & 97.93 & 12.34 & 97.78 &  1.02 & 99.71 & 11.33 & 97.82 & 14.11 & 97.19 \\   
                                        & + $\approach(\texttt{md})$  & 80.58 & 86.08 & 84.81 & 72.82 & 76.89 & 84.83 & 84.50 & 77.21 & 52.92 & 89.48 & 73.54 & 85.92 & 75.54 & 82.72 \\   
                                        & + $\approach(e)$            &  7.39 & 98.59 & 49.40 & 90.11 & 20.22 & 96.78 & 20.98 & 96.45 &  3.82 & 99.23 & 19.24 & 96.89 & 20.17 & 96.34 \\   
            \bottomrule
            \end{tabular}
            }
            \caption{\textit{Detailed results of post-hoc methods combined with $\approach$ on six OOD benchmarks: SVHN, Places365, iSUN, Textures, LSUN-c, and LSUN-r using ResNet-18 trained on CIFAR-10. $\boldsymbol{\uparrow}$ indicates higher is better; $\boldsymbol{\downarrow}$ indicates lower is better. The symbols denote the statistic used: $\mu$ (mean), $\sigma$ (std. deviation), $m$ (maximum), $\texttt{md}$ (median), and $e$ (Shannon entropy).}
            \label{table: combination_ResNet-18_CIFAR-10}}
        \end{table*}
    \end{landscape}

    \begin{landscape}
        \begin{table*}[ht]
        \centering
        \resizebox{\linewidth}{!}{
        \begin{tabular}{ l l  cc cc cc cc cc cc cc }
            \toprule
             \multirow{2}{*}{\textbf{Model}} & \multirow{2}{*}{\textbf{Combined Method}} & \multicolumn{2}{c}{\textbf{SVHN}} & \multicolumn{2}{c}{\textbf{Place365}} & \multicolumn{2}{c}{\textbf{iSUN}} & \multicolumn{2}{c}{\textbf{Textures}} & \multicolumn{2}{c}{\textbf{LSUN-c}}  & \multicolumn{2}{c}{\textbf{LSUN-r}} & \multicolumn{2}{c}{\textbf{Average}}  \\
            \cmidrule(lr){3-4} \cmidrule(lr){5-6} \cmidrule(lr){7-8} \cmidrule(lr){9-10} \cmidrule(lr){11-12} \cmidrule(lr){13-14} \cmidrule(lr){15-16}
            && \textbf{FPR95} $\downarrow$ & \textbf{AUROC} $\uparrow$ & \textbf{FPR95} $\downarrow$ & \textbf{AUROC} $\uparrow$ & \textbf{FPR95} $\downarrow$ & \textbf{AUROC} $\uparrow$ & \textbf{FPR95} $\downarrow$ & \textbf{AUROC} $\uparrow$ & \textbf{FPR95} $\downarrow$ & \textbf{AUROC} $\uparrow$ & \textbf{FPR95} $\downarrow$ & \textbf{AUROC} $\uparrow$ & \textbf{FPR95} $\downarrow$ & \textbf{AUROC} $\uparrow$ \\
            \midrule
            \multirow{36}{*}{ResNet-18} & MSP	                      & 74.26 & 83.20 & 82.37 & 75.31 & 84.13 & 71.57 & 85.04 & 74.02 & 70.79 & 82.78 & 82.96 & 73.10 & 79.92 & 76.66 \\
                                        & + $\approach(\mu)$          & 67.71 & 85.20 & 83.18 & 68.78 & 83.14 & 71.63 & 80.32 & 77.35 & 54.85 & 90.25 & 81.69 & 72.16 & 75.15 & 77.56 \\
                                        & + $\approach(\sigma)$	      & 56.50 & 90.40 & 82.87 & 70.28 & 80.36 & 79.05 & 71.63 & 84.67 & 48.21 & 92.39 & 78.98 & 78.88 & 69.76 & 82.61 \\
                                        & + $\approach(m)$	          & 55.56 & 90.74 & 82.24 & 71.81 & 79.93 & 79.93 & 71.83 & 84.85 & 46.07 & 92.94 & 78.50 & 79.67 & 69.02 & 83.32 \\
                                        & + $\approach(\texttt{md})$  & 81.67 & 67.33 & 83.60 & 66.01 & 87.54 & 52.52 & 87.06 & 62.24 & 67.72 & 80.46 & 85.95 & 54.73 & 82.26 & 63.88 \\    	      
                                        & + $\approach(e)$            & 72.43 & 84.84 & 82.29 & 74.78 & 83.52 & 75.54 & 83.99 & 78.22 & 68.74 & 85.00 & 82.42 & 76.35 & 78.90 & 79.12 \\ 
            \cmidrule(lr){2-16}
                                        & Energy                      & 66.64 & 89.53 & 81.39 & 76.83 & 71.46 & 83.02 & 85.18 & 75.68 & 48.01 & 91.63 & 68.57 & 84.53 & 70.21 & 83.54 \\
                                        & + $\approach(\mu)$          & 31.13 & 95.02 & 81.53 & 76.00 & 64.83 & 85.24 & 62.06 & 85.32 & 16.45 & 97.17 & 61.59 & 86.00 & 52.93 & 87.46 \\
                                        & + $\approach(\sigma)$       & 20.60 & 96.54 & 82.09 & 75.57 & 55.69 & 88.63 & 54.61 & 87.27 & 10.36 & 98.19 & 54.42 & 88.87 & 46.29 & 89.18  \\
                                        & + $\approach(m)$	          & 19.94 & 96.66 & 81.83 & 76.16 & 55.48 & 88.74 & 54.66 & 87.38 &  9.47 & 98.37 & 54.35 & 88.89 & 45.96 & 89.37 \\
                                        & + $\approach(\texttt{md})$  & 86.35 & 81.98 & 82.54 & 74.78 & 81.34 & 76.38 & 87.39 & 71.44 & 35.48 & 93.53 & 78.69 & 78.37 & 75.30 & 79.41 \\    	      
                                        & + $\approach(e)$            & 59.49 & 90.88 & 81.10 & 77.00 & 68.86 & 84.21 & 82.27 & 77.55 & 41.10 & 92.92 & 65.83 & 85.56 & 66.44 & 84.69 \\ 
            \cmidrule(lr){2-16}
                                        & ReAct	                      & 56.62 & 91.69 & 80.38 & 77.28 & 53.40 & 89.25 & 57.27 & 88.63 & 49.29 & 90.69 & 49.59 & 90.27 & 57.76 & 87.97 \\
                                        & + $\approach(\mu)$          & 19.43 & 96.65 & 85.03 & 73.97 & 50.51 & 89.41 & 31.76 & 93.30 & 16.52 & 96.91 & 48.33 & 89.70 & 41.93 & 89.99 \\
                                        & + $\approach(\sigma)$       & 12.01 & 97.78 & 84.81 & 73.90 & 38.70 & 92.53 & 28.69 & 93.87 &  8.56 & 98.30 & 38.15 & 92.51 & 35.15 & 91.48 \\
                                        & + $\approach(m)$	          & 11.47 & 97.85 & 83.96 & 74.67 & 38.04 & 92.72 & 28.58 & 93.96 &  7.65 & 98.46 & 38.23 & 92.55 & 34.66 & 91.70 \\
                                        & + $\approach(\texttt{md})$  & 94.03 & 75.11 & 86.93 & 70.50 & 84.67 & 75.27 & 76.97 & 78.69 & 48.19 & 89.96 & 82.43 & 76.92 & 78.87 & 77.74 \\    	      
                                        & + $\approach(e)$            & 41.81 & 93.69 & 79.65 & 77.65 & 51.03 & 89.82 & 52.52 & 89.67 & 37.41 & 93.16 & 47.55 & 90.74 & 51.66 & 89.12 \\ 
            \cmidrule(lr){2-16}
                                        & DICE	                      & 40.89 & 92.97 & 81.33 & 76.23 & 62.61 & 85.83 & 75.28 & 76.29 & 12.44 & 97.65 & 61.39 & 86.84 & 55.66 & 85.97 \\
                                        & + $\approach(\mu)$          & 18.07 & 96.70 & 85.71 & 73.54 & 63.43 & 87.39 & 50.48 & 87.32 & 7.72  & 98.53 & 63.17 & 87.40 & 48.10 & 88.48 \\
                                        & + $\approach(\sigma)$	      & 17.98 & 96.65 & 87.65 & 70.66 & 52.29 & 90.21 & 49.82 & 87.43 & 6.52  & 98.77 & 54.88 & 89.98 & 44.86 & 88.95 \\
                                        & + $\approach(m)$	          & 17.13 & 96.73 & 86.35 & 72.10 & 50.76 & 90.77 & 50.04 & 87.40 & 5.63  & 98.91 & 53.29 & 90.42 & 43.87 & 89.39 \\
                                        & + $\approach(\texttt{md})$  & 91.36 & 73.12 & 88.97 & 66.71 & 91.18 & 68.13 & 86.83 & 64.19 & 31.66 & 93.47 & 90.88 & 69.56 & 80.15 & 72.53 \\    	      
                                        & + $\approach(e)$            & 32.64 & 94.27 & 81.06 & 76.22 & 58.24 & 87.49 & 70.66 & 78.98 & 9.96  & 98.11 & 57.36 & 88.25 & 51.65 & 87.22 \\ 
            \cmidrule(lr){2-16}
                                        & ReAct+DICE                  & 34.16 & 94.18 & 83.57 & 74.79 & 54.50 & 89.85 & 52.96 & 87.36 & 10.40 & 97.95 & 53.78 & 90.22 & 48.23 & 89.06 \\
                                        & + $\approach(\mu)$          & 24.94 & 95.56 & 89.97 & 68.73 & 66.15 & 86.65 & 37.75 & 89.95 & 13.29 & 97.43 & 68.14 & 86.26 & 50.04 & 87.43 \\
                                        & + $\approach(\sigma)$	      & 21.45 & 95.77 & 90.46 & 65.11 & 55.33 & 88.99 & 39.57 & 89.35 & 9.84  & 98.00 & 59.02 & 88.56 & 45.95 & 87.63 \\
                                        & + $\approach(m)$	          & 20.28 & 96.00 & 89.36 & 67.31 & 53.37 & 89.95 & 39.68 & 89.69 & 8.55  & 98.26 & 57.50 & 89.37 & 44.79 & 88.43 \\
                                        & + $\approach(\texttt{md})$  & 96.81 & 61.79 & 93.08 & 58.01 & 95.51 & 60.58 & 84.40 & 65.93 & 45.31 & 88.46 & 95.54 & 61.27 & 85.11 & 66.00 \\    	      
                                        & + $\approach(e)$            & 27.53 & 95.18 & 83.83 & 74.35 & 49.97 & 91.02 & 47.41 & 88.90 & 8.75  & 98.25 & 50.10 & 91.19 & 44.60 & 89.82 \\ 
            \cmidrule(lr){2-16}
                                        & ASH	                      & 22.00 & 96.16 & 86.10 & 69.25 & 64.55 & 84.17 & 37.87 & 91.77 & 23.39 & 95.57 & 63.19 & 84.25 & 49.52 & 86.86 \\
                                        & + $\approach(\mu)$          & 17.35 & 96.98 & 83.85 & 72.96 & 64.02 & 85.53 & 40.44 & 91.61 & 18.42 & 96.74 & 61.84 & 85.85 & 47.65 & 88.28 \\
                                        & + $\approach(\sigma)$	      & 12.61 & 97.77 & 84.80 & 72.26 & 53.65 & 89.25 & 37.20 & 92.12 & 9.87  & 98.23 & 53.33 & 89.29 & 41.91 & 89.82 \\
                                        & + $\approach(m)$	          & 11.99 & 97.84 & 83.71 & 73.24 & 52.11 & 89.63 & 37.25 & 92.09 & 8.90  & 98.39 & 51.84 & 89.57 & 40.97 & 90.13 \\
                                        & + $\approach(\texttt{md})$  & 62.56 & 88.69 & 85.74 & 69.88 & 80.30 & 76.19 & 66.83 & 82.06 & 28.96 & 94.35 & 77.19 & 77.42 & 66.93 & 81.43 \\    	      
                                        & + $\approach(e)$            & 24.94 & 96.03 & 82.40 & 75.43 & 63.05 & 86.60 & 52.87 & 88.78 & 23.38 & 96.15 & 60.69 & 87.30 & 51.22 & 88.38 \\
            \cmidrule(lr){2-16}
                                        & \texttt{SCALE}                 & 22.12 & 96.38 & 81.96 & 74.95 & 61.62 & 86.65 & 44.50 & 90.72 & 18.62 & 96.78 & 59.76 & 86.74 & 48.10 & 88.70 \\
                                        & + $\approach(\mu)$             & 18.05 & 96.77 & 86.73 & 69.73 & 62.42 & 85.29 & 36.51 & 91.75 & 15.39 & 97.04 & 62.41 & 84.95 & 46.92 & 87.59 \\
                                        & + $\approach(\sigma)$          & 12.08 & 97.67 & 85.87 & 70.50 & 51.94 & 88.95 & 32.80 & 92.64 &  9.75 & 98.11 & 53.54 & 88.47 & 41.00 & 89.39 \\
                                        & + $\approach(m)$               & 11.65 & 97.72 & 85.21 & 71.04 & 51.65 & 89.03 & 32.66 & 92.71 &  9.09 & 98.21 & 53.29 & 88.45 & 40.59 & 89.53 \\
                                        & + $\approach(\texttt{md})$     & 61.84 & 88.20 & 86.79 & 67.62 & 79.72 & 75.40 & 60.34 & 83.57 & 27.00 & 94.26 & 77.43 & 75.95 & 65.52 & 80.83 \\
                                        & + $\approach(e)$               & 19.13 & 96.79 & 81.84 & 74.91 & 58.96 & 87.54 & 41.77 & 91.37 & 16.32 & 97.16 & 57.53 & 87.53 & 45.93 & 89.22 \\
            \bottomrule
            \end{tabular}
            }
            \caption{\textit{Detailed results of post-hoc methods combined with $\approach$ on six OOD benchmarks: SVHN, Places365, iSUN, Textures, LSUN-c, and LSUN-r using ResNet-18 trained on $\gamma(\mathbf{x})$. $\boldsymbol{\uparrow}$ indicates higher is better; $\boldsymbol{\downarrow}$ indicates lower is better. The symbols denote the statistic used: $\mu$ (mean), $\sigma$ (std. deviation), $m$ (maximum), $\texttt{md}$ (median), and $e$ (Shannon entropy).}}
            \label{table: combination_ResNet-18_CIFAR-100}
        \end{table*}
    \end{landscape}

    \begin{landscape}
        \begin{table*}[ht]
        \centering
        \resizebox{\linewidth}{!}{
        \begin{tabular}{ l l  cc cc cc cc cc cc cc }
            \toprule
             \multirow{2}{*}{\textbf{Model}} & \multirow{2}{*}{\textbf{Combined Method}} & \multicolumn{2}{c}{\textbf{SVHN}} & \multicolumn{2}{c}{\textbf{Place365}} & \multicolumn{2}{c}{\textbf{iSUN}} & \multicolumn{2}{c}{\textbf{Textures}} & \multicolumn{2}{c}{\textbf{LSUN-c}}  & \multicolumn{2}{c}{\textbf{LSUN-r}} & \multicolumn{2}{c}{\textbf{Average}}  \\
            \cmidrule(lr){3-4} \cmidrule(lr){5-6} \cmidrule(lr){7-8} \cmidrule(lr){9-10} \cmidrule(lr){11-12} \cmidrule(lr){13-14} \cmidrule(lr){15-16}
            && \textbf{FPR95} $\downarrow$ & \textbf{AUROC} $\uparrow$ & \textbf{FPR95} $\downarrow$ & \textbf{AUROC} $\uparrow$ & \textbf{FPR95} $\downarrow$ & \textbf{AUROC} $\uparrow$ & \textbf{FPR95} $\downarrow$ & \textbf{AUROC} $\uparrow$ & \textbf{FPR95} $\downarrow$ & \textbf{AUROC} $\uparrow$ & \textbf{FPR95} $\downarrow$ & \textbf{AUROC} $\uparrow$ & \textbf{FPR95} $\downarrow$ & \textbf{AUROC} $\uparrow$ \\
            \midrule
            \multirow{36}{*}{DenseNet-101} & MSP	                  & 64.76 & 88.33 & 60.30 & 88.55 & 33.57 & 95.41 & 56.67 & 90.17 & 23.41 & 96.75 & 33.87 & 95.37 & 45.43 & 92.43 \\
                                        & + $\approach(\mu)$          & 29.11 & 95.90 & 54.33 & 86.64 & 10.82 & 98.13 & 30.73 & 93.72 & 1.15  & 99.59 & 12.19 & 97.99 & 23.06 & 95.33 \\
                                        & + $\approach(\sigma)$	      & 13.81 & 97.59 & 50.90 & 89.94 & 9.03  & 98.47 & 18.00 & 97.08 & 2.09  & 99.48 & 10.46 & 98.32 & 17.38 & 96.81 \\
                                        & + $\approach(m)$	          & 13.24 & 97.59 & 50.14 & 90.02 & 9.17  & 98.42 & 16.91 & 97.15 & 2.59  & 99.40 & 10.49 & 98.26 & 17.09 & 96.81 \\
                                        & + $\approach(\texttt{md})$  & 88.99 & 59.01 & 73.67 & 72.19 & 54.88 & 88.98 & 89.11 & 55.92 & 8.64  & 97.86 & 51.84 & 89.42 & 61.19 & 77.23 \\    	      
                                        & + $\approach(e)$            & 63.89 & 90.76 & 59.99 & 89.29 & 32.43 & 95.72 & 55.53 & 92.04 & 22.36 & 96.98 & 32.83 & 95.68 & 44.50 & 93.41 \\ 
            \cmidrule(lr){2-16}
                                        & Energy                      & 37.91 & 93.59 & 36.42 & 92.38 &  7.33 & 98.27 & 43.87 & 90.48 & 1.95 & 99.47 &  6.97 & 98.38 & 22.41 & 95.43 \\
                                        & + $\approach(\mu)$          & 15.12 & 97.51 & 33.93 & 92.75 &  3.32 & 98.98 & 25.71 & 94.88 & 0.61 & 99.79 & 3.70 & 98.92 & 13.73 & 97.14 \\
                                        & + $\approach(\sigma)$       & 10.95 & 98.11 & 32.51 & 92.99 &  1.73 & 99.47 & 18.26 & 96.34 & 0.26 & 99.90 & 1.88 & 99.43 & 10.93 & 97.71 \\
                                        & + $\approach(m)$	          & 10.86 & 98.13 & 31.69 & 93.16 &  1.64 & 99.48 & 17.93 & 96.51 & 0.30 & 99.89 & 1.83 & 99.43 & 10.71 & 97.77 \\
                                        & + $\approach(\texttt{md})$  & 82.84 & 79.69 & 63.50 & 84.13 & 41.24 & 94.40 & 85.67 & 72.20 & 3.12 & 99.23 & 38.13 & 94.80 & 52.42 & 87.41 \\    	      
                                        & + $\approach(e)$            & 36.00 & 93.90 & 35.84 & 92.49 &  6.36 & 98.37 & 42.18 & 90.89 & 1.81 & 99.50 &  6.30 & 98.46 & 21.42 & 95.60 \\ 
            \cmidrule(lr){2-16}
                                        & ReAct	                      & 23.18 & 96.28 & 33.96 & 92.97 &  5.56 & 98.49 & 32.23 & 93.98 & 2.47 & 99.33 &  5.37 & 98.59 & 17.13 & 96.61 \\
                                        & + $\approach(\mu)$          &  5.82 & 98.76 & 31.59 & 93.50 &  2.87 & 99.15 & 16.91 & 96.83 & 0.91 & 99.75 &  3.32 & 99.09 & 10.24 & 97.85 \\
                                        & + $\approach(\sigma)$       &  5.82 & 98.83 & 30.35 & 93.71 &  1.49 & 99.54 & 11.26 & 97.78 & 0.34 & 99.88 &  1.69 & 99.51 &  8.49 & 98.21 \\
                                        & + $\approach(m)$	          &  5.86 & 98.86 & 29.97 & 93.89 &  1.49 & 99.55 & 11.06 & 97.88 & 0.48 & 99.87 &  1.68 & 99.51 &  8.42 & 98.26 \\
                                        & + $\approach(\texttt{md})$  & 85.79 & 78.26 & 65.55 & 82.53 & 49.13 & 92.97 & 88.55 & 69.17 & 3.63 & 99.09 & 45.31 & 93.49 & 56.33 & 85.92 \\    	      
                                        & + $\approach(e)$            & 21.19 & 96.49 & 33.40 & 93.11 &  5.06 & 98.59 & 30.64 & 94.32 & 2.30 & 99.38 &  4.99 & 98.68 & 16.26 & 96.76 \\ 
            \cmidrule(lr){2-16}
                                        & DICE	                      & 16.66 & 96.98 & 37.59 & 92.04 &  2.31 & 99.42 & 27.98 & 92.71 & 0.15 & 99.94 &  2.44 & 99.36 & 14.52 & 96.74 \\
                                        & + $\approach(\mu)$          &  6.23 & 98.80 & 44.96 & 91.11 &  2.35 & 99.30 & 20.18 & 95.57 & 0.07 & 99.94 &  2.67 & 99.22 & 12.74 & 97.32 \\
                                        & + $\approach(\sigma)$	      &  5.33 & 98.95 & 41.71 & 91.84 &  1.88 & 99.48 & 15.28 & 96.95 & 0.09 & 99.95 &  2.06 & 99.43 & 11.06 & 97.77 \\
                                        & + $\approach(m)$	          &  4.95 & 99.01 & 39.57 & 92.20 &  1.58 & 99.53 & 14.08 & 97.18 & 0.11 & 99.95 &  1.88 & 99.48 & 10.36 & 97.89 \\
                                        & + $\approach(\texttt{md})$  & 83.46 & 75.99 & 74.72 & 78.40 & 53.58 & 92.11 & 86.81 & 67.01 & 2.24 & 99.42 & 53.74 & 92.19 & 59.09 & 84.19 \\    	      
                                        & + $\approach(e)$            & 15.57 & 97.17 & 37.28 & 92.17 &  2.15 & 99.44 & 26.81 & 93.08 & 0.14 & 99.95 &  2.27 & 99.39 & 14.04 & 96.86 \\ 
            \cmidrule(lr){2-16}
                                        & ReAct+DICE                  &  4.60 & 99.02 & 35.94 & 92.91 &  1.78 & 99.51 & 17.07 & 96.78 & 0.12 & 99.95 &  2.02 & 99.47 & 10.26 & 97.94 \\
                                        & + $\approach(\mu)$          &  2.68 & 99.30 & 48.65 & 90.81 &  2.76 & 99.20 & 14.27 & 97.29 & 0.09 & 99.93 &  3.19 & 99.13 & 11.94 & 97.61 \\
                                        & + $\approach(\sigma)$	      &  3.78 & 99.21 & 43.99 & 91.70 &  2.02 & 99.44 & 10.80 & 98.01 & 0.11 & 99.93 &  2.38 & 99.39 & 10.51 & 97.95 \\
                                        & + $\approach(m)$	          &  3.59 & 99.24 & 42.01 & 92.03 &  1.83 & 99.49 &  9.72 & 98.17 & 0.15 & 99.93 &  2.24 & 99.45 &  9.92 & 98.05\\
                                        & + $\approach(\texttt{md})$  & 85.94 & 75.87 & 78.12 & 75.30 & 57.04 & 91.52 & 88.16 & 65.34 & 2.45 & 99.38 & 56.79 & 91.52 & 61.42 & 83.16 \\    	      
                                        & + $\approach(e)$            &  4.49 & 99.06 & 35.95 & 93.02 &  1.73 & 99.52 & 16.38 & 96.99 & 0.13 & 99.95 &  1.99 & 99.49 & 10.11 & 98.01\\ 
            \cmidrule(lr){2-16}
                                        & ASH	                      &  5.18 & 98.90 & 42.80 & 90.42 &  2.97 & 99.27 & 15.80 & 97.04 & 0.45 & 99.80 &  3.06 & 99.25 & 11.71 & 97.44 \\
                                        & + $\approach(\mu)$          &  7.02 & 98.61 & 38.76 & 91.80 &  2.72 & 99.22 & 17.98 & 96.62 & 0.29 & 99.85 &  3.18 & 99.15 & 11.66 & 97.54 \\
                                        & + $\approach(\sigma)$	      &  5.31 & 98.91 & 35.92 & 92.56 &  1.70 & 99.48 & 12.85 & 97.61 & 0.26 & 99.89 &  1.92 & 99.44 &  9.66 & 97.98 \\
                                        & + $\approach(m)$	          &  5.23 & 98.95 & 35.13 & 92.77 &  1.56 & 99.50 & 12.32 & 97.73 & 0.32 & 99.89 &  1.89 & 99.46 &  9.41 & 98.05\\
                                        & + $\approach(\texttt{md})$  & 68.46 & 84.08 & 63.48 & 83.16 & 31.93 & 95.40 & 76.86 & 75.56 & 1.83 & 99.56 & 30.24 & 95.54 & 45.47 & 88.88 \\    	      
                                        & + $\approach(e)$            &  4.85 & 98.93 & 42.53 & 90.54 &  2.86 & 99.29 & 15.21 & 97.15 & 0.44 & 99.80 &  3.01 & 99.27 & 11.48 & 97.50\\
            \cmidrule(lr){2-16}
                                        & \texttt{SCALE}                 & 29.23 & 95.23 & 37.86 & 92.14 &  6.71 & 98.46 & 36.99 & 92.28 & 1.71 & 99.50 &  6.80 & 98.48 & 19.88 & 96.01 \\
                                        & + $\approach(\mu)$             & 11.85 & 97.95 & 35.98 & 92.36 &  3.68 & 98.94 & 22.91 & 95.56 & 0.65 & 99.75 &  3.94 & 98.86 & 13.17 & 97.23 \\
                                        & + $\approach(\sigma)$          &  8.99 & 98.36 & 34.54 & 92.59 &  1.88 & 99.38 & 16.74 & 96.74 & 0.32 & 99.87 &  2.17 & 99.32 & 10.77 & 97.71 \\
                                        & + $\approach(m)$               &  8.96 & 98.38 & 34.02 & 92.76 &  1.87 & 99.38 & 16.47 & 96.89 & 0.38 & 99.86 &  2.23 & 99.32 & 10.66 & 97.76 \\
                                        & + $\approach(\texttt{md})$     & 69.80 & 84.20 & 54.23 & 87.24 & 23.01 & 96.29 & 75.48 & 76.37 & 1.76 & 99.51 & 18.90 & 96.73 & 40.53 & 90.06 \\
                                        & + $\approach(e)$               & 28.81 & 95.31 & 37.44 & 92.21 &  6.51 & 98.48 & 36.60 & 92.41 & 1.63 & 99.51 &  6.61 & 98.50 & 19.60 & 96.07 \\                            
            \bottomrule
            \end{tabular}
            }
            \caption{\textit{Detailed results of post-hoc methods combined with $\approach$ on six OOD benchmarks: SVHN, Places365, iSUN, Textures, LSUN-c, and LSUN-r using DenseNet-101 trained on CIFAR-10. $\boldsymbol{\uparrow}$ indicates higher is better; $\boldsymbol{\downarrow}$ indicates lower is better. The symbols denote the statistic used: $\mu$ (mean), $\sigma$ (std. deviation), $m$ (maximum), $\texttt{md}$ (median), and $e$ (Shannon entropy).}}
            \label{table: combination_DenseNet-101_CIFAR-10}
        \end{table*}
    \end{landscape}

    \begin{landscape}
        \begin{table*}[ht]
        \centering
        \resizebox{\linewidth}{!}{
        \begin{tabular}{ l l  cc cc cc cc cc cc cc }
            \toprule
             \multirow{2}{*}{\textbf{Model}} & \multirow{2}{*}{\textbf{Combined Method}} & \multicolumn{2}{c}{\textbf{SVHN}} & \multicolumn{2}{c}{\textbf{Place365}} & \multicolumn{2}{c}{\textbf{iSUN}} & \multicolumn{2}{c}{\textbf{Textures}} & \multicolumn{2}{c}{\textbf{LSUN-c}}  & \multicolumn{2}{c}{\textbf{LSUN-r}} & \multicolumn{2}{c}{\textbf{Average}}  \\
            \cmidrule(lr){3-4} \cmidrule(lr){5-6} \cmidrule(lr){7-8} \cmidrule(lr){9-10} \cmidrule(lr){11-12} \cmidrule(lr){13-14} \cmidrule(lr){15-16}
            && \textbf{FPR95} $\downarrow$ & \textbf{AUROC} $\uparrow$ & \textbf{FPR95} $\downarrow$ & \textbf{AUROC} $\uparrow$ & \textbf{FPR95} $\downarrow$ & \textbf{AUROC} $\uparrow$ & \textbf{FPR95} $\downarrow$ & \textbf{AUROC} $\uparrow$ & \textbf{FPR95} $\downarrow$ & \textbf{AUROC} $\uparrow$ & \textbf{FPR95} $\downarrow$ & \textbf{AUROC} $\uparrow$ & \textbf{FPR95} $\downarrow$ & \textbf{AUROC} $\uparrow$ \\
            \midrule
            \multirow{36}{*}{DenseNet-101} & MSP	                  & 81.38 & 75.71 & 82.68 & 74.06 & 82.52 & 70.50 & 87.11 & 68.39 & 51.82 & 87.93 & 79.31 & 72.21 & 77.47 & 74.80\\
                                        & + $\approach(\mu)$          & 67.10 & 85.98 & 82.63 & 72.87 & 76.61 & 77.17 & 73.87 & 80.77 & 22.08 & 96.58 & 74.63 & 77.39 & 66.15 & 81.79 \\
                                        & $\approach(\sigma)$	      & 61.17 & 88.41 & 82.09 & 73.45 & 73.98 & 80.04 & 64.43 & 86.18 & 22.22 & 96.42 & 72.44 & 79.91 & 62.72 & 84.07 \\
                                        & + $\approach(m)$	          & 60.15 & 88.82 & 81.77 & 74.22 & 73.23 & 80.90 & 62.82 & 86.96 & 22.51 & 96.35 & 71.72 & 80.72 & 62.03 & 84.66 \\
                                        & + $\approach(\texttt{md})$  & 94.34 & 47.23 & 86.86 & 66.68 & 93.52 & 45.23 & 97.41 & 30.76 & 40.44 & 90.25 & 90.66 & 50.25 & 83.87 & 55.07 \\    	      
                                        & + $\approach(e)$            & 78.98 & 79.30 & 82.44 & 74.49 & 81.08 & 74.98 & 84.65 & 75.59 & 46.99 & 90.11 & 77.79 & 75.99 & 75.32 & 78.41 \\ 
            \cmidrule(lr){2-16}
                                        & Energy                      & 70.99 & 86.66 & 77.28 & 76.94 & 59.39 & 85.68 & 83.49 & 67.47 & 11.45 & 97.89 & 50.90 & 88.57 & 58.92 & 83.87\\
                                        & + $\approach(\mu)$          & 22.45 & 96.11 & 78.72 & 77.16 & 48.77 & 89.75 & 52.09 & 83.58 &  1.54 & 99.68 & 44.92 & 90.44 & 41.42 & 89.45 \\
                                        & + $\approach(\sigma)$       & 21.13 & 96.30 & 78.19 & 77.16 & 42.78 & 91.48 & 44.34 & 86.19 &  1.18 & 99.72 & 40.25 & 92.02 & 37.98 & 90.48 \\
                                        & + $\approach(m)$	          & 19.90 & 96.45 & 77.30 & 77.67 & 41.02 & 91.81 & 42.48 & 87.12 &  1.28 & 99.70 & 38.78 & 92.25 & 36.79 & 90.83 \\
                                        & + $\approach(\texttt{md})$  & 98.09 & 53.27 & 88.48 & 68.06 & 96.62 & 55.42 & 99.40 & 30.52 & 20.27 & 95.70 & 93.82 & 61.85 & 82.78 & 60.81 \\    	      
                                        & + $\approach(e)$            & 58.57 & 89.86 & 76.92 & 77.59 & 53.03 & 88.03 & 76.86 & 72.19 &  7.68 & 98.66 & 45.28 & 90.32 & 53.06 & 86.11 \\ 
            \cmidrule(lr){2-16}
                                        & ReAct	                      & 69.82 & 86.30 & 79.23 & 74.09 & 41.50 & 92.40 & 72.09 & 80.38 & 18.14 & 96.26 & 36.53 & 93.64 & 52.89 & 87.18\\
                                        & + $\approach(\mu)$          & 11.73 & 97.67 & 83.17 & 74.06 & 25.66 & 95.16 & 26.12 & 93.93 &  1.69 & 99.52 & 27.77 & 94.98 & 29.36 & 92.56 \\
                                        & + $\approach(\sigma)$       & 14.13 & 97.32 & 83.87 & 74.36 & 25.15 & 95.51 & 23.21 & 94.55 &  1.55 & 99.55 & 26.41 & 95.37 & 29.05 & 92.78 \\
                                        & + $\approach(m)$	          & 13.70 & 97.36 & 83.00 & 75.14 & 23.26 & 95.83 & 21.68 & 94.95 &  2.07 & 99.44 & 24.63 & 95.64 & 28.06 & 93.06 \\
                                        & + $\approach(\texttt{md})$  & 99.29 & 36.35 & 92.19 & 60.93 & 98.68 & 41.22 & 99.70 & 19.61 & 41.86 & 91.88 & 97.16 & 47.82 & 88.15 & 49.63 \\    	      
                                        & + $\approach(e)$            & 46.52 & 91.94 & 78.18 & 75.18 & 25.51 & 95.11 & 49.66 & 87.42 & 10.23 & 97.95 & 23.32 & 95.79 & 38.90 & 90.56 \\ 
            \cmidrule(lr){2-16}
                                        & DICE	                      & 32.93 & 94.09 & 79.90 & 75.43 & 35.50 & 92.50 & 64.84 & 71.95 & 1.93 & 99.57 & 30.81 & 93.96 & 40.98 & 87.92 \\
                                        & + $\approach(\mu)$          & 16.64 & 96.95 & 84.89 & 74.34 & 33.11 & 94.02 & 46.44 & 84.49 & 1.14 & 99.65 & 32.25 & 94.27 & 35.74 & 90.62 \\
                                        & $\approach(\sigma)$	      & 17.27 & 96.77 & 83.48 & 74.87 & 32.32 & 93.97 & 48.19 & 83.15 & 1.15 & 99.67 & 30.98 & 94.38 & 35.57 & 90.47 \\
                                        & + $\approach(m)$	          & 17.95 & 96.62 & 82.44 & 75.04 & 32.04 & 93.84 & 48.97 & 82.26 & 1.11 & 99.67 & 30.23 & 94.34 & 35.46 & 90.30 \\
                                        & + $\approach(\texttt{md})$  & 97.75 & 56.78 & 90.67 & 65.03 & 97.62 & 54.76 & 99.27 & 29.22 & 9.86 & 97.71 & 95.27 & 60.53 & 81.74 & 60.67 \\    	      
                                        & + $\approach(e)$            & 25.98 & 95.41 & 79.38 & 77.04 & 37.18 & 92.38 & 56.77 & 77.17 & 0.80 & 99.75 & 33.23 & 93.48 & 38.89 & 89.21 \\ 
            \cmidrule(lr){2-16}
                                        & ReAct+DICE                  & 25.10 & 95.70 & 84.17 & 73.56 & 27.98 & 95.06 & 41.79 & 87.82 &  1.06 & 99.70 & 27.76 & 95.16 & 34.64 & 91.17 \\
                                        & + $\approach(\mu)$          & 16.40 & 96.93 & 86.87 & 72.82 & 31.87 & 94.55 & 31.65 & 92.18 &  1.41 & 99.56 & 33.77 & 94.34 & 33.66 & 91.73 \\
                                        & $\approach(\sigma)$	      & 17.69 & 96.80 & 84.49 & 74.41 & 30.06 & 94.83 & 33.94 & 91.27 &  0.97 & 99.69 & 30.73 & 94.84 & 32.98 & 91.97 \\
                                        & + $\approach(m)$	          & 17.50 & 96.83 & 84.73 & 74.34 & 30.34 & 94.81 & 33.76 & 91.33 &  0.99 & 99.68 & 31.03 & 94.80 & 33.06 & 91.97 \\
                                        & + $\approach(\texttt{md})$  & 98.37 & 49.23 & 92.03 & 60.76 & 98.36 & 45.84 & 99.40 & 25.45 & 11.36 & 97.48 & 96.42 & 51.22 & 82.66 & 55.00 \\    	      
                                        & + $\approach(e)$            & 23.39 & 96.01 & 83.50 & 74.62 & 28.59 & 95.03 & 40.64 & 89.05 &  1.06 & 99.73 & 28.83 & 95.13 & 34.34 & 91.60 \\ 
            \cmidrule(lr){2-16}
                                        & ASH	                      & 10.32 & 97.99 & 85.80 & 71.97 & 37.68 & 92.45 & 35.48 & 91.77 & 5.43 & 98.98 & 40.35 & 91.96 & 35.84 & 90.85 \\
                                        & + $\approach(\mu)$          &  9.00 & 98.12 & 87.63 & 70.70 & 38.40 & 92.42 & 27.70 & 93.91 & 5.38 & 98.94 & 42.25 & 91.72 & 35.06 & 90.97 \\
                                        & $\approach(\sigma)$	      &  8.82 & 98.16 & 86.33 & 71.57 & 37.41 & 92.38 & 29.27 & 93.53 & 5.43 & 98.95 & 40.76 & 91.78 & 34.67 & 91.06 \\
                                        & + $\approach(m)$	          & 10.63 & 97.85 & 87.59 & 70.89 & 38.11 & 92.67 & 26.45 & 94.16 & 4.40 & 99.09 & 42.01 & 92.02 & 34.87 & 91.11\\
                                        & + $\approach(\texttt{md})$  & 67.65 & 85.31 & 90.45 & 67.10 & 89.48 & 71.59 & 90.98 & 59.95 & 4.17 & 99.08 & 86.53 & 73.09 & 71.54 & 76.02 \\    	      
                                        & + $\approach(e)$            &  9.50 & 98.08 & 85.59 & 71.97 & 35.82 & 92.84 & 32.34 & 92.69 & 4.89 & 99.07 & 38.53 & 92.33 & 34.45 & 91.16 \\ 
            \cmidrule(lr){2-16}
                                        & \texttt{SCALE}                 & 16.26 & 97.05 & 78.54 & 76.97 & 43.56 & 91.21 & 45.60 & 87.23 & 3.23 & 99.30 & 42.69 & 91.02 & 38.31 & 90.46 \\
                                        & + $\approach(\mu)$             & 10.37 & 97.96 & 83.56 & 74.76 & 42.16 & 91.85 & 33.35 & 92.48 & 1.84 & 99.52 & 44.44 & 91.07 & 35.95 & 91.27 \\
                                        & + $\approach(\sigma)$          & 11.08 & 97.85 & 83.49 & 74.93 & 38.60 & 92.77 & 30.53 & 93.09 & 1.58 & 99.57 & 41.10 & 92.17 & 34.40 & 91.73 \\
                                        & + $\approach(m)$               & 10.79 & 97.90 & 83.16 & 75.44 & 37.69 & 93.03 & 29.08 & 93.56 & 1.89 & 99.52 & 40.53 & 92.39 & 33.86 & 91.97 \\
                                        & + $\approach(\texttt{md})$     & 82.79 & 78.26 & 87.59 & 70.09 & 92.87 & 64.90 & 95.57 & 50.40 & 5.80 & 98.79 & 89.44 & 68.02 & 75.68 & 71.75 \\
                                        & + $\approach(e)$               & 14.85 & 97.27 & 78.26 & 77.29 & 41.02 & 91.93 & 42.84 & 88.33 & 2.86 & 99.37 & 40.71 & 91.70 & 36.76 & 90.98 \\                            
            \bottomrule
            \end{tabular}
            }
            \caption{\textit{Detailed results of post-hoc methods combined with $\approach$ on six OOD benchmarks: SVHN, Places365, iSUN, Textures, LSUN-c, and LSUN-r using DenseNet-101 trained on CIFAR-100. $\boldsymbol{\uparrow}$ indicates higher is better; $\boldsymbol{\downarrow}$ indicates lower is better. The symbols denote the statistic used: $\mu$ (mean), $\sigma$ (std. deviation), $m$ (maximum), $\texttt{md}$ (median), and $e$ (Shannon entropy).}}
            \label{table: combination_DenseNet-101_CIFAR-100}
        \end{table*}
    \end{landscape}

\end{document}